%% file: main.tex
\PassOptionsToPackage{table}{xcolor}
\documentclass{article}

\usepackage{iclr2027_conference,times}
\iclrfinalcopy
\usepackage[utf8]{inputenc} 
\usepackage[T1]{fontenc}    
\usepackage{hyperref}       
\usepackage{url}            
\hypersetup{
    hidelinks,
    pdftitle={ZO-COSMO: Index-Free One-Hop Mixing for Decentralized Zeroth-Order Optimization},
    pdfsubject={Decentralized zeroth-order optimization with index-free communication},
    pdfkeywords={zeroth-order optimization, decentralized optimization, communication efficiency, sparse consensus},
    pdfauthor={Shengjun Zhang, Tingyi Liu, Heng Zhang, Dong Xie}
}
\usepackage{booktabs}       
\usepackage{amsfonts}       
\usepackage{nicefrac}       
\usepackage{microtype}      
\usepackage{graphicx}
\usepackage{placeins}
\graphicspath{{./figures/}}
\usepackage{algorithm}
\usepackage{algpseudocode}
\usepackage{subcaption}
\usepackage{adjustbox}
\usepackage{booktabs}
\usepackage{multirow}
\usepackage{makecell}
\usepackage{xcolor}
\usepackage{colortbl}
\usepackage[cmex10]{amsmath}
\usepackage{amssymb}
\usepackage{amsthm}

\usepackage{tikz}
\usepackage{ifthen}
\usetikzlibrary{calc, positioning, fit, arrows.meta, backgrounds, shapes.geometric, decorations.pathreplacing}

\DeclareMathOperator{\diag}{diag}

\newcommand{\Lift}{\mathsf{Lift}}
\newcommand{\Res}{\mathsf{Res}}
\newcommand{\corrnorm}[1]{\left\|#1\right\|}

\newtheorem{theorem}{Theorem}
\newtheorem{assumption}{Assumption}
\newtheorem{proposition}{Proposition}
\newtheorem{lemma}{Lemma}
\newtheorem{definition}{Definition}
\newtheorem{remark}{Remark}

\definecolor{rqgray}{gray}{0.94}
\definecolor{rqborder}{gray}{0.55}
\newcommand{\researchquestionbox}[1]{%
\begin{center}
\setlength{\fboxsep}{7pt}%
\fcolorbox{rqborder}{rqgray}{%
\begin{minipage}{0.88\linewidth}
\centering
\textbf{\emph{#1}}
\end{minipage}}%
\end{center}}

\def\R{\mathbb{R}}

\title{ZO-COSMO: Index-Free One-Hop Mixing\\for Decentralized Zeroth-Order Optimization}

\newcommand{\arxivauthorblock}{%
\begin{minipage}{0.96\textwidth}
\centering\normalfont
\textbf{Shengjun Zhang}\textsuperscript{1,*}\quad
\textbf{Tingyi Liu}\textsuperscript{2,*}\quad
\textbf{Heng Zhang}\textsuperscript{3,\ensuremath{\dagger}}\quad
\textbf{Dong Xie}\textsuperscript{4,\ensuremath{\dagger}}\par
\vspace{5pt}
{\small
\textsuperscript{1}School of Artificial Intelligence, Hubei University, Wuhan, China\par
\textsuperscript{2}School of Economics and Management, Wuhan University, Wuhan, China\par
\textsuperscript{3}School of Electrical Engineering, Shanghai Jiao Tong University, Shanghai, China\par
\textsuperscript{4}Baidu, Beijing, China\par
\vspace{4pt}
\href{mailto:sj.zhang@hubu.edu.cn}{\texttt{sj.zhang@hubu.edu.cn}}\quad
\href{mailto:tingyiL@whu.edu.cn}{\texttt{tingyiL@whu.edu.cn}}\par
\href{mailto:zhangheng_sjtu@sjtu.edu.cn}{\texttt{zhangheng\_sjtu@sjtu.edu.cn}}\quad
\href{mailto:xiedong04@baidu.com}{\texttt{xiedong04@baidu.com}}\par
\vspace{4pt}
\textsuperscript{*}Equal contribution.\qquad
\textsuperscript{\ensuremath{\dagger}}Corresponding authors.\par
}
\end{minipage}
}
\author{\arxivauthorblock}

\begin{document}

\maketitle
\lhead{Preprint}

\begin{abstract}
Sparse communication in decentralized zeroth-order learning requires compatible peer-state coordinates. We characterize this one-hop condition and develop \textsf{ZO-COSMO}, coupling two-query estimation with average-preserving masked consensus using $q$ values per active link. Global supports serve all-neighbor mixing; matching updates require agreement only within each pair. We derive a sharp contraction-per-scalar bound within the matching class and convergence guarantees for the core and sparse-momentum updates. At fixed matching, exact moment identities characterize how shared directions preserve gradient-heterogeneity cancellation and redistribute estimation error and disagreement. Mechanism experiments cover unequal curvatures, noise, and sparse momentum. Further tests span $64$ synthetic agents and eight logical Qwen LoRA workers. At matched payload budgets, Qwen2-7B QNLI gains $3.65$ accuracy points over explicit-index Rand-$k$; edge-local updates gain $3.42$ and $2.53$ points over all-neighbor mixing on eight-worker complete and ring graphs. A matched-first-step ablation gives a $3.92$-point momentum benefit. Seed-aware and same-matching controls distinguish encoding, scheduling, and query correlation.
\end{abstract}

\section{Introduction}\label{sec:intro}

In decentralized black-box learning, agents can query local function values but cannot centralize data or gradients. Let $x\in\mathbb{R}^d$ denote the shared decision vector. We study
\begin{equation}\label{nonconvex:eqn:xopt}
    \min_{x\in\mathbb{R}^d} F(x),
    \qquad
    F(x)=\frac{1}{N}\sum_{i=1}^N F_i(x)
\end{equation}
over a graph, where each $F_i$ is a smooth nonconvex black box. Zeroth-order methods replace gradients with finite differences \citep{ghadimi2013stochastic,nesterov2017random,shamir2017optimal}, but high dimension makes both estimation and communication expensive.

Sparse communication helps only if the support is also cheap. A coordinate-list encoding sends $qB_{\mathrm{val}}+q\lceil\log_2 d\rceil$ bits. At billion-parameter scale, each index is about as wide as a single-precision value, nearly doubling the value-only payload.

We ask:

\researchquestionbox{Can a decentralized zeroth-order method communicate sparse updates without communicating sparse indices?}

Public seeds already reconstruct ZO updates at a server \citep{li2025decomfl,cai2026feedsign,ran2026meerkat} or through graph-wide dissemination \citep{kim2026seedflood}. One-hop mixing also requires the corresponding peer-state values. A message carrying values on $\{2,3\}$ leaves the value at coordinate $1$ unavailable for an average on $\{1,2\}$, even when its support is public (Figure \ref{fig:shared_randomness_n_nodes}). Theorem \ref{thm:one_hop_compatibility} gives the necessary and sufficient support condition for mixing arbitrary current peer states without stored peer histories.

ZO-COSMO meets this condition by coordinating supports with the mixing graph. A global support permits masked Metropolis mixing; a matching permits separate supports on disjoint edges and one $q$-value message per active node. Uniform perfect matchings on a complete graph attain the best contraction per scalar within this operator class (Theorem \ref{thm:mixing_per_bit}). The query and mixing steps share a mask, so their dependence also enters the optimization analysis. We show how a shared direction preserves cancellation between local gradients and how independent directions redistribute estimation error and disagreement.

\subsection{Contributions}

\begin{itemize}
    \item \textbf{Exact index-free mixing.} We characterize the supports needed to mix arbitrary peer states from one-hop coordinate messages, with a sharp missing-coordinate error bound. This leads to global and matching-based protocols with $q$ values per active link.

    \item \textbf{Mixing and optimization mechanisms.} We derive the optimal contraction per scalar in the matching class and separate average-estimation error from disagreement injection. Pair invariance yields a heterogeneity-sensitive core bound, and exact conditional moments account for oracle noise and stored momentum; separate results establish bit-budget and sparse-momentum convergence.

    \item \textbf{Mechanism and model-scale evidence.} Fixed-matching experiments test correlation under unequal curvatures, noise, and momentum. Qwen LoRA shows fixed-budget gains over explicit-index and all-neighbor baselines and a matched-step momentum benefit. Seed-aware and same-matching controls separate these effects; synthetic tests include adaptive Top-$k$ compression and $64$ agents.
\end{itemize}

\section{Related Work}
\label{sec:related}

\paragraph{Zeroth-order and decentralized optimization.}
ZO methods make optimization possible when gradients are unavailable, using one- or two-point function queries \citep{conn2009introduction,flaxman2005online,ghadimi2013stochastic,nesterov2017random,shamir2017optimal,larson2019derivative}. In high dimension, much of the work has focused on better directions through sparsity, subspaces, momentum, and variance reduction \citep{wang2018stochastic,golovin2019gradientless,liu2018zeroth,liu2018stochastic,cai2020zeroth,kozak2020stochastic,chen2019zo}. Decentralized methods must also keep locally updated states in agreement over a graph \citep{yuan2014randomized,yuan2015gradient,sahu2018distributed,hajinezhad2019zone,tang2019distributed,zhang2021zo,veprikov2024zero}. We retain two-point estimation and study which sparse messages allow that mixing step to be executed exactly.

\paragraph{Compressed decentralized learning.}
Randomized gossip relates pairwise averaging to graph spectra \citep{boyd2006randomizedgossip}; compressed learning uses quantization, sparsification, sign compression, and error compensation \citep{alistarh2017qsgd,lin2018deep,stich2018sparsified,lan2018communication,tang2018communication,Koloskova2019decentralized,koloskova2020decentralized,pmlr-v80-bernstein18a,basu2019qsparse}. Coordinate lists add a representation cost \citep{lin2018deep,shi2020communication}. PermK analyzes how correlated compressors change aggregate estimation error \citep{szlendak2022permutation}. Our query direction also selects the peer-state coordinates that are mixed. We therefore study both aggregate error and the disagreement injected by that same update. Complementary supports may reduce aggregate variance without supplying the coordinates required for exact current-state mixing.

DeComFL and MEERKAT reconstruct seed--scalar updates at a server, while FeedSign aggregates seed--sign votes \citep{li2025decomfl,ran2026meerkat,cai2026feedsign}. SeedFlood disseminates updates across the graph and batches their application in a shared subspace \citep{kim2026seedflood}. These methods exploit compact update representations. ZO-COSMO sends $q$ current-state values to complete one neighbor-mixing stage; Appendix \ref{appendix:public_randomness_operators} compares the payload and dissemination requirements. Recent decentralized ZO methods combine estimators with general compressors \citep{hua2026compressedzo,wang2025compressedzo,chen2025compressedmomentumzo}. We include Com-DSZO with adaptive Top-$k$ and a periodic-refresh comparator motivated by MARINA \citep{gorbunov2021marina}.

\paragraph{Forward-only model adaptation.}
Language-model adaptation can replace backpropagation with loss queries \citep{malladi2023mezo,zhang2024revisiting,liu2024sparsemezo,yu2024subzero}. Sparse and low-dimensional directions reduce the parameters touched by each query, but peers must still agree on how received values update their local states. Our Qwen experiments examine this communication step with separate worker states and data shards.

\section{Algorithm}\label{sec:algorithm}

Algorithm~\ref{alg:zo_cosmo} reconstructs a public support, takes two sparse queries, and mixes selected coordinates. Messages contain $q$ ordered values; unsent coordinates retain their local values. The implementation derives $R_t=\mathcal{H}(S_{\mathrm{master}}\mathbin\Vert t)$ from synchronized seeds and round counters. Decoding is pathwise exact; stochastic rates use fresh public coins (Appendix \ref{appendix:public_seed_protocol}).

Let $X_t$ and $Y_t$ stack the node states and pre-mixing updates as rows. The network step has the compact form
\begin{equation}\label{eq:matrix_masked_consensus}
    X_{t+1}=Y_t(I-P_t)+WY_tP_t.
\end{equation}
Theorem \ref{thm:one_hop_compatibility} determines which coordinates messages must cover. ZO-COSMO uses this support for both queries and mixing, coupling local progress to the communicated coordinates.

A global support is sufficient but not necessary. In Algorithm \ref{alg:edge_local_cosmo}, a public matching assigns each edge its own support. Endpoints query and average on this support, using one $q$-value message per active node; disjoint edges may use different coordinates. Theorem \ref{thm:edge_local_contraction} gives the contraction.

\begin{figure}[H]
\centering
\resizebox{0.96\linewidth}{!}{%
\input{figures/one_hop_compatibility_v20.tex}
}
\caption{Both protocols know the supports. Only the right messages suffice for exact mixing on $K=\{1,2\}$.}
\label{fig:shared_randomness_n_nodes}
\end{figure}

\begin{algorithm}[H]
\caption{ZO-COSMO Core Update}
\label{alg:zo_cosmo}
\small
\begin{algorithmic}[1]
\State \textbf{Input:} $W,\eta,\mu,q$ and public random tape $\{R_t\}_{t<T}$; initialize $x_{i,0}$.
\For{$t=0,\ldots,T-1$}
    \State From $R_t$, reconstruct $\mathcal{S}_t$ and signs $\epsilon_t$; set $P_t=\diag(\mathbf{1}_{\mathcal{S}_t})$ and $u_t=P_t\epsilon_t$.
    \ForAll{agents $i$ in parallel}
        \State $\hat g_{i,t}=\frac{d}{q}\frac{f_i(x_{i,t}+\mu u_t;\xi_{i,t})-f_i(x_{i,t}-\mu u_t;\xi_{i,t})}{2\mu}u_t$.
        \State $y_{i,t}=x_{i,t}-\eta\hat g_{i,t}$.
        \State Broadcast ordered values $\{y_{i,t}^{(j)}:j\in\mathcal{S}_t\}$, with no coordinate indices.
    \EndFor
    \ForAll{agents $i$ in parallel}
        \State $x_{i,t+1}=(I-P_t)y_{i,t}+P_t\sum_{\ell=1}^N w_{i\ell}y_{\ell,t}$.
    \EndFor
\EndFor
\State \textbf{Output:} $x_{i,T}$ or the network average $\bar{x}_T$.
\end{algorithmic}
\end{algorithm}

The sparse-momentum variant changes only the local direction in Algorithm \ref{alg:zo_cosmo}. It replaces $\hat g_{i,t}$ by $P_tm_{i,t+1}$, where
\begin{equation}
    m_{i,t+1}=(I-P_t)m_{i,t}+P_t\bigl(\beta m_{i,t}+(1-\beta)\hat g_{i,t}\bigr).
    \label{eq:optional_subspace_momentum}
\end{equation}
The public support, masked mixing rule, and transmitted message are unchanged. Theorem \ref{thm:convergence} gives the sharper core rate; Appendix Theorem \ref{thm:momentum_convergence} proves convergence with sparse momentum for global and perfect-matching updates. Experiments report $\beta$ explicitly.

\section{Theoretical Analysis}
\label{sec:theory}

The analysis proceeds from decodability to mixing efficiency and then optimization. Exactness refers to coordinate recovery and the ideal arithmetic update; $B_{\mathrm{val}}$ accounts for a fixed-width scalar representation, without a separate quantization-error guarantee.

\subsection{Notation}

Let $F_i(x)=\mathbb{E}_{\xi_i}[f_i(x;\xi_i)]$, $F=N^{-1}\sum_iF_i$, and $\bar{x}_t=N^{-1}\sum_i x_{i,t}$. Before round $t$, an independent public coin $R_t$ draws a uniform $q$-subset $\mathcal{S}_t$ and independent Rademacher signs. Define $P_t=\diag(\mathbf{1}_{\mathcal{S}_t})$ and $u_t=P_t\epsilon_t$. Then $\|u_t\|^2=q$ and
\begin{equation}\label{eq:uut}
    \mathbb{E}[u_tu_t^\top]=(q/d)I_d.
\end{equation}
The local two-query estimator is
\begin{equation}\label{eq:theory_estimator}
    \hat{g}_{i,t}=\frac{d}{q}\frac{f_i(x_{i,t}+\mu u_t;\xi_{i,t})-f_i(x_{i,t}-\mu u_t;\xi_{i,t})}{2\mu}u_t .
\end{equation}
For an ordered support $S=\{s_1<\cdots<s_q\}$, define
\begin{equation}\label{eq:public_support_lift}
    \Res_S z=(z_{s_1},\ldots,z_{s_q}),
    \qquad
    \Lift_S(v)=\sum_{k=1}^q v_ke_{s_k}.
\end{equation}
Then $\Res_S\Lift_S=I_q$, $\Lift_S\Res_S=P_S$, and $\|\Lift_S(v)\|=\|v\|$. In the protocol, a sender transmits $v=\Res_{S_t}z$ and the receiver reconstructs $\Lift_{S_t}(v)$ from the public support. 

\subsection{Assumptions}

\begin{assumption}
\label{assum:smooth_lower}
Each $F_i$ is $L$-smooth, and $F(x)\ge F^*>-\infty$ for all $x$.
\end{assumption}

\begin{assumption}
\label{assum:mixing}
 The mixing matrix $W$ is nonnegative, symmetric, and doubly stochastic, with $\rho=\|W-N^{-1}\mathbf{1}\mathbf{1}^\top\|_2<1$.
\end{assumption}

\begin{assumption}
\label{assum:oracle}
For each node $i$, $\mathbb{E}_{\xi_i}[\nabla f_i(x;\xi_i)]=\nabla F_i(x)$, $\mathbb{E}_{\xi_i}\|\nabla f_i(x;\xi_i)-\nabla F_i(x)\|^2\le\sigma^2$, and along the iterates $\mathbb{E}_{\xi_i}\|\nabla f_i(x;\xi_i)\|^2\le G^2+\sigma^2$. At every round, the oracle samples are fresh and conditionally independent across nodes given the iterates and public direction.
\end{assumption}

\begin{assumption}
\label{assum:third}
For every node $i$, point $x$, oracle sample $\xi$, and sparse direction $u$ used by the algorithm,
$|f_i(x+\mu u;\xi)-f_i(x-\mu u;\xi)-2\mu\nabla f_i(x;\xi)^\top u|\le (L_3/3)\mu^3\|u\|^3$.
\end{assumption}

For later use, define the finite-difference bias scale and the per-node estimator moment bound as
\begin{equation}\label{eq:bmu_mnode_def}
    B_\mu=\frac{L_3}{6}\mu^2dq,
    \qquad
    M_{\mathrm{node}}^2=2d(G^2+\sigma^2)+2B_\mu^2 .
\end{equation}

\subsection{Communication Accounting}

\begin{definition}
\label{def:sparse_message_model}
 A one-hop $q$-sparse current-state message from $j$ to $i$ contains ordered values $\Res_{S_{ji}}(y_j)$. The decoder knows its own state, $W$, and the public transcript, but receives no other part of $y_j$ and does not replay earlier updates. In a private-support protocol the sender encodes $S_{ji}$; in a public-coin protocol both endpoints reconstruct it before communication.
\end{definition}

\begin{theorem}
\label{thm:one_hop_compatibility}
Fix receiver $i$, target support $K\subset[d]$, and message supports $S_{ji}$ independently of the peer states, with $P_K=\diag(\mathbf1_K)$. From one ordered message $\Res_{S_{ji}}(y_j)$ per remote neighbor, a decoder can recover
\begin{equation}\label{eq:one_hop_target}
    P_K\sum_{j=1}^N w_{ij}y_j
\end{equation}
for every collection of peer states if and only if $K\subseteq S_{ji}$ for every $j\ne i$ with $w_{ij}>0$. If $|K|=|S_{ji}|=q$, this requires $S_{ji}=K$. Moreover, if $k\in K\setminus S_{ji}$ and $\|y_j\|_\infty\le R$, every decoder has worst-case coordinate error at least $w_{ij}R$.
\end{theorem}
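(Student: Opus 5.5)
The plan is to prove sufficiency by an explicit decoder, and necessity by a two-state indistinguishability argument that also gives the quantitative error bound.

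\emph{Sufficiency.} Assume $K\subseteq S_{ji}$ for every $j\ne i$ with $w_{ij}>0$. The receiver reconstructs $S_{ji}$ from the public transcript, which is allowed by Definition~\ref{def:sparse_message_model}. It then forms $\Lift_{S_{ji}}(\Res_{S_{ji}}y_j)=P_{S_{ji}}y_j$. Since $K\subseteq S_{ji}$, we have $P_KP_{S_{ji}}=P_K$, so $P_KP_{S_{ji}}y_j=P_Ky_j$ is recovered exactly. The decoder knows its own $y_i$ and $W$, and neighbors with $w_{ij}=0$ contribute nothing. Summing gives $P_K\sum_j w_{ij}y_j=w_{ii}P_Ky_i+\sum_{j\ne i,\,w_{ij}>0}w_{ij}P_Ky_j$, which is \eqref{eq:one_hop_target}.

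\emph{Necessity and the error bound.} Suppose some $j\ne i$ has $w_{ij}>0$ and some $k\in K\setminus S_{ji}$. The supports are fixed independently of the states, so the following pair of configurations is admissible. Fix all states except $y_j$, and take two choices $y_j^{\pm}$ that agree on every coordinate except $k$, with $y_j^{\pm,(k)}=\pm R$ and $\|y_j^{\pm}\|_\infty\le R$ (for instance, set the other coordinates to zero).

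Every message $\Res_{S_{\ell i}}(y_\ell)$ is identical under the two choices, because $k\notin S_{ji}$ and the other peers are unchanged. The decoder's own state and the public transcript are also identical. Hence the decoder's output $D$ is the same in both cases. Its $k$-th coordinate, however, should equal $c\pm w_{ij}R$ for a common constant $c$.

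Suppose the true targets are $a_\pm$ with $|a_+-a_-|=2w_{ij}R$. By the triangle inequality, $\max_\pm|D^{(k)}-a_\pm|\ge w_{ij}R$. So the worst-case coordinate error is at least $w_{ij}R>0$, and exact recovery for every collection of peer states fails. This proves both the "only if" direction and the quantitative bound. The bound holds for any decoder, including randomized ones, by applying the same argument conditionally on the decoder's internal coins.

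\emph{Equal cardinalities.} If $|K|=|S_{ji}|=q$ and $K\subseteq S_{ji}$, the two finite sets are equal, so $S_{ji}=K$.

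The main point requiring care is to make "the decoder sees identical inputs" precise. The decoder may use only the ordered messages, its own state, $W$, and the public transcript, with no replay of earlier updates. The supports must not depend on the states, so that the perturbation of $y_j$ at coordinate $k$ leaves every input unchanged. I expect this formalization via Definition~\ref{def:sparse_message_model} to be the only subtle step; the remaining steps are routine.
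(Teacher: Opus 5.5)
Your proof is correct and follows the paper's argument. The paper uses the same three steps: lifting each ordered message and reading off $K$ for sufficiency, a $\pm R$ perturbation of $y_j^{(k)}$ that leaves every decoder input unchanged for necessity and the $w_{ij}R$ bound, and containment plus equal cardinality for $S_{ji}=K$. For randomized decoders, the worst case is a maximum over instances of the expected error, so state the bound in the summed form $|D^{(k)}-a_+|+|D^{(k)}-a_-|\ge 2w_{ij}R$ before taking expectations, as the paper does; this is the precise version of your ``condition on the coins'' remark.
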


\begin{proof}
See Appendix \ref{appendix:support_compatibility}.
\end{proof}

A zero-error support field costs at least $H(S_t\mid R_t)$ bits in expectation (Appendix Proposition \ref{prop:conditional_support_cost}). Public supports have zero conditional entropy; a private uniform $q$-subset costs $\log_2\binom dq$ bits. Reusing a data-dependent support over $K$ rounds amortizes its description to $H(S\mid R)/K$, but fixes those coordinates throughout the block.

Consequently, with $B_{\mathrm{val}}$ value bits, each directed ZO-COSMO link transmits $qB_{\mathrm{val}}$ bits. An arbitrary worst-case support, or a uniform private support in expectation, needs $qB_{\mathrm{val}}+\log_2\binom{d}{q}$ bits, while coordinate-list sparse encoding sends
\begin{equation}\label{eq:coordinate_list_cost}
    q(B_{\mathrm{val}}+\lceil\log_2 d\rceil).
\end{equation}
The binomial bounds in Appendix \ref{appendix:conditional_support_cost_proof} quantify the private-support cost. Reusing a full-model Qwen mask for ten rounds reduces coordinate-list overhead from $2.03\times$ to $1.10\times$, with the same coordinates used throughout the block. Public node-specific seeds also regenerate independent random supports without recurring metadata; we include this encoding control in the experiments.

\subsection{Edge-Local Mixing}

Theorem \ref{thm:one_hop_compatibility} applies locally: supports must agree across an active mixing edge, but disjoint edges need not share a support. Let $M_t$ be a public matching, $L_e=(e_i-e_j)(e_i-e_j)^\top$ for $e=\{i,j\}$, and let each $P_{e,t}$ be an independent uniform rank-$q$ coordinate projector. The support-labeled mixing step is
\begin{equation}\label{eq:edge_local_operator}
    X_{t+1}=Y_t-\frac12\sum_{e\in M_t}L_eY_tP_{e,t}.
\end{equation}

\begin{theorem}
\label{thm:edge_local_contraction}
Let $\Pi=I-N^{-1}\mathbf1\mathbf1^\top$ and $\bar L=\mathbb{E}[\sum_{e\in M_t}L_e]$. For any fixed input $Y_t$ independent of the current matching and supports, \eqref{eq:edge_local_operator} preserves the network average and satisfies
\begin{align}
\mathbb{E}_{P}\|\Pi X_{t+1}\|_F^2
&=\|\Pi Y_t\|_F^2-\frac{q}{2d}\sum_{\{i,j\}\in M_t}\|y_{i,t}-y_{j,t}\|^2, \label{eq:edge_exact_contraction}\\
\mathbb{E}_{M,P}\|\Pi X_{t+1}\|_F^2
&\le\left(1-\frac{q\lambda_2(\bar L)}{2d}\right)\|\Pi Y_t\|_F^2. \label{eq:edge_spectral_contraction}
\end{align}
The identities require uniform marginal supports, not independence across disjoint edges. Each active endpoint sends $qB_{\mathrm{val}}$ bits and no indices.
\end{theorem}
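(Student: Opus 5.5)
The plan is to work column by column. Write $X_{t+1}=Y_t-\frac12\sum_{e\in M_t}L_eY_tP_{e,t}$ and look at coordinate $k$. The column $x^{(k)}=X_{t+1}e_k$ equals $\bigl(I-\frac12\sum_{e\in M_t:\,k\in S_{e}}L_e\bigr)y^{(k)}$. Because $M_t$ is a matching, the $L_e$ act on disjoint pairs of rows. So on coordinate $k$, each active edge whose support contains $k$ replaces both endpoint values by their average, and every other entry is left unchanged.

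\textbf{Average preservation.} This follows from $\mathbf1^\top L_e=0$: we get $\mathbf1^\top X_{t+1}=\mathbf1^\top Y_t$ pathwise. As a consequence, $\Pi X_{t+1}=X_{t+1}-\mathbf1\bar y^\top$, where $\bar y$ is the unchanged average.

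\textbf{Exact identity \eqref{eq:edge_exact_contraction}.} Since the average is fixed, $\|\Pi X_{t+1}\|_F^2=\|X_{t+1}\|_F^2-N\|\bar y\|^2$, and the same decomposition holds for $Y_t$. It therefore suffices to compute $\|Y_t\|_F^2-\|X_{t+1}\|_F^2$. Take an edge $e=\{i,j\}$ and a coordinate $k\in S_e$. Replacing $a=y_i^{(k)}$ and $b=y_j^{(k)}$ by their mean lowers $a^2+b^2$ by exactly $(a-b)^2/2$. Edges are disjoint, so these decrements add up over edges and coordinates, giving
\[
\|Y_t\|_F^2-\|X_{t+1}\|_F^2=\frac12\sum_{e=\{i,j\}\in M_t}\|P_{e,t}(y_i-y_j)\|^2 .
\]
Taking the expectation over the supports with $M_t$ fixed, the uniform-marginal property gives $\mathbb{E}[P_{e,t}]=(q/d)I$, hence $\mathbb{E}\|P_{e,t}v\|^2=(q/d)\|v\|^2$. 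This yields the claimed identity. The step uses only linearity of expectation over edges, so it needs the marginal law of each $P_{e,t}$ and nothing about the joint law across disjoint edges. That is exactly the remark in the statement. The assumption that $Y_t$ is independent of $(M_t,P)$ is what lets us treat the $y$'s as constants here.

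\textbf{Spectral bound \eqref{eq:edge_spectral_contraction}.} Write $\sum_{e\in M_t}\|y_i-y_j\|^2=\operatorname{tr}\bigl(Y_t^\top(\sum_{e\in M_t}L_e)Y_t\bigr)$ and average over $M_t$, which gives $\operatorname{tr}(Y_t^\top\bar L Y_t)$. Since $\bar L$ is a symmetric PSD combination of Laplacians with $\bar L\mathbf1=0$, we have $\operatorname{tr}(Y_t^\top\bar LY_t)=\operatorname{tr}((\Pi Y_t)^\top\bar L\,\Pi Y_t)$. Each column of $\Pi Y_t$ is orthogonal to $\mathbf1$, so this trace is at least $\lambda_2(\bar L)\|\Pi Y_t\|_F^2$ by Courant--Fischer. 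Substituting into the averaged exact identity gives the result.

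\textbf{Bit count.} By Theorem \ref{thm:one_hop_compatibility}, applied per edge with $K=S_{e}$, recovering the target only requires $S_{ji}=S_e$. Each active endpoint sends $\Res_{S_{e,t}}(y_i)$, i.e.\ $q$ values, with the support reconstructed from public coins. Hence the cost is $qB_{\mathrm{val}}$ bits and no indices.

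The only delicate point is the bookkeeping in the exact identity. One has to make sure that disjointness of the matching makes the per-coordinate averaging decrements additive without cross terms. One also has to check that the $\Pi$ cross term vanishes, which it does because the average is preserved pathwise, not merely in expectation. I expect no real obstacle beyond this.
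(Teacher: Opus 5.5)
Your proof is correct and follows the paper's argument. The paper obtains the same energy drop by Pythagoras for the Frobenius-orthogonal projector $Q=\sum_{e\in M_t}\frac12L_e(\cdot)P_{e,t}$ (using $Q\Pi Y_t=QY_t$), whereas you compute it coordinatewise from the scalar decrement $(a-b)^2/2$ and $\|\Pi Z\|_F^2=\|Z\|_F^2-N\|\bar z\|^2$ under pathwise average preservation; the marginal-only step $\mathbb{E}\|P_{e,t}v\|^2=(q/d)\|v\|^2$ and the trace/variational $\lambda_2(\bar L)$ bound match the paper exactly.
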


\begin{proof}
See Appendix \ref{appendix:edge_local_proof}.
\end{proof}

The optimization update $Y_t$ shares the current query mask with mixing. Appendix \ref{appendix:momentum_convergence} retains this dependence by applying contraction to the past state and controlling the current update separately.

To compare matching schedules independently of a particular state, let
\begin{equation}\label{eq:coordinate_labeled_laplacian}
    \bar L_k=\mathbb{E}\!\left[\sum_{e\in M_t}\mathbf{1}\{k\in S_{e,t}\}L_e\right],
    \qquad
    C_{\mathrm{dir}}=2\mathbb{E}\!\left[\sum_{e\in M_t}|S_{e,t}|\right],
\end{equation}
where $C_{\mathrm{dir}}$ is the expected number of directed scalar values sent in one round.

\begin{theorem}
\label{thm:mixing_per_bit}
For any matching-based exact one-hop operator of the form \eqref{eq:edge_local_operator}, allowing arbitrary state-independent public coordinate supports, the largest $\gamma$ such that
\begin{equation}\label{eq:uniform_matching_contraction}
    \mathbb{E}\|\Pi X_{t+1}\|_F^2\le(1-\gamma)\|\Pi Y_t\|_F^2
\end{equation}
for every fixed input $Y_t$ independent of the sampled operator is
\begin{equation}\label{eq:mixing_per_bit_converse}
    \gamma_\star=\frac12\min_{k\in[d]}\lambda_2(\bar L_k)
    \le \frac{C_{\mathrm{dir}}}{2d(N-1)}.
\end{equation}
With $B_{\mathrm{val}}$-bit scalars, $B_{\mathrm{net}}\ge2B_{\mathrm{val}}d(N-1)\gamma_\star$. For even $N$, uniform rank-$q$ supports on uniformly random perfect matchings of $K_N$ attain equality with $C_{\mathrm{dir}}=Nq$, using either a common support or independent edge supports.
\end{theorem}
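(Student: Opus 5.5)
Write $D=\Pi Y_t$, with columns $D_{:,k}$ for $k\in[d]$. The plan has three steps: an exact per-coordinate identity, a trace argument for the budget bound, and a direct computation for perfect matchings.

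\emph{Step 1: coordinatewise identity.} The operator acts coordinate by coordinate. Column $k$ of $X_{t+1}$ equals $\bigl(I-\tfrac12\sum_{e\in M_t}\mathbf 1\{k\in S_{e,t}\}L_e\bigr)Y_{:,k}$. The matrix $A_k=\sum_{e\in M_t,\,k\in S_{e,t}}L_e$ is a sum of Laplacians of disjoint edges, so $A_k\mathbf1=0$ and $\Pi$ commutes with it. Since $L_e^2=2L_e$ and the edges are disjoint, $(I-\tfrac12A_k)^2=I-\tfrac12A_k$. This gives
\[
\mathbb{E}\|\Pi X_{t+1}\|_F^2=\|D\|_F^2-\tfrac12\sum_k D_{:,k}^\top \bar L_k D_{:,k}.
\]
This is the same computation as in Theorem \ref{thm:edge_local_contraction}, now with coordinate-labeled Laplacians.

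\emph{Step 2: $\gamma_\star$ and the budget bound.} The contraction $1-\gamma$ must hold for all $D$ with zero column sums. Putting all mass in the column $k$ attaining $\min_k\lambda_2(\bar L_k)$, along the Fiedler vector, shows $\gamma\le\tfrac12\min_k\lambda_2(\bar L_k)$. Conversely, applying the Rayleigh bound to each column separately gives $\gamma\ge\tfrac12\min_k\lambda_2(\bar L_k)$, so $\gamma_\star=\tfrac12\min_k\lambda_2(\bar L_k)$.

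For the upper bound, $\bar L_k$ is PSD with $\bar L_k\mathbf 1=0$. Hence $(N-1)\lambda_2(\bar L_k)\le \operatorname{tr}\bar L_k=2\,\mathbb{E}\sum_{e}\mathbf1\{k\in S_{e,t}\}$. Summing over $k$ gives
\[
d(N-1)\min_k\lambda_2(\bar L_k)\le\sum_k\operatorname{tr}\bar L_k=2\,\mathbb{E}\sum_e|S_{e,t}|=C_{\mathrm{dir}},
\]
which yields $\gamma_\star\le C_{\mathrm{dir}}/(2d(N-1))$. The bit statement follows by multiplying by $B_{\mathrm{val}}$, with $B_{\mathrm{net}}=B_{\mathrm{val}}C_{\mathrm{dir}}$ the expected bits per round.

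\emph{Step 3: attainment for perfect matchings.} Take $N$ even, $M_t$ a uniform perfect matching of $K_N$, and uniform rank-$q$ supports. Each edge $\{i,j\}$ appears with probability $1/(N-1)$, and each coordinate lies in the support with probability $q/d$. This holds by marginal uniformity alone, whether the support is common to all edges or independent across edges. By symmetry, $\bar L_k=\frac{q}{d(N-1)}L_{K_N}$, where $L_{K_N}=NI-\mathbf1\mathbf1^\top$ has $\lambda_2=N$. Then $\gamma_\star=\tfrac12\cdot\frac{qN}{d(N-1)}$, while $C_{\mathrm{dir}}=2\cdot\tfrac N2\cdot q=Nq$. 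So $\gamma_\star=C_{\mathrm{dir}}/(2d(N-1))$, and equality holds.

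The main point needing care is Step 1's idempotence. It relies on the matching property, since non-disjoint edges would break $(I-\tfrac12A_k)^2=I-\tfrac12A_k$. A second point is noting that supports may be arbitrary and correlated, because the final formula uses only marginals. Everything else is trace and eigenvalue bookkeeping.
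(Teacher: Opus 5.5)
Your proposal is correct and follows essentially the same route as the paper. Both arguments use the coordinate-labeled energy identity $\mathbb{E}\|\Pi X_{t+1}\|_F^2=\|\Pi Y_t\|_F^2-\tfrac12\sum_k z_k^\top\bar L_k z_k$, sharpness from a single eigenvector column, the trace budget $\lambda_2(\bar L_k)\le\operatorname{tr}(\bar L_k)/(N-1)$ with $\sum_k\operatorname{tr}\bar L_k=C_{\mathrm{dir}}$, and the direct computation $\bar L_k=qN\Pi/[d(N-1)]$ for uniform perfect matchings of $K_N$. The one difference is cosmetic: you check idempotence of each column operator $I-\tfrac12A_k$ directly, whereas the paper reuses the Frobenius orthogonal-projector (Pythagoras) identity from the proof of Theorem~\ref{thm:edge_local_contraction}.
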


\begin{proof}
See Appendix \ref{appendix:mixing_per_bit_proof}.
\end{proof}

Equality requires the same isotropic expected mixing on every coordinate (Appendix \ref{appendix:mixing_per_bit_proof}). Uniform perfect matchings on complete graphs satisfy this condition with either common or independent edge supports. Both attain the pure-mixing bound; Section \ref{sec:correlation_mechanism} analyzes their different query statistics.

Appendix Proposition \ref{prop:estimator} gives the estimator bias $B_\mu$ and moment bound $M_{\mathrm{node}}^2$ used below; a coordinate is refreshed after $d/q$ rounds in expectation.

\subsection{Convergence Bound}

Let
\begin{equation}
    \mathcal{E}_t
    =
    \frac{1}{N}\sum_{i=1}^N\mathbb{E}\|x_{i,t}-\bar{x}_t\|^2
    \label{eq:consensus_error_def}
\end{equation}
be the parameter consensus error.

We use the conservative stability condition
\begin{equation}\label{eq:stepsize_condition}
    0<\eta\le \min\{L^{-1},\,c(Ld)^{-1}\},
\end{equation}
for a universal constant $c>0$, and the canonical smoothing scale
\begin{equation}\label{eq:canonical_mu}
    \mu=\Theta((dq)^{-1/2}T^{-1/4}),
\end{equation}
which balances the displayed finite-difference bias terms in the proof.

\begin{theorem}
\label{thm:convergence}
\label{thm:budget_convergence}
Let $\Delta_0=F(\bar{x}_0)-F^*$ and suppose Assumptions \ref{assum:smooth_lower}--\ref{assum:third} hold. Run Algorithm \ref{alg:zo_cosmo} from a common initialization with $\mu=\Theta((dq)^{-1/2}T^{-1/4})$ and $\eta=c_\eta T^{-1/2}$. For any fixed $c_\eta>0$ and every horizon $T$ large enough that \eqref{eq:stepsize_condition} holds,
\begin{equation}\label{eq:simplified_bigo_rate}
    \frac1T\sum_{t<T}\mathbb{E}\|\nabla F(\bar{x}_t)\|^2
    \le
    \mathcal{O}\!\left(
    \frac{\Delta_0}{\sqrt{T}}
    +
    \frac{d\sigma^2}{N\sqrt{T}}
    +
    \frac1T
    +
    \frac{d^2(G^2+\sigma^2)}{q(1-\rho^2)^2T}
    \right).
\end{equation}
\end{theorem}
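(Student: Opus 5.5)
The proof is a standard decentralized nonconvex argument in two coupled parts: a descent inequality for the network average $\bar x_t$, and a recursion for the consensus error $\mathcal{E}_t$ that uses the masked contraction. It relies on three facts about the masked mixing step \eqref{eq:matrix_masked_consensus}.

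\textbf{Step 1: average dynamics.} Because $W$ is doubly stochastic, $\mathbf1^\top(Y_t(I-P_t)+WY_tP_t)=\mathbf1^\top Y_t$. Hence $\bar x_{t+1}=\bar x_t-\eta\,\bar g_t$ with $\bar g_t=N^{-1}\sum_i\hat g_{i,t}$.

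\textbf{Step 2: consensus contraction.} Since $\Pi$ commutes with $W$, we have $\Pi X_{t+1}=\Pi Y_t(I-P_t)+W\Pi Y_tP_t$. If $Y$ were independent of $P_t$, then
\[
\mathbb{E}_P\|\Pi X_{t+1}\|_F^2=(1-q/d)\|\Pi Y\|_F^2+(q/d)\|W\Pi Y\|_F^2\le\bigl(1-(q/d)(1-\rho^2)\bigr)\|\Pi Y\|_F^2 .
\]
This rests on the fact that the coordinates split orthogonally into masked and unmasked blocks.

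\textbf{Step 3: estimator moments.} From Appendix Proposition \ref{prop:estimator}, Assumption \ref{assum:third} and \eqref{eq:uut} give $\mathbb{E}[\hat g_{i,t}\mid x]=\nabla F_i(x_{i,t})+b_{i,t}$ with $\|b_{i,t}\|\le B_\mu$, and $\mathbb{E}\|\hat g_{i,t}\|^2\le (d/q)M_{\mathrm{node}}^2$ up to constants.

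\textbf{Descent for the average.} Apply $L$-smoothness of $F$ to $\bar x_{t+1}=\bar x_t-\eta\bar g_t$. The inner product term is split as
\[
\mathbb{E}\langle\nabla F(\bar x_t),\bar g_t\rangle\ge\tfrac12\|\nabla F(\bar x_t)\|^2-L^2\mathcal{E}_t-B_\mu^2,
\]
using $\|\nabla F(\bar x_t)-N^{-1}\sum_i\nabla F_i(x_{i,t})\|^2\le L^2\mathcal{E}_t$. For the quadratic term, decompose $\bar g_t$ into three parts:
\begin{itemize}
\item the direction-driven part $(d/q)u_tu_t^\top\overline{\nabla F}$, whose second moment is at most $\mathcal{O}(d\|\overline{\nabla F}\|^2)$;
\item the oracle-noise part, which is averaged over $N$ conditionally independent nodes (Assumption \ref{assum:oracle}) and so contributes $\mathcal{O}(d\sigma^2/N)$;
\item the bias part, contributing $\mathcal{O}(B_\mu^2)$.
\end{itemize}
The condition $\eta\le c/(Ld)$ absorbs $L\eta^2 d\|\nabla F\|^2$ into the descent term. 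Summing over $t$ gives
\[
\tfrac1T\sum_t\mathbb{E}\|\nabla F(\bar x_t)\|^2\lesssim \frac{\Delta_0}{\eta T}+L\eta\frac{d\sigma^2}{N}+L^2\frac1T\sum_t\mathcal{E}_t+B_\mu^2 .
\]
With $\eta=c_\eta T^{-1/2}$ and $\mu=\Theta((dq)^{-1/2}T^{-1/4})$, we get $B_\mu^2=\mathcal{O}(\mu^4d^2q^2)=\mathcal{O}(1/T)$.

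\textbf{Consensus recursion (main obstacle).} The difficulty is that $Y_t=X_t-\eta\hat G_t$ depends on the same mask $P_t$ used for mixing, so the contraction in Step 2 cannot be applied to $Y_t$ directly. Following the remark after Theorem \ref{thm:edge_local_contraction}, write
\[
\Pi X_{t+1}=\mathcal{M}_t(\Pi X_t)-\eta\,\mathcal{M}_t(\Pi\hat G_t),
\]
where $\mathcal{M}_t(Z)=Z(I-P_t)+WZP_t$. Since $X_t$ is independent of $P_t$, Step 2 applies to the first term. The second term is bounded crudely by $\|\mathcal{M}_t(Z)\|_F\le\|Z\|_F$, which holds because $\mathcal{M}_t$ acts blockwise as identity or as $W$. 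Young's inequality with parameter $\alpha=(q/2d)(1-\rho^2)$ then gives
\[
\mathcal{E}_{t+1}\le\Bigl(1-\tfrac{q(1-\rho^2)}{2d}\Bigr)\mathcal{E}_t+\frac{2d\,\eta^2}{q(1-\rho^2)}\cdot\frac{d}{q}M_{\mathrm{node}}^2 .
\]
Here $\mathbb{E}\|\Pi\hat G_t\|_F^2/N\le (d/q)M_{\mathrm{node}}^2$ and $M_{\mathrm{node}}^2=\mathcal{O}(d(G^2+\sigma^2))$. With the common initialization $\mathcal{E}_0=0$, unrolling yields
\[
\mathcal{E}_t\lesssim \eta^2\frac{d^2}{q^2(1-\rho^2)^2}\,q\,(G^2+\sigma^2)\cdot\frac{d}{q}\ \text{(to be tightened)} .
\]
I expect to need care here to land exactly on $d^2/(q(1-\rho^2)^2)$. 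The first refinement to try is to use $\mathbb{E}\|\hat g\|^2=\mathcal{O}(d(G^2+\sigma^2))$ directly rather than through $M_{\mathrm{node}}^2$, since $\|u_t\|^2=q$ and $(d/q)^2\cdot q\cdot(q/d)$-type averaging yields $\mathcal{O}(d)$. This gives
\[
\mathcal{E}_t=\mathcal{O}\bigl(\eta^2 d^2(G^2+\sigma^2)/(q(1-\rho^2)^2)\bigr).
\]
Plugging this in with $\eta^2=c_\eta^2/T$ produces the last term of \eqref{eq:simplified_bigo_rate}, and the remaining constants give the $1/T$ and $\Delta_0/\sqrt T$ terms.
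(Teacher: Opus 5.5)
Your descent inequality for $\bar x_t$ follows the paper's argument. The consensus recursion, which you rightly flag as the crux, has a genuine gap: it loses a factor $d/q$, and your proposed refinement does not recover it.

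First, a misquotation. Proposition \ref{prop:estimator} gives $\mathbb{E}\|\hat g_{i,t}\|^2\le M_{\mathrm{node}}^2$, not $(d/q)M_{\mathrm{node}}^2$, and $M_{\mathrm{node}}^2=2d(G^2+\sigma^2)+2B_\mu^2$ is already $\mathcal{O}(d(G^2+\sigma^2))$. Your ``refinement'' therefore only undoes that misquotation.

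The real loss is structural. Applying Young's inequality to the whole matrix $\mathcal{M}_t(\Pi X_t)-\eta\mathcal{M}_t(\Pi\hat G_t)$ forces $\alpha\asymp(q/d)(1-\rho^2)$ if a contraction is to survive. The injection coefficient then becomes $1+\alpha^{-1}\asymp d/(q(1-\rho^2))$. With the correct moment, your recursion reads $\mathcal{E}_{t+1}\le(1-\tfrac{q(1-\rho^2)}{2d})\mathcal{E}_t+\tfrac{3d\eta^2}{q(1-\rho^2)}M_{\mathrm{node}}^2$. This unrolls to $\mathcal{E}_t=\mathcal{O}(\eta^2d^3(G^2+\sigma^2)/(q^2(1-\rho^2)^2))$, not the $d^2/q$ scaling you assert. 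The last term of the rate would become $d^3(G^2+\sigma^2)/(q^2(1-\rho^2)^2T)$. Under $T=\lfloor\mathcal{B}/(qB_{\mathrm{val}})\rfloor$, the explicit $1/q$ would then no longer cancel.

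The missing idea is one you stated in Step 2 but did not use: the perturbation lives exactly on the mixed coordinates, since $\hat G_tP_t=\hat G_t$ pathwise. Hence
\[
\Pi X_{t+1}=\Pi X_t(I-P_t)+W\Pi\bigl(X_t-\eta\hat G_t\bigr)P_t .
\]
The inactive block carries no perturbation, so Young's inequality is needed only on the active block, where $W$ contracts fully by $\rho^2$. For $\rho>0$ take $\alpha=(1-\rho^2)/(2\rho^2)$, which is independent of $q/d$; the case $\rho=0$ is trivial because the active block vanishes. This gives, pathwise,
\[
\|W\Pi(X_t-\eta\hat G_t)P_t\|_F^2\le\tfrac{1+\rho^2}{2}\|\Pi X_tP_t\|_F^2+\tfrac{2\eta^2}{1-\rho^2}\|\Pi\hat G_t\|_F^2 .
\]
Since $X_t$ is $\mathcal{F}_t$-measurable, $\mathbb{E}\|\Pi X_tP_t\|_F^2=(q/d)\|\Pi X_t\|_F^2$, while the $\hat G_t$ term needs no independence from $P_t$. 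Adding the orthogonal inactive block yields
\[
\mathcal{E}_{t+1}\le\Bigl(1-\tfrac{q}{d}\tfrac{1-\rho^2}{2}\Bigr)\mathcal{E}_t+\tfrac{2\eta^2}{1-\rho^2}M_{\mathrm{node}}^2,
\]
which is the paper's Lemma \ref{lemma:masked_disagreement_energy}. Unrolling from $\mathcal{E}_0=0$ gives $\mathcal{O}(\eta^2\tfrac{d}{q}M_{\mathrm{node}}^2/(1-\rho^2)^2)=\mathcal{O}(\eta^2d^2(G^2+\sigma^2)/(q(1-\rho^2)^2))$, plus a lower-order $B_\mu^2$ piece. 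Substituting this into your descent bound with $\eta=c_\eta T^{-1/2}$ gives \eqref{eq:simplified_bigo_rate}.
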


\begin{proof}
The descent proof is in Appendix \ref{appendix:main_theorem_proof}; the budget algebra is in Appendix \ref{appendix:comm_budget_rate}.
\end{proof}

\begin{remark}
The leading terms in \eqref{eq:simplified_bigo_rate} describe stochastic ZO descent; the last accounts for sparse mixing. Under a directed-link budget $\mathcal B$, $T=\lfloor\mathcal B/(qB_{\mathrm{val}})\rfloor$ converts the network term to $\mathcal O(d^2(G^2+\sigma^2)B_{\mathrm{val}}/((1-\rho^2)^2\mathcal B))$: its explicit $1/q$ factor cancels, while the spectral penalty remains. Smaller $q$ buys more rounds and queries. Appendix \ref{appendix:comm_budget_rate} gives the full bound, and Appendix \ref{appendix:sparse_consensus} proves the unavoidable $q/d$ one-step mixing slowdown.
\end{remark}

For the momentum update, set $a_\beta=\beta/(1-\beta)$ and $\bar m_t=N^{-1}\sum_i m_{i,t}$. Average preservation gives the exact auxiliary recursion
\begin{equation}\label{eq:main_momentum_identity}
 z_t=\bar x_t-\eta a_\beta\bar m_t,
 \qquad z_{t+1}=z_t-\eta\bar g_t.
\end{equation}
Appendix Theorem \ref{thm:momentum_convergence} bounds the distance from $z_t$ to $\bar x_t$ and handles the shared query--mixing mask, proving convergence for both global and perfect-matching momentum updates.

\input{v28_correlation_main}

\section{Experiments}\label{sec:exp}

We charge value and index fields per directed link. Topology comparisons fix per-node bits because the active edge sets differ; total network traffic is also recorded. Dense and refresh methods pay for every full-state exchange. Seed setup is separate from recurring payloads.

The controls isolate encoding, mixing, and local optimization. Explicit-index and seed-aware encodings measure the value of removing recurring metadata. Dense ZO-DSGD provides a dense-update reference; Com-DSZO tests adaptive compression. ZO-MARINA-type uses periodic dense refreshes rather than MARINA's gradient-difference estimator (Appendix \ref{appendix:experimental_details}).

\subsection{Synthetic Experiments}
On 20-dimensional Rosenbrock with $N=10$, the indexed control uses identical ZO-COSMO updates but redundantly sends each support. Com-DSZO \citep{hua2026compressedzo} combines dense two-point updates with Top-$k$ innovation compression and error-compensated consensus. All methods use two function evaluations per node and round.

\begin{figure}[!htbp]
    \centering
    \includegraphics[width=0.98\linewidth]{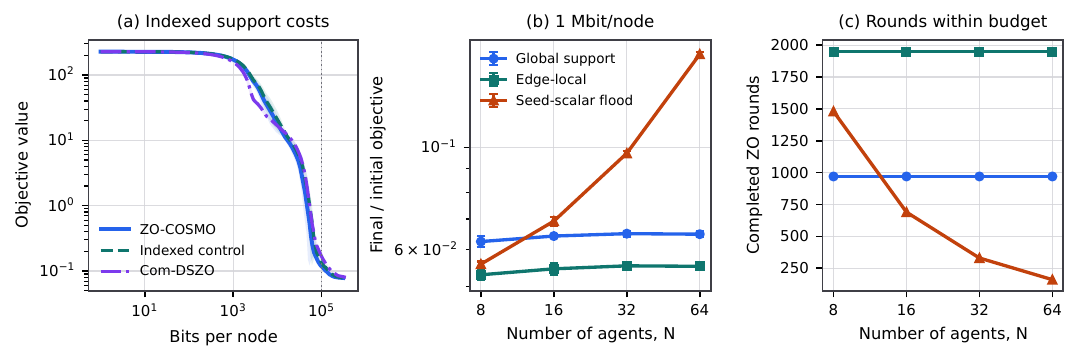}
    \caption{Communication-limited Rosenbrock. Bands and bars show one standard deviation over five seeds.}
    \label{fig:comm_efficiency}
\end{figure}

\begin{table}[!htbp]
    \centering
    \caption{Rosenbrock at $0.1$M bits per node with $q=1$ (five seeds). Lower is better.}
    \label{tab:synthetic_ablation}
    \renewcommand{\arraystretch}{1.12}
    \small
    \begin{tabular}{l c c c}
        \toprule
        \textbf{Method} & \textbf{Ring} & \textbf{Erd\H{o}s-R\'enyi} & \textbf{Grid} \\
        \midrule
        \rowcolor{gray!10}
        \textbf{ZO-COSMO (Ours)} & $\mathbf{0.083 \pm 0.006}$ & $\mathbf{0.119 \pm 0.011}$ & $\mathbf{0.093 \pm 0.006}$ \\
        Same updates + indices & $0.088 \pm 0.006$ & $0.136 \pm 0.015$ & $0.103 \pm 0.006$ \\
        Com-DSZO (Top-$k$) & $0.097 \pm 0.007$ & $0.171 \pm 0.018$ & $0.115 \pm 0.007$ \\
        \bottomrule
    \end{tabular}
\end{table}

At the marked $0.1$M-bit budget, removing repeated indices buys $15$--$17\%$ more rounds. ZO-COSMO has the lowest objective in all 15 paired seed--topology comparisons. At $0.3$M bits the methods have nearly converged, so the gap shrinks below $0.004$ on the Erd\H{o}s-R\'enyi graph. The identical-update control attributes this gain to the extra optimization rounds purchased by value-only messages.

Panels (b)--(c) compare one-hop global support, edge-local support, and an optimistic seed--scalar flood on rings. At $1$ Mbit per node, edge-local ZO-COSMO completes $1950$ logged rounds, versus $970$ global-support rounds, and has the lowest objective for every $N\in\{8,16,32,64\}$. Flooding is competitive at $N=8$, but its all-node dissemination cost leaves only $160$ rounds at $N=64$, where its relative objective is $0.159\pm0.002$ versus $0.055\pm0.0004$ for edge-local mixing. The separate topology stress test reaches $64$ agents on complete, grid, and ring graphs (Appendix Figure \ref{fig:appendix_network_stress}).

\input{v23_correlation_experiment}

\subsection{Zeroth-Order Fine-Tuning for Large Language Models}

We study Qwen2-7B \citep{yang2024qwen2} on QNLI with $2.52$M LoRA coordinates \citep{hu2022lora}. Workers maintain separate float32 adapter and momentum states on disjoint data shards; complete and ring overlays use $N\in\{2,4,8\}$ logical workers. QNLI was selected before training by a fixed task-screening rule (Appendix \ref{appendix:experimental_details}). We evaluate all $5463$ validation examples.

Value-only communication extends the affordable training horizon. Each explicit coordinate costs $22$ bits in addition to its $32$-bit value; at $B=26.214$ Mbit per directed link, ZO-COSMO completes $100$ rounds versus $59$ for explicit-index Rand-$k$. Over ten paired seeds, QNLI accuracy increases from $61.75\pm3.41\%$ to $65.40\pm3.57\%$: a $3.65$-point gain (95\% bootstrap CI $[0.49,6.86]$). Mean query loss decreases by $0.182$ (CI $[0.093,0.273]$). Figure \ref{fig:qwen_two_worker_qnli} compares the resulting models at the same communication budget.

Node-specific public seeds also permit value-only encoding of independent Rand-$k$. This post-hoc lossless re-encoding retains its $100$-round accuracy of $66.48\pm7.36\%$ at the same budget. ZO-COSMO's paired accuracy difference is $-1.08$ points (CI $[-5.85,3.67]$), with mean query loss lower by $0.091$ (CI $[0.006,0.175]$). The control separates the benefit of removing indices from that of coordinating queries and peer-state mixing.

Sparse momentum provides a separate optimization gain. With support, communication, and first-update scale $(1-\beta)\eta$ matched, it raises accuracy from $61.48\pm3.31\%$ to $65.40\pm3.57\%$ (paired $+3.92$ points, CI $[1.67,6.08]$). Matching the initial scale isolates the contribution of stored momentum; Appendix Table \ref{tab:qwen_qnli_momentum_ablation} reports both step-size controls.

The representation saving grows with model dimension. In the single-worker full-model study, sparse updates over $6.53$B trainable coordinates reach $95.03\pm0.13\%$ SST-2 accuracy. Encoding these same updates takes $262.1$K bits per step with public support, compared with $532.5$K bits using explicit indices, a $2.03\times$ reduction.

\begin{figure}[!htbp]
    \centering
    \includegraphics[width=0.80\linewidth]{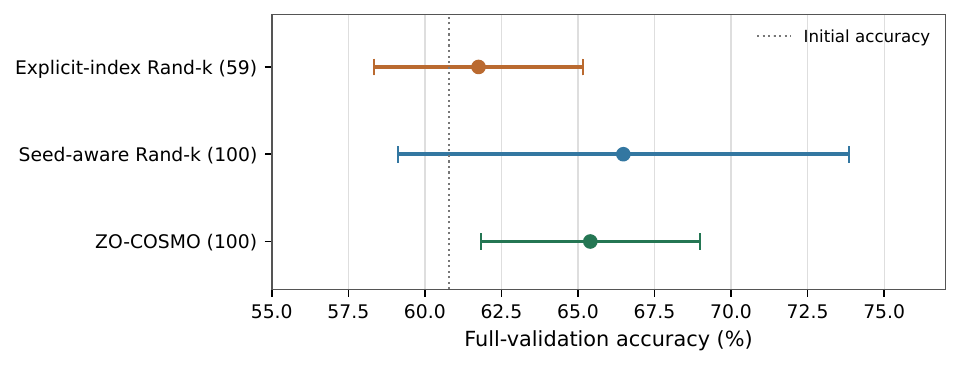}
    \caption{QNLI at $26.214$ Mbit per link (ten-seed mean $\pm$ std). Seed-aware Rand-$k$ is a lossless re-encoding of existing runs.}
    \label{fig:qwen_two_worker_qnli}
\end{figure}

Edge-local mixing extends the budget advantage to larger networks. At $26.214$ Mbit per node, matching permits $100$ rounds, versus $14$ all-neighbor rounds on the eight-worker complete graph and $50$ on the ring. Five-seed accuracy rises from $61.44\pm0.24\%$ to $64.85\pm3.03\%$ on the complete graph and from $61.09\pm0.63\%$ to $63.62\pm0.85\%$ on the ring: gains of $3.42$ points (CI $[1.06,5.63]$) and $2.53$ points (CI $[1.87,3.07]$). With query correlation fixed on the four-worker ring, matching lowers same-budget label NLL by $0.057$ (CI $[0.017,0.105]$). Figure \ref{fig:qwen_edge_local} includes all four topology--size settings and equal-round controls.

\input{v28_qwen_long_main}

\section{Conclusion}

ZO-COSMO performs decentralized two-query updates with $q$ values per active link and no recurring coordinate indices. The analysis identifies the support compatibility needed for one-hop mixing, quantifies contraction per communicated scalar, and explains how direction correlation trades estimation error against disagreement. Synthetic and Qwen experiments show fixed-budget gains from compact messages and matching schedules, together with separate benefits from sparse momentum and edge-sign independence.

\bibliographystyle{iclr2027_conference}
\bibliography{zeroth_order_v20}

\appendix

\section{Proofs}
\label{appendix:heavyweight_proof}

We give the details behind Section \ref{sec:theory}. Constants denoted by $C$ are numerical constants independent of $T,d,q,N,\eta,$ and $\mu$.

\subsection{Proof Structure}
\label{appendix:proof_roadmap}

We first establish the support-coding and decoding results. The optimization proof then controls the finite-difference estimate along $u_t$ and disagreement after mixing on the same support, retaining their dependence in the conditional expectations.

Smoothness at the network average gives
\begin{equation}
    F(\bar{x}_{t+1})
    \le
    F(\bar{x}_t)
    -
    \eta\langle \nabla F(\bar{x}_t),h_t\rangle
    +
    \frac{L\eta^2}{2}\|h_t\|^2,
    \label{eq:appendix_roadmap_descent}
\end{equation}
For Algorithm \ref{alg:zo_cosmo}, $h_t=\bar g_t$, and the direction decomposes as
\begin{equation}
    h_t
    =
    \nabla F(\bar{x}_t)
    +
    \underbrace{(\bar{g}_t-\mathbb{E}_t\bar{g}_t)}_{\text{oracle and direction noise}}
    +
    \underbrace{(\mathbb{E}_t\bar{g}_t-\nabla F(\bar{x}_t))}_{\text{finite-difference and disagreement bias}} .
    \label{eq:appendix_roadmap_decomposition}
\end{equation}
The estimator lemmas control the two error terms in \eqref{eq:appendix_roadmap_decomposition}. The masked disagreement recursion gives the price of mixing a coordinate only when it belongs to the public support. Substitution into \eqref{eq:appendix_roadmap_descent}, followed by telescoping, proves Theorem \ref{thm:convergence}.

The filtration is fixed as follows. Let $\mathcal{F}_t$ contain the iterates, algorithm state, graph, past public coins $R_0,\ldots,R_{t-1}$, and oracle samples through round $t-1$. The fresh public coin $R_t$, and hence the support and signs at round $t$, is independent of $\mathcal{F}_t$. Conditional on $\mathcal{F}_t$ and $R_t$, the oracle variables $\xi_{i,t}$ are independent across nodes. Worker averaging reduces independent oracle noise, while shared direction randomness remains in the conditional moments. Appendix \ref{appendix:public_seed_protocol} specifies the seeded implementation and the role of the ideal public-coin assumption.

We write $\mathbb{E}_S[\cdot]$ for expectation over the support, $\mathbb{E}_u[\cdot]$ for expectation over support and signs, and $\mathbb{E}_\xi[\cdot]$ for expectation over oracle samples. An omitted subscript includes all randomness. Here $P_t$ projects onto $\mathcal{S}_t$, $u_t$ is the sparse Rademacher direction, and $\rho$ is the spectral radius of $W-\frac{1}{N}\mathbf{1}\mathbf{1}^{\top}$. Public information determines the support at both endpoints; its random selection still contributes to the estimator variance.

\subsection{Conditional Support Cost}
\label{appendix:conditional_support_cost_proof}

\begin{proposition}
\label{prop:conditional_support_cost}
Any zero-error uniquely decodable support field needs at least $H(S_t\mid R_t)$ bits in expectation; with $B_{\mathrm{val}}$ value bits, the total is at least
\begin{equation}\label{eq:conditional_total_sparse_cost}
    qB_{\mathrm{val}}+H(S_t\mid R_t).
\end{equation}
If one data-dependent support $S$ is announced and reused for a block of $K$ rounds, its amortized per-round cost is at least
\begin{equation}\label{eq:periodic_support_cost}
    qB_{\mathrm{val}}+\frac{H(S\mid R)}{K}.
\end{equation}
Zero expected support length requires the support to be determined by public information. An unreconstructable uniform $q$-subset costs $\log_2\binom{d}{q}$ bits, of order $q\log(d/q)$ for $q\le d/2$. Public supports remove this field; reused adaptive masks amortize it as in \eqref{eq:periodic_support_cost}.
\end{proposition}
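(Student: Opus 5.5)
The argument is the conditional source-coding converse, applied with the public coin $R_t$ as side information at both encoder and decoder. We model the support field as a code $c(\cdot\mid r)$ that, for every realization $r$ of $R_t$, maps the support $S_t$ to a binary codeword. The code may depend on $r$ because both endpoints see $R_t$. Zero-error unique decodability means that for each fixed $r$ the map $s\mapsto c(s\mid r)$ is uniquely decodable, and concatenated messages must also parse. By the Kraft--McMillan inequality, each such code satisfies $\sum_s 2^{-\ell(s\mid r)}\le 1$.

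The first step is to bound the expected length for each fixed $r$. By Gibbs' inequality, the Kraft condition gives $\mathbb{E}[\ell(S_t\mid r)\mid R_t=r]\ge H(S_t\mid R_t=r)$. Averaging over $r$ yields the bound $H(S_t\mid R_t)$ for the support field. The value field consists of $q$ ordered values of $B_{\mathrm{val}}$ bits each, and it must additionally be decodable. I will argue additivity by conditioning the value field on the decoded support, or more simply by treating the message as the pair (support field, value field) with the value field of fixed width $qB_{\mathrm{val}}$. This gives \eqref{eq:conditional_total_sparse_cost}. The main subtlety is making precise that the two fields are not jointly compressed below the sum. I will handle it by the model assumption that values are sent at fixed width and the support field is a separately parsed prefix. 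If that is too strong, I will instead state the bound for the support component only.

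For the block-reuse claim, I apply the same argument once per block. The announced $S$ costs at least $H(S\mid R)$ in expectation and is amortized over $K$ rounds, each of which carries $qB_{\mathrm{val}}$ value bits. This gives \eqref{eq:periodic_support_cost}.

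For the zero-length claim, note that expected length zero forces $H(S_t\mid R_t)=0$. Hence $S_t$ is almost surely a function of $R_t$, that is, of public information. Conversely, if $S_t$ is a function of $R_t$, the empty code suffices.

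For the uniform private support, $S_t$ is independent of $R_t$ and uniform over $\binom{d}{q}$ subsets, so $H=\log_2\binom dq$. The order estimate follows from the standard binomial bounds $(d/q)^q\le\binom dq\le(ed/q)^q$. Together these give $q\log_2(d/q)\le\log_2\binom dq\le q\log_2(d/q)+q\log_2 e$, and for $q\le d/2$ we have $\log_2(d/q)\ge1$, so the additive $q\log_2 e$ term is absorbed and the cost is $\Theta(q\log(d/q))$.
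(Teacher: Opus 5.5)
Your proposal is correct and follows the paper's route. The paper likewise conditions on $R_t=r$, applies the source-coding converse to the conditional source to get $\mathbb{E}[\ell_t\mid R_t=r]\ge H(S_t\mid R_t=r)$, averages over $R_t$, and adds the $qB_{\mathrm{val}}$ value bits; it then divides by $K$ for block reuse and uses $H(S_t\mid R_t)=0$ and the binomial bounds for the remaining claims. Your explicit Kraft--McMillan/Gibbs step, and your remark that additivity assumes a separately parsed support field, spell out what the paper takes for granted.
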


The elementary binomial estimates used in the main text are
\begin{equation}\label{eq:support_binomial_bounds}
q\log_2(d/q)\le\log_2\binom{d}{q}\le q\log_2(ed/q).
\end{equation}

Fix a communication round and condition on the public information $R_t=r$. A zero-error decoder must recover the realized support $S_t$ from the transmitted support codeword and $r$. Thus, for each $r$, the code must be uniquely decodable for the conditional source $S_t\mid R_t=r$. The source-coding converse gives
\begin{equation}
    \mathbb{E}[\ell_t\mid R_t=r]
    \ge
    H(S_t\mid R_t=r),
    \label{eq:conditional_code_converse_fixed_r}
\end{equation}
where $\ell_t$ is the number of transmitted support bits in that round. Taking expectation over $R_t$ yields
\begin{equation}
    \mathbb{E}\ell_t
    \ge
    \mathbb{E}_{R_t}H(S_t\mid R_t)
    =
    H(S_t\mid R_t).
    \label{eq:conditional_code_converse}
\end{equation}
The value payload contributes $qB_{\mathrm{val}}$ additional bits, giving \eqref{eq:conditional_total_sparse_cost}. Because codeword lengths are nonnegative, zero expected support length forces $H(S_t\mid R_t)=0$. For a discrete support this is equivalent to $S_t$ being a function of $R_t$ almost surely. If $S_t=\Phi(R_t,t)$, this condition holds. If instead $S_t$ is uniform over $\binom{[d]}{q}$ and independent of $R_t$, then $H(S_t\mid R_t)=H(S_t)=\log_2\binom{d}{q}$; Stirling's bounds give $\Theta(q\log(d/q))$ when $1\le q\le d/2$.

For a block-reused support, condition on the public information available when the block begins. Zero-error recovery of the private support still requires expected length at least $H(S\mid R)$. The block then carries $KqB_{\mathrm{val}}$ value bits and at least $H(S\mid R)$ support bits. Dividing by $K$ proves \eqref{eq:periodic_support_cost}. This is a lower bound on describing a chosen mask; any communication used to agree on a data-dependent mask across decentralized nodes is additional.

\subsection{Private-Support Code Length}
\label{appendix:support_coding_proof}

Let
\begin{equation}
    \mathcal{U}_{d,q}:=\{S\subset[d]: |S|=q\},
    \qquad
    |\mathcal{U}_{d,q}|=\binom{d}{q}.
    \label{eq:support_universe}
\end{equation}
For a fixed-length zero-error code of length $L$, distinct supports require distinct codewords, so $\binom{d}{q}\le 2^L$. For a prefix-free code whose maximum length is $L$, Kraft's inequality gives
\begin{equation}
    1
    \ge
    \sum_{S\in\mathcal{U}_{d,q}}2^{-\ell(S)}
    \ge
    \binom{d}{q}2^{-L}.
    \label{eq:support_kraft_worst_case}
\end{equation}
Either model therefore requires
\begin{equation}
    \left\lceil \log_2|\mathcal{U}_{d,q}|\right\rceil
    =
    \left\lceil \log_2\binom{d}{q}\right\rceil
    \label{eq:counting_lower_bound}
\end{equation}
bits in the worst case.

For the expected-length statement, assume $S$ is uniform over $\mathcal{U}_{d,q}$. Any uniquely decodable zero-error code has expected length at least the entropy of the source, so
\begin{equation}
    \mathbb{E}[\ell(S)]
    \ge
    H(S)
    =
    \log_2|\mathcal{U}_{d,q}|
    =
    \log_2\binom{d}{q}.
    \label{eq:entropy_lower_bound}
\end{equation}
The standard source-coding converse gives the claim.

The displayed $\Theta(q\log(d/q))$ order follows from the standard binomial estimates
\begin{equation}
    \left(\frac{d}{q}\right)^q
    \le
    \binom{d}{q}
    \le
    \left(\frac{ed}{q}\right)^q.
    \label{eq:appendix_binomial_support_bounds}
\end{equation}
Taking base-two logarithms gives \eqref{eq:support_binomial_bounds} for $q\le d/2$. Coordinate lists cost $q\lceil\log_2 d\rceil$ bits; optimal private-support codes may be shorter. Publicly reconstructible supports require zero support bits. These are the three encodings compared in the communication analysis.

In the public-coin model used by ZO-COSMO, the support is generated as
\begin{equation}
    S_t = \Phi(R_t,t),
    \label{eq:public_coin_support_function}
\end{equation}
where $R_t$ is known to both sender and receiver and $\Phi$ is a deterministic rule. Therefore $S_t$ is measurable with respect to $R_t$, and
\begin{equation}
    H(S_t\mid R_t)=0.
    \label{eq:conditional_entropy_zero_appendix}
\end{equation}
Thus no support field is needed. This coding observation applies to any publicly reconstructible support, including independent Rand-$k$ supports generated from node-specific seeds.

\subsection{Public-Support Lifting}
\label{appendix:public_seed_protocol}

The implementation reconstructs each round from the master seed $s_0$ and public identifiers. In the experiment code, BLAKE2b folds these identifiers into a NumPy generator seed:
\begin{equation}
    r_t = \mathrm{Hash}(s_0,t),
    \qquad
    (S_t,\varepsilon_t)=\mathrm{SampleSparseDirection}(r_t,d,q),
    \label{eq:seed_round_generation}
\end{equation}
where $S_t\subset[d]$ has cardinality $q$ and $\varepsilon_t\in\{\pm1\}^{S_t}$. The sparse direction is $u_{t,j}=\varepsilon_{t,j}\mathbf{1}\{j\in S_t\}$. Node identifiers or canonically ordered edge endpoints select local streams when required. With the same generator, ordering convention, and public identifiers, endpoints reconstruct the same support and signs directly.

The communicated message is the ordered value vector
\begin{equation}
    m_{i,t}
    =
    \big(
        v_{i,t,1},\ldots,v_{i,t,q}
    \big),
    \label{eq:ordered_value_message}
\end{equation}
where $v_{i,t,k}$ is attached to the $k$-th coordinate in the deterministic ordering of $S_t$. No coordinate identifier appears in \eqref{eq:ordered_value_message}. If $S_t=\{s_{t,1}<\cdots<s_{t,q}\}$, then decoding is exactly the public-support lift
\begin{equation}
    \mathrm{Decode}(m_{i,t};s_0,t)
    =
    \Lift_{S_t}(m_{i,t})
    =
    \sum_{k=1}^q v_{i,t,k}e_{s_{t,k}},
    \label{eq:ordered_decode}
\end{equation}
with all other coordinates equal to zero.

Decoding and average preservation hold pathwise for every support-compatible sequence, including seeded sequences. The contraction and convergence rates assume fresh uniform draws in the ideal public-coin model. The finite-seed implementation reproduces the message identities exactly; extending the stochastic rates to its pseudorandom generator would require a separate randomness approximation or cryptographic argument.

Endpoints share only the generator specification, seed, round counter, $d$, and $q$; data, losses, gradients, and momentum remain local. With a $\kappa$-bit master seed, initialization contributes $\kappa+O(\log d+\log q+\log T)$ bits per neighbor when charged explicitly, rather than a recurring support field.

\begin{lemma}
\label{lemma:seed_decode_correctness}
Assume two neighboring agents use the same tuple $(s_0,t,d,q)$ and the same deterministic sampling rule in \eqref{eq:seed_round_generation}. Then the decoder \eqref{eq:ordered_decode} reconstructs the sender's sparse vector from ordered values alone. Consequently, exact sparse-message decoding requires no per-round coordinate metadata.
\end{lemma}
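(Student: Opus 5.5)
The statement to prove is Lemma~\ref{lemma:seed_decode_correctness}. The argument has two parts: first show that sender and receiver hold identical supports and signs, then check the lift identity on that common support.

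\emph{Step 1: common support.} The sampling rule in \eqref{eq:seed_round_generation} is deterministic. Both agents feed it the same input $r_t=\mathrm{Hash}(s_0,t)$, because $\mathrm{Hash}$ is a fixed function of the shared pair $(s_0,t)$. They also pass the same parameters $(d,q)$. Hence both agents compute the same pair $(S_t,\varepsilon_t)$. Ordering $S_t$ increasingly as $s_{t,1}<\cdots<s_{t,q}$ then gives both sides the same indexing. In the notation of Appendix~\ref{appendix:conditional_support_cost_proof}, this says $S_t=\Phi(R_t,t)$, so $H(S_t\mid R_t)=0$ by \eqref{eq:conditional_entropy_zero_appendix}.

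\emph{Step 2: decoding identity.} Let the sender's sparse vector be any $z$ supported on $S_t$, so that $P_{S_t}z=z$. This covers both the message $\Res_{S_t}(y_{i,t})$ used for mixing and the direction $u_t$. By construction of \eqref{eq:ordered_value_message}, the message is $m_{i,t}=\Res_{S_t}z$. The receiver applies \eqref{eq:ordered_decode} using the support it obtained in Step~1, which equals the sender's support. This gives
\[
\Lift_{S_t}\Res_{S_t}z=P_{S_t}z=z,
\]
using the identity $\Lift_S\Res_S=P_S$ stated after \eqref{eq:public_support_lift}. When the vector being communicated is a dense state $y$, the same computation recovers exactly $P_{S_t}y$. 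That is precisely the quantity mixing needs, and Theorem~\ref{thm:one_hop_compatibility} supplies it.

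\emph{Step 3: no metadata.} The message consists of $q$ ordered values and nothing else. The only support-related information either endpoint uses is the tuple $(s_0,t,d,q)$. Here $s_0$ is fixed once at setup and $t$ is a local round counter. So no per-round coordinate field is transmitted, and the argument holds pathwise for every round.

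No step is a real obstacle. The one point to state carefully is the ordering convention: both endpoints must index the support in the same canonical increasing order, so that the $k$-th value is placed at the same coordinate $s_{t,k}$ on both sides. Once that convention is fixed, the lemma reduces to determinism of the sampler together with $\Lift_S\Res_S=P_S$.
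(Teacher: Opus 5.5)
Your proposal is correct and follows the paper's proof. Because the seeded sampler is deterministic, both endpoints obtain the same ordered support $(S_t,\varepsilon_t)$, and the lift then places the $k$-th received value on $s_{t,k}$; your identity $\Lift_{S_t}\Res_{S_t}z=P_{S_t}z=z$ simply makes that final step explicit. One small wording fix: the sparse vector supported on $S_t$ is $P_{S_t}y_{i,t}$, while $\Res_{S_t}(y_{i,t})$ is the $q$-value message itself, which your own next sentence about dense states already reflects.
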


\begin{proof}
Once the sampling rule is fixed, $S_t$ and $\varepsilon_t$ are deterministic functions of $(s_0,t,d,q)$. Both endpoints therefore obtain the same ordered support. The map $\Lift_{S_t}$ places the $k$-th received value on the $k$-th public coordinate, so the decoded vector is the one formed by the sender.
\end{proof}

\subsection{Consequences of One-Hop Support Compatibility}
\label{appendix:support_compatibility}

\begin{proof}[Proof of Theorem \ref{thm:one_hop_compatibility}]
If $K\subseteq S_{ji}$ for every positive-weight remote neighbor, receiver $i$ lifts each ordered message, reads its coordinates in $K$, and adds the known self term. This proves sufficiency. For necessity, suppose $w_{ij}>0$ and $k\in K\setminus S_{ji}$. Hold every state fixed except $y_j^{(k)}$, setting that entry to $R$ in one instance and $-R$ in another. The receiver's state, public transcript, supports, and messages are identical, while the correct $k$-th outputs differ by $2w_{ij}R$. Any common output has error at least half this separation on one instance. Taking arbitrary $R>0$ rules out zero-error recovery; under the stated bound it gives the minimax error $w_{ij}R$. If both supports have cardinality $q$, containment implies equality.
\end{proof}

Theorem \ref{thm:one_hop_compatibility} concerns exact current-state mixing. PermK instead coordinates compressors to control the error of an aggregate estimate \citep{szlendak2022permutation}; the inputs at different workers need not be identical. Complementary supports can serve that purpose without exposing every peer on a common target coordinate. Likewise, a seed--scalar pair reconstructs an update, but not an arbitrary state to which past updates have been applied. For randomized decoders, the same two indistinguishable states give an expected absolute-error lower bound: $|a-w_{ij}R|+|a+w_{ij}R|\ge2w_{ij}R$ for every output $a$, and taking expectations preserves the inequality. Setting the unknown contribution to zero attains this coordinatewise bound when all other contributions are known.

The condition is local to an active communication group. If all neighbors of $i$ participate in one Metropolis step, they need a common target support at $i$. If communication is scheduled as a matching, only the two endpoints of each active edge must agree. This observation leads to the edge-local method below.

\subsection{Edge-Local ZO-COSMO}
\label{appendix:edge_local_cosmo}

Algorithm \ref{alg:edge_local_cosmo} uses public perfect matchings. For a maximal matching, unmatched nodes retain their states. A positive contraction in Theorem \ref{thm:edge_local_contraction} requires a connected expected union graph, equivalently $\lambda_2(\bar L)>0$.

\begin{algorithm}[htbp]
\caption{Edge-Local ZO-COSMO}
\label{alg:edge_local_cosmo}
\small
\begin{algorithmic}[1]
\State \textbf{Input:} graph $G$, $\eta,\mu,q$, and public tape; initialize $x_{i,0}$.
\For{$t=0,\ldots,T-1$}
    \State Reconstruct a public matching $M_t$.
    \ForAll{$e=\{i,j\}\in M_t$ in parallel}
        \State From $(R_t,e)$, reconstruct support $S_{e,t}$, signs $\epsilon_{e,t}$, and $u_{e,t}=P_{e,t}\epsilon_{e,t}$.
        \ForAll{$r\in\{i,j\}$ in parallel}
            \State $\hat g_{r,t}=\frac{d}{q}\frac{f_r(x_{r,t}+\mu u_{e,t};\xi_{r,t})-f_r(x_{r,t}-\mu u_{e,t};\xi_{r,t})}{2\mu}u_{e,t}$; $y_{r,t}=x_{r,t}-\eta\hat g_{r,t}$.
            \State Send $\Res_{S_{e,t}}(y_{r,t})$ to the other endpoint.
        \EndFor
        \State $x_{i,t+1}=(I-P_{e,t})y_{i,t}+\frac12P_{e,t}(y_{i,t}+y_{j,t})$; update $x_{j,t+1}$ symmetrically.
    \EndFor
\EndFor
\end{algorithmic}
\end{algorithm}

Every edge support is keyed by the unordered edge id, so its endpoints reconstruct the same ordered coordinates. Different matched edges may reconstruct different supports because they have disjoint receivers. The code alternates the two perfect matchings of an even ring; on a complete graph it publicly permutes the nodes and pairs adjacent entries. The momentum variant replaces $\hat g_{r,t}$ in the local step by $P_{e,t}m_{r,t+1}$ using \eqref{eq:optional_subspace_momentum} with $P_t=P_{e,t}$.

\subsection{Proof of Theorem \ref{thm:edge_local_contraction}}
\label{appendix:edge_local_proof}

For $e=\{i,j\}$, write $Q_e(Z)=\frac12L_eZP_{e,t}$. Since $L_e^2=2L_e$ and $P_{e,t}^2=P_{e,t}$, $Q_e$ is an orthogonal projector under the Frobenius inner product. Distinct edges in a matching have disjoint endpoints, hence $L_eL_{e'}=0$ and the projectors $Q_e,Q_{e'}$ are orthogonal even when their coordinate supports overlap. Thus $Q=\sum_{e\in M_t}Q_e$ is itself an orthogonal projector and \eqref{eq:edge_local_operator} is $X_{t+1}=(I-Q)Y_t$.

Because $\mathbf1^\top L_e=0$, the network average is unchanged. Moreover, $QY_t$ lies in the disagreement subspace and $Q\Pi Y_t=QY_t$. Pythagoras therefore gives
\begin{align}
\|\Pi X_{t+1}\|_F^2
&=\|(I-Q)\Pi Y_t\|_F^2 \\
&=\|\Pi Y_t\|_F^2-\|QY_t\|_F^2 \\
&=\|\Pi Y_t\|_F^2-\frac12\sum_{\{i,j\}\in M_t}\|P_{e,t}(y_{i,t}-y_{j,t})\|^2.
\label{eq:edge_pythagoras}
\end{align}
For a uniform $q$-coordinate projector and a fixed vector $v$ independent of it, $\mathbb{E}\|P_{e,t}v\|^2=(q/d)\|v\|^2$. Holding $Y_t$ fixed while taking the support expectation in \eqref{eq:edge_pythagoras} proves \eqref{eq:edge_exact_contraction}. No cross-edge independence is used. Averaging also over the matching yields
\begin{equation}
\mathbb{E}_{M,P}\|\Pi X_{t+1}\|_F^2
=\|\Pi Y_t\|_F^2-\frac{q}{2d}\operatorname{tr}(Y_t^\top\bar L Y_t).
\label{eq:edge_laplacian_energy}
\end{equation}
Since $\bar L\mathbf1=0$ and $\bar L\succeq0$, the variational definition of $\lambda_2$ gives $\operatorname{tr}(Y_t^\top\bar L Y_t)\ge\lambda_2(\bar L)\|\Pi Y_t\|_F^2$, proving \eqref{eq:edge_spectral_contraction}.

For a uniformly random perfect matching of the complete graph, $\bar L=L_{K_N}/(N-1)$ and $\lambda_2(\bar L)=N/(N-1)$. Drawing either perfect-matching phase of an even ring uniformly gives $\bar L=L_{\mathrm{ring}}/2$ and
\begin{equation}
    \lambda_2(\bar L)=1-\cos(2\pi/N).
    \label{eq:edge_ring_gap}
\end{equation}
These constants concern independently sampled matching phases and a fixed input. Deterministic alternation, as used by the ring implementation, instead satisfies the two-round bound \eqref{eq:alternating_ring_block_gap}. The factor $q/d$ is the cost of activating only a random coordinate subspace.

\subsection{Proof of Theorem \ref{thm:mixing_per_bit}}
\label{appendix:mixing_per_bit_proof}

Let $z_k=\Pi Y_t e_k$ be the centered network state on coordinate $k$. Expanding \eqref{eq:edge_pythagoras}, then averaging over the matching and its supports, gives
\begin{equation}
\mathbb{E}\|\Pi X_{t+1}\|_F^2
=
\|\Pi Y_t\|_F^2
-\frac12\sum_{k=1}^d z_k^\top\bar L_k z_k.
\label{eq:coordinate_labeled_energy}
\end{equation}
Each $\bar L_k$ is a graph Laplacian. Hence
\begin{equation}
\sum_{k=1}^d z_k^\top\bar L_k z_k
\ge
\left(\min_{k\in[d]}\lambda_2(\bar L_k)\right)
\sum_{k=1}^d\|z_k\|^2.
\label{eq:coordinate_labeled_gap}
\end{equation}
The coefficient is sharp: choose one nonzero column of $Y_t$ to be a unit eigenvector associated with $\lambda_2(\bar L_k)$ for a minimizing coordinate $k$. This proves the equality in \eqref{eq:mixing_per_bit_converse}.

It remains to bound this coefficient by the scalar payload. Since $\operatorname{tr}(L_e)=2$,
\begin{equation}
\sum_{k=1}^d\operatorname{tr}(\bar L_k)
=
2\mathbb{E}\!\left[\sum_{e\in M_t}|S_{e,t}|\right]
=C_{\mathrm{dir}}.
\label{eq:labeled_laplacian_trace_budget}
\end{equation}
For any $N$-node Laplacian $L$, $\lambda_2(L)\le\operatorname{tr}(L)/(N-1)$. Therefore
\begin{equation}
2\gamma_\star
=\min_k\lambda_2(\bar L_k)
\le\frac1d\sum_{k=1}^d\lambda_2(\bar L_k)
\le\frac{C_{\mathrm{dir}}}{d(N-1)},
\label{eq:mixing_trace_converse}
\end{equation}
which proves the converse and, after multiplying by $B_{\mathrm{val}}$, the bit lower bound.

For a uniformly random perfect matching of $K_N$, every edge is selected with probability $1/(N-1)$, so
\begin{equation}
\mathbb{E}\!\left[\sum_{e\in M_t}L_e\right]
=\frac{L_{K_N}}{N-1}
=\frac{N}{N-1}\Pi.
\label{eq:complete_matching_laplacian}
\end{equation}
A uniform rank-$q$ support contains coordinate $k$ with probability $q/d$, and a perfect matching sends $C_{\mathrm{dir}}=Nq$ directed values. Thus $\bar L_k=qN\Pi/[d(N-1)]$ for every $k$, and
\begin{equation}
\gamma_\star=\frac{qN}{2d(N-1)}
=\frac{Nq}{2d(N-1)}
=\frac{C_{\mathrm{dir}}}{2d(N-1)},
\label{eq:complete_matching_tightness}
\end{equation}

To characterize equality, put $c=C_{\mathrm{dir}}/[d(N-1)]$. Equality in \eqref{eq:mixing_trace_converse} requires all coordinate gaps to be equal, and all $N-1$ eigenvalues on the disagreement subspace of each $\bar L_k$ to equal its smallest one. Since $\bar L_k\mathbf1=0$, these conditions are equivalent to
\begin{equation}\label{eq:matching_equality_characterization}
    \bar L_k=c\Pi \quad\text{for every }k\in[d].
\end{equation}
Conversely, \eqref{eq:matching_equality_characterization} gives $\gamma_\star=c/2$, attaining the bound. Unequal coordinate coverage or an anisotropic expected Laplacian makes at least one inequality strict. Uniform perfect matchings with uniform marginal supports satisfy this pure-mixing equality criterion for either common or independent edge supports.

\subsection{Balanced Public Schedule}
\label{appendix:balanced_schedule_proof}

Let $K=\lceil d/q\rceil$. At epoch $e$, all agents use the shared seed to generate a public permutation $\pi_e$ and take
\begin{equation}
    \mathcal{S}_{eK+r}=\{\pi_e(rq+1),\ldots,\pi_e(\min\{(r+1)q,d\})\},\qquad r=0,\ldots,K-1.
    \label{eq:balanced_schedule}
\end{equation}
The permutation $\pi_e$ is a bijection from $[d]$ to $[d]$. The blocks in \eqref{eq:balanced_schedule} form a partition of the ordered list
\begin{equation}
    \pi_e(1),\pi_e(2),\ldots,\pi_e(d).
    \label{eq:permutation_partition}
\end{equation}
Hence every coordinate appears in at least one block of epoch $e$. If $q$ divides $d$, every block has size exactly $q$, the blocks are disjoint, and every coordinate appears exactly once.

Since every coordinate appears at least once in every epoch, a coordinate activated in round $eK+r$ must be activated again by the end of the next epoch. The largest possible gap happens when the coordinate appears in the first round of epoch $e$ and in the last round of epoch $e+1$, giving a gap of at most
\begin{equation}
    (K-1)+K=2K-1.
    \label{eq:max_balanced_gap}
\end{equation}
The average refresh time is at most the epoch length $K$ because each epoch contains at least one activation of the coordinate. The padding rule, when used, only adds extra activations and cannot increase the maximum gap or the average refresh time.

\subsection{Masked Matrix Form of ZO-COSMO}
\label{appendix:masked_updates}

Let $\mathbf{X}_t,\mathbf{G}_t\in\mathbb{R}^{N\times d}$ stack the row vectors $x_{i,t}$ and $\hat{g}_{i,t}$. Since every estimator is supported on $\mathcal{S}_t$, $\mathbf{G}_tP_t=\mathbf{G}_t$. The local update is
\begin{equation}
    \mathbf{Y}_t
    =
    \mathbf{X}_t-\eta \mathbf{G}_t,
    \label{eq:matrix_local_y}
\end{equation}
and the partial consensus step is
\begin{equation}
    \mathbf{X}_{t+1}
    =
    \mathbf{Y}_t(I-P_t)+W\mathbf{Y}_tP_t.
    \label{eq:matrix_partial_consensus}
\end{equation}
Multiplying \eqref{eq:matrix_partial_consensus} by $N^{-1}\mathbf{1}^\top$ from the left and using $\mathbf{1}^\top W=\mathbf{1}^\top$ gives the exact averaged update
\begin{equation}
    \bar{x}_{t+1}
    =
    \bar{x}_t-\eta\bar g_t,
    \qquad
    \bar g_t:=\frac{1}{N}\sum_{i=1}^N\hat g_{i,t}.
    \label{eq:avg_masked_update}
\end{equation}
Thus the exact descent direction of the network average is
\begin{equation}
    h_t:=\bar g_t.
    \label{eq:effective_direction}
\end{equation}

\subsection{Sparse Direction Identities}
\label{appendix:sparse_direction_identities}

\begin{lemma}
\label{lemma:sparse_direction_identities}
Let $\mathcal{S}$ be sampled uniformly from all $q$-subsets of $[d]$, let $\epsilon_j$ be independent Rademacher signs on $\mathcal{S}$, and define $u_j=\epsilon_j\mathbf{1}\{j\in\mathcal{S}\}$. Then
\begin{equation}
    \mathbb{E}[uu^\top]=\frac{q}{d}I_d,
    \qquad
    \|u\|^2=q\quad\text{almost surely}.
    \label{eq:sparse_direction_identities}
\end{equation}
Consequently, for every deterministic vector $v\in\mathbb{R}^d$,
\begin{equation}
    \mathbb{E}\left\|
    \frac{d}{q}uu^\top v
    \right\|^2
    =
    d\|v\|^2.
    \label{eq:sparse_direction_second_moment_lemma}
\end{equation}
\end{lemma}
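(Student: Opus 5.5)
The plan is a direct computation of the second moment of $uu^\top$ entrywise, followed by a reduction of the second claim to the first via $\|u\|^2=q$.

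\emph{Norm identity.} Since $|\mathcal S|=q$ and $\epsilon_j^2=1$, we have $\|u\|^2=\sum_{j\in\mathcal S}\epsilon_j^2=q$ pathwise.

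\emph{Covariance.} I would compute $\mathbb{E}[u_ju_k]$ by conditioning on $\mathcal S$.
\begin{itemize}
\item For $j=k$: $u_j^2=\mathbf 1\{j\in\mathcal S\}$. A uniform $q$-subset contains a fixed coordinate with probability $\binom{d-1}{q-1}/\binom dq=q/d$.
\item For $j\neq k$: conditional on $\mathcal S$, the product $u_ju_k=\epsilon_j\epsilon_k\mathbf 1\{j,k\in\mathcal S\}$ has zero conditional mean, because the signs are independent and centered.
\end{itemize}
Together these give $\mathbb{E}[uu^\top]=(q/d)I_d$.

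\emph{Second moment.} For deterministic $v$, use $(uu^\top)^2=u(u^\top u)u^\top=q\,uu^\top$, which holds almost surely by the norm identity. Then
\[
\Bigl\|\tfrac dq uu^\top v\Bigr\|^2=\tfrac{d^2}{q^2}\,v^\top (uu^\top)^2 v=\tfrac{d^2}{q}\,v^\top uu^\top v.
\]
Taking expectations and applying the covariance identity gives $\frac{d^2}{q}\cdot\frac qd\|v\|^2=d\|v\|^2$.

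There is no real obstacle. The only point needing care is that $v$ is independent of $u$, so the expectation passes through the quadratic form; this is the reason the lemma is stated for deterministic $v$. I would remark that in later use one conditions on $\mathcal F_t$ to make $v$ fixed.
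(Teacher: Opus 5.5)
Your proof is correct and follows the paper's argument essentially step for step. The paper computes the diagonal entries via the inclusion probability $\binom{d-1}{q-1}/\binom{d}{q}$ and kills the off-diagonal entries with centered independent signs, as you do. It then writes the second moment as $(d/q)^2\,\mathbb{E}[(u^\top v)^2\|u\|^2]$, which is the same computation as your $(uu^\top)^2=q\,uu^\top$ reduction.
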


\begin{proof}
For $j\neq k$, $\mathbb{E}[u_ju_k]=0$ because the signs are independent and mean-zero conditional on the support. For $j=k$,
\begin{equation}
    \mathbb{E}[u_j^2]
    =
    \mathbb{P}(j\in\mathcal{S})
    =
    \frac{\binom{d-1}{q-1}}{\binom{d}{q}}
    =
    \frac{q}{d}.
\end{equation}
This proves $\mathbb{E}[uu^\top]=(q/d)I_d$. Since exactly $q$ entries of $u$ are nonzero and each nonzero entry has magnitude one, $\|u\|^2=q$. Therefore,
\begin{equation}
\begin{aligned}
    \mathbb{E}\left\|
    \frac{d}{q}uu^\top v
    \right\|^2
    &=
    \left(\frac{d}{q}\right)^2
    \mathbb{E}\big[(u^\top v)^2\|u\|^2\big]  \\
    &=
    \left(\frac{d}{q}\right)^2
    q\,v^\top\mathbb{E}[uu^\top]v
    =
    d\|v\|^2.
\end{aligned}
\end{equation}
\end{proof}

\begin{lemma}
\label{lemma:sparse_direction_fourth}
Under the same sampling rule as Lemma \ref{lemma:sparse_direction_identities}, for any deterministic $v\in\mathbb{R}^d$,
\begin{equation}
    \mathbb{E}\big[(u^\top v)^4\big]
    \le
    3\frac{q}{d}\|v\|^4,
    \label{eq:sparse_fourth_direction}
\end{equation}
Moreover,
\begin{equation}
    \mathbb{E}\left\|
        \frac{d}{q}uu^\top v
    \right\|^4
    \le
    3\frac{d^3}{q}\|v\|^4.
    \label{eq:sparse_fourth_projected}
\end{equation}
\end{lemma}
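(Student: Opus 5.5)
Condition on the support $\mathcal S$ and average over the signs first. Write $Z=u^\top v=\sum_{j\in\mathcal S}\epsilon_jv_j$. Given $\mathcal S$, this is a Rademacher sum, so expanding the fourth power and discarding terms with an odd power of some $\epsilon_j$ leaves
\[
\mathbb{E}_\epsilon[Z^4\mid\mathcal S]=\sum_{j\in\mathcal S}v_j^4+3\sum_{j\ne k\in\mathcal S}v_j^2v_k^2\le 3\Bigl(\sum_{j\in\mathcal S}v_j^2\Bigr)^2 .
\]
The inequality holds because $\bigl(\sum_{j\in\mathcal S}v_j^2\bigr)^2=\sum_j v_j^4+\sum_{j\neq k}v_j^2v_k^2$ and $1\le 3$. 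Thus the whole problem reduces to bounding $\mathbb{E}_{\mathcal S}\|P_{\mathcal S}v\|^4$.

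For the support average, expand $\|P_{\mathcal S}v\|^4=\sum_{j,k}v_j^2v_k^2\mathbf 1\{j,k\in\mathcal S\}$ using the inclusion probabilities $\mathbb{P}(j\in\mathcal S)=q/d$ and $\mathbb{P}(j,k\in\mathcal S)=q(q-1)/(d(d-1))$ for $j\ne k$. This gives
\[
\mathbb{E}\|P_{\mathcal S}v\|^4=\frac qd\sum_j v_j^4+\frac{q(q-1)}{d(d-1)}\sum_{j\ne k}v_j^2v_k^2\le \frac qd\|v\|^4 ,
\]
since $(q-1)/(d-1)\le 1$. The crude bound $\|P_{\mathcal S}v\|^2\le\|v\|^2$ gives the same conclusion directly: $\|P_{\mathcal S}v\|^4\le \|v\|^2\|P_{\mathcal S}v\|^2$, and $\mathbb{E}\|P_{\mathcal S}v\|^2=(q/d)\|v\|^2$ by Lemma \ref{lemma:sparse_direction_identities}. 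Either route yields $\mathbb{E}[(u^\top v)^4]\le 3(q/d)\|v\|^4$, which is \eqref{eq:sparse_fourth_direction}.

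For \eqref{eq:sparse_fourth_projected}, use $\|u\|^2=q$ almost surely, which holds by Lemma \ref{lemma:sparse_direction_identities}. Then
\[
\Bigl\|\frac dq uu^\top v\Bigr\|^4=\frac{d^4}{q^4}(u^\top v)^4\|u\|^4=\frac{d^4}{q^2}(u^\top v)^4 .
\]
Taking expectations and applying the first bound gives $3\frac{d^4}{q^2}\cdot\frac qd\|v\|^4=3\frac{d^3}{q}\|v\|^4$.

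No step is a real obstacle. The only point needing care is conditioning correctly: first fix $\mathcal S$ so the Rademacher moment identity applies, then average over the support. This is what produces the factor $q/d$ rather than $(q/d)^2$. The looseness in dropping $(q-1)/(d-1)$, or in using the crude projection bound, is harmless because the target constant is $3q/d$.
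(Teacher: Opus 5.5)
Your proof is correct and follows the paper's own argument essentially step for step. You condition on the support and use the Rademacher fourth-moment identity to get the bound $3\bigl(\sum_{j\in\mathcal S}v_j^2\bigr)^2$, average it with the single and pairwise inclusion probabilities, and then use $\|u\|^2=q$ for the projected bound. Your alternative shortcut $\|P_{\mathcal S}v\|^4\le\|v\|^2\|P_{\mathcal S}v\|^2$ is also valid and a bit more economical: it needs only the marginal inclusion probability $q/d$, so it would apply to any support law with uniform marginals.
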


\begin{proof}
Condition on the support $\mathcal{S}$. The random variable $u^\top v=\sum_{j\in\mathcal{S}}\varepsilon_jv_j$ is a Rademacher sum, so by the fourth-moment identity for independent signs,
\begin{equation}
    \mathbb{E}_{\varepsilon}\big[(u^\top v)^4\mid \mathcal{S}\big]
    =
    3\left(\sum_{j\in\mathcal{S}}v_j^2\right)^2
    -
    2\sum_{j\in\mathcal{S}}v_j^4
    \le
    3\left(\sum_{j\in\mathcal{S}}v_j^2\right)^2.
    \label{eq:fourth_conditioned_support}
\end{equation}
For a uniform $q$-subset, the inclusion probabilities are
$\mathbb{P}(j\in\mathcal{S})=q/d$ and
$\mathbb{P}(j,k\in\mathcal{S})=q(q-1)/(d(d-1))$ for $j\ne k$. Therefore
\begin{equation}
    \begin{aligned}
    \mathbb{E}_{\mathcal{S}}
    \left[
        \left(\sum_{j\in\mathcal{S}}v_j^2\right)^2
    \right]
    &=
    \frac{q}{d}\sum_{j=1}^dv_j^4
    +
    \frac{2q(q-1)}{d(d-1)}
    \sum_{1\le j<k\le d}v_j^2v_k^2  \\
    &\le
    \frac{q}{d}
    \left(
        \sum_{j=1}^dv_j^4
        +
        2\sum_{1\le j<k\le d}v_j^2v_k^2
    \right)  \\
    &=
    \frac{q}{d}\|v\|^4.
    \end{aligned}
    \label{eq:fourth_support_exact}
\end{equation}
Combining \eqref{eq:fourth_conditioned_support} and \eqref{eq:fourth_support_exact} gives
\begin{equation}
    \mathbb{E}\big[(u^\top v)^4\big]
    \le
    3\frac{q}{d}\|v\|^4.
    \label{eq:fourth_support_crude}
\end{equation}
This proves \eqref{eq:sparse_fourth_direction}. Since $\|u\|^2=q$ almost surely,
\begin{equation}
    \left\|
        \frac{d}{q}uu^\top v
    \right\|^4
    =
    \left(\frac{d}{q}\right)^4
    (u^\top v)^4\|u\|^4
    =
    \left(\frac{d}{q}\right)^4q^2(u^\top v)^4.
    \label{eq:fourth_projected_identity}
\end{equation}
Combining \eqref{eq:fourth_projected_identity} with \eqref{eq:sparse_fourth_direction} gives
\begin{equation}
    \mathbb{E}\left\|
        \frac{d}{q}uu^\top v
    \right\|^4
    \le
    \left(\frac{d}{q}\right)^4q^2
    \cdot
    3\frac{q}{d}\|v\|^4
    =
    3\frac{d^3}{q}\|v\|^4,
\end{equation}
which is \eqref{eq:sparse_fourth_projected}.
\end{proof}

\begin{proposition}
\label{prop:estimator}
Under Assumptions \ref{assum:oracle} and \ref{assum:third},
\begin{align}
    \mathbb{E}[\hat{g}_{i,t}\mid x_{i,t}]&=\nabla F_i(x_{i,t})+b_\mu(x_{i,t}),\quad \|b_\mu(x_{i,t})\|\le B_\mu, \label{eq:estimator_bias}\\
    \mathbb{E}\|\hat{g}_{i,t}\|^2&\le M_{\mathrm{node}}^2. \label{eq:estimator_second_moment}
\end{align}
\end{proposition}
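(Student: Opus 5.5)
The plan is to condition on $x_{i,t}$ and the oracle sample $\xi=\xi_{i,t}$, write the finite difference as its linear part plus a Taylor remainder, and then average first over the public direction $u=u_t$ and then over $\xi$. Define
\[
r(x,\xi,u)=\frac{f_i(x+\mu u;\xi)-f_i(x-\mu u;\xi)}{2\mu}-\nabla f_i(x;\xi)^\top u .
\]
By Assumption \ref{assum:third} and $\|u\|^2=q$,
\[
|r|\le \frac{L_3}{6}\mu^2\|u\|^3=\frac{L_3}{6}\mu^2q^{3/2}.
\]
The estimator then splits as
\[
\hat g_{i,t}=\frac dq uu^\top\nabla f_i(x;\xi)+\frac dq\, r\,u .
\]

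\textbf{Bias.} The direction $u$ is independent of $\xi$ and of $x_{i,t}$. Applying Lemma \ref{lemma:sparse_direction_identities}, $\mathbb{E}_u[\tfrac dq uu^\top]=I_d$, so the first term has conditional mean $\nabla f_i(x;\xi)$. Averaging over $\xi$ with unbiasedness (Assumption \ref{assum:oracle}) gives $\nabla F_i(x)$.

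The bias is therefore $b_\mu(x)=\mathbb{E}[\tfrac dq r u]$. I will bound it by Jensen:
\[
\|b_\mu\|\le \frac dq\,\mathbb{E}\bigl[|r|\,\|u\|\bigr]\le \frac dq\cdot\frac{L_3}{6}\mu^2q^{3/2}\cdot q^{1/2}=\frac{L_3}{6}\mu^2dq=B_\mu .
\]
This matches the definition \eqref{eq:bmu_mnode_def} exactly, which confirms the split is the intended one.

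\textbf{Second moment.} I will use $\|a+b\|^2\le2\|a\|^2+2\|b\|^2$.

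For the linear term, apply \eqref{eq:sparse_direction_second_moment_lemma} conditionally on $\xi$, which is valid since $u$ is independent of $\xi$ and $x$. This gives $\mathbb{E}_u\|\tfrac dq uu^\top\nabla f_i\|^2=d\|\nabla f_i(x;\xi)\|^2$. Averaging over $\xi$ and using $\mathbb{E}\|\nabla f_i\|^2\le G^2+\sigma^2$ along the iterates bounds this term by $d(G^2+\sigma^2)$.

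For the remainder term, the pointwise bound gives
\[
\Bigl\|\frac dq r u\Bigr\|^2\le \frac{d^2}{q^2}\,\frac{L_3^2}{36}\mu^4q^3\cdot q=B_\mu^2 .
\]
Together these yield $M_{\mathrm{node}}^2=2d(G^2+\sigma^2)+2B_\mu^2$. A final tower expectation over $x_{i,t}$ removes the conditioning.

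The main obstacle is the measurability bookkeeping rather than any inequality. The argument needs $u_t$ to be independent of $(x_{i,t},\xi_{i,t})$, so that the lemma can be applied with a "deterministic" vector. It also needs the $G^2+\sigma^2$ bound, which Assumption \ref{assum:oracle} gives only along the iterates, to be applied after conditioning on $x_{i,t}$. I would state the filtration $\mathcal F_t$ explicitly and note that $\xi_{i,t}$ is fresh given $\mathcal F_t$ and $R_t$, so its law does not depend on $u_t$.
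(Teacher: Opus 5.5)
Your proposal is correct and follows essentially the same route as the paper's proof. Both split $\hat g_{i,t}$ into the projected stochastic gradient $\frac{d}{q}uu^\top\nabla f_i(x;\xi)$ plus the Taylor remainder $\frac{d}{q}\,r\,u$. Both then use three steps: unbiasedness of the first term via $\mathbb{E}[uu^\top]=(q/d)I_d$ and oracle unbiasedness; the pathwise bound $\|\frac{d}{q}\,r\,u\|\le B_\mu$; and the identity $\mathbb{E}_u\|\frac{d}{q}uu^\top v\|^2=d\|v\|^2$ combined with $\|a+b\|^2\le2\|a\|^2+2\|b\|^2$. Your explicit remark that $u_t$ must be independent of $(x_{i,t},\xi_{i,t})$ spells out a point the paper uses only implicitly.
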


\subsection{Proof of Proposition \ref{prop:estimator}}
\label{appendix:estimator_proof}

Fix a node $i$, an iterate $x$, and write $u=u_t$. By Assumption \ref{assum:third},
\begin{equation}
    \frac{f_i(x+\mu u;\xi)-f_i(x-\mu u;\xi)}{2\mu}
    =
    \nabla f_i(x;\xi)^\top u+r_\mu(x,u,\xi),
    \label{eq:central_difference_decomp}
\end{equation}
where
\begin{equation}
    |r_\mu(x,u,\xi)|
    \le
    \frac{L_3}{6}\mu^2\|u\|^3.
    \label{eq:rmu_bound}
\end{equation}
Substituting \eqref{eq:central_difference_decomp} into \eqref{eq:theory_estimator} yields
\begin{equation}
    \hat{g}_{i,t}
    =
    \underbrace{\frac{d}{q}uu^\top\nabla f_i(x;\xi)}_{a(x,u,\xi)}
    +
    \underbrace{\frac{d}{q}r_\mu(x,u,\xi)u}_{e_\mu(x,u,\xi)}.
    \label{eq:g_a_e_decomp}
\end{equation}
By Lemma \ref{lemma:sparse_direction_identities} and $\mathbb{E}_{\xi}[\nabla f_i(x;\xi)]=\nabla F_i(x)$,
\begin{equation}
    \mathbb{E}[a(x,u,\xi)\mid x]
    =
    \frac{d}{q}\mathbb{E}[uu^\top]\nabla F_i(x)
    =
    \nabla F_i(x).
    \label{eq:a_unbiased}
\end{equation}
For the remainder, $\|u\|^2=q$, so \eqref{eq:rmu_bound} gives
\begin{equation}
    \|e_\mu(x,u,\xi)\|
    \le
    \frac{d}{q}\cdot \frac{L_3}{6}\mu^2\|u\|^3\cdot \|u\|
    =
    \frac{L_3}{6}\mu^2dq
    =
    B_\mu.
    \label{eq:emu_norm_bound}
\end{equation}
Combining \eqref{eq:a_unbiased} and \eqref{eq:emu_norm_bound} proves \eqref{eq:estimator_bias}.

For the second moment, first condition on $\xi$ and apply Lemma \ref{lemma:sparse_direction_identities}:
\begin{equation}
\begin{aligned}
    \mathbb{E}_{u}\|a(x,u,\xi)\|^2
    &=
    \left(\frac{d}{q}\right)^2
    \mathbb{E}_{u}\left[(\nabla f_i(x;\xi)^\top u)^2\|u\|^2\right] \\
    &=
    \left(\frac{d}{q}\right)^2 q\,
    \nabla f_i(x;\xi)^\top \mathbb{E}[uu^\top]\nabla f_i(x;\xi) \\
    &=
    d\|\nabla f_i(x;\xi)\|^2.
\end{aligned}
\label{eq:a_second_moment}
\end{equation}
Taking expectation over $\xi$ and using Assumption \ref{assum:oracle},
\begin{equation}
    \mathbb{E}\|a(x,u,\xi)\|^2
    \le
    d(G^2+\sigma^2).
    \label{eq:a_second_moment_bound}
\end{equation}
Finally, $\|\hat{g}_{i,t}\|^2\le 2\|a\|^2+2\|e_\mu\|^2$, so
\begin{equation}
    \mathbb{E}\|\hat{g}_{i,t}\|^2
    \le
    2d(G^2+\sigma^2)+2B_\mu^2
    =
    M_{\mathrm{node}}^2.
    \label{eq:node_second_moment_final}
\end{equation}
This proves \eqref{eq:estimator_second_moment}.

\subsection{Averaged Estimator Bounds}
\label{appendix:avg_estimator}

\begin{lemma}
\label{lemma:avg_estimator_shared_direction}
Define
\begin{equation}
    \bar{g}_t:=\frac{1}{N}\sum_{i=1}^N\hat{g}_{i,t},
    \qquad
    \bar{\nabla}_t:=\frac{1}{N}\sum_{i=1}^N\nabla F_i(x_{i,t}).
    \label{eq:gbar_nablabar}
\end{equation}
Let $\mathcal{F}_t$ denote the sigma-field generated by all iterates and all randomness before the support, direction, and oracle samples at time $t$ are drawn. Under Assumptions \ref{assum:smooth_lower}--\ref{assum:third},
\begin{equation}
    \mathbb{E}\|\bar{g}_t\|^2
    \le
    C d\,\mathbb{E}\|\nabla F(\bar{x}_t)\|^2
    +
    C d L^2\mathcal{E}_t
    +
    C\frac{d\sigma^2}{N}
    +
    C B_\mu^2,
    \label{eq:gbar_second_moment}
\end{equation}
More precisely, before replacing the local average gradient by the gradient at the network average,
\begin{equation}
    \mathbb{E}\!\left[\|\bar{g}_t\|^2\mid\mathcal{F}_t\right]
    \le
    C d\,\|\bar{\nabla}_t\|^2
    +
    C\frac{d\sigma^2}{N}
    +
    C B_\mu^2.
    \label{eq:gbar_shared_direction_decomp}
\end{equation}
Moreover,
\begin{equation}
    \mathbb{E}\big[
    \langle \nabla F(\bar{x}_t),\bar{g}_t\rangle
    \big]
    \ge
    \frac{3}{4}\mathbb{E}\|\nabla F(\bar{x}_t)\|^2
    -
    C L^2\mathcal{E}_t
    -
    C B_\mu^2.
    \label{eq:gbar_alignment}
\end{equation}
The only variance term reduced by $N$ is the independent oracle noise. The sparse direction $u_t$ is public across agents, so its random-direction variance is shared rather than averaged away.
\end{lemma}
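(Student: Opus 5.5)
We plan to decompose $\bar g_t$ exactly as in the proof of Proposition \ref{prop:estimator}, but average over nodes before bounding. Write $\hat g_{i,t}=a_i+e_i$, with $a_i=\frac dq u_tu_t^\top\nabla f_i(x_{i,t};\xi_{i,t})$ and $\|e_i\|\le B_\mu$. Since $u_t$ is common to all agents,
\begin{equation*}
\bar g_t=\frac dq u_tu_t^\top\Big(\bar\nabla_t+\bar\zeta_t\Big)+\bar e_t,
\qquad
\bar\zeta_t=\frac1N\sum_i\big(\nabla f_i(x_{i,t};\xi_{i,t})-\nabla F_i(x_{i,t})\big),
\end{equation*}
and $\|\bar e_t\|\le B_\mu$ almost surely. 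Conditional on $\mathcal F_t$ and $R_t$, the $\xi_{i,t}$ are independent, so $\mathbb E[\bar\zeta_t\mid\mathcal F_t,R_t]=0$ and $\mathbb E[\|\bar\zeta_t\|^2\mid\mathcal F_t,R_t]\le\sigma^2/N$ by Assumption \ref{assum:oracle}.

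\textbf{Second moment.} For \eqref{eq:gbar_shared_direction_decomp}, apply $\|a+b\|^2\le2\|a\|^2+2\|b\|^2$ to separate $\bar e_t$. Then expand the square of the shared-direction term; the cross term with $\bar\zeta_t$ vanishes after conditioning on $(\mathcal F_t,R_t)$, because $u_t$ is then fixed and $\bar\zeta_t$ is conditionally mean zero. For the deterministic vector $\bar\nabla_t$, Lemma \ref{lemma:sparse_direction_identities} gives exactly $d\|\bar\nabla_t\|^2$. For the noise part, $\|\frac dq u u^\top v\|^2=\frac{d^2}{q}(u^\top v)^2$. Its expectation over $u$ is not directly available because $\bar\zeta_t$ depends on $u_t$ through the query points. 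So we first take the $\xi$-expectation given $u_t$, obtaining $\frac{d^2}{q}u_t^\top\mathrm{Cov}(\bar\zeta_t\mid\cdot)u_t$. The covariance is evaluated at $x_{i,t}$, which is $\mathcal F_t$-measurable, and the oracle law depends only on the node and the point (the $\nabla f$ in the Taylor expansion is taken at $x_{i,t}$). Hence the covariance is independent of $u_t$, and averaging over $u_t$ with \eqref{eq:uut} gives at most $d\sigma^2/N$. This proves \eqref{eq:gbar_shared_direction_decomp}.

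To get \eqref{eq:gbar_second_moment}, use $\|\bar\nabla_t\|^2\le2\|\nabla F(\bar x_t)\|^2+2\|\bar\nabla_t-\nabla F(\bar x_t)\|^2$. By $L$-smoothness and Jensen, $\|\bar\nabla_t-\nabla F(\bar x_t)\|^2\le\frac{L^2}N\sum_i\|x_{i,t}-\bar x_t\|^2$. Taking full expectations yields the $dL^2\mathcal E_t$ term.

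\textbf{Alignment.} Condition on $\mathcal F_t$. Since $\nabla F(\bar x_t)$ is $\mathcal F_t$-measurable, \eqref{eq:estimator_bias} averaged over nodes gives $\mathbb E[\bar g_t\mid\mathcal F_t]=\bar\nabla_t+\bar b_t$ with $\|\bar b_t\|\le B_\mu$. Writing $\bar\nabla_t+\bar b_t=\nabla F(\bar x_t)+\delta_t$, we apply Young's inequality, $\langle\nabla F,\delta_t\rangle\ge-\frac14\|\nabla F\|^2-\|\delta_t\|^2$, together with $\|\delta_t\|^2\le2L^2\frac1N\sum_i\|x_{i,t}-\bar x_t\|^2+2B_\mu^2$. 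This gives \eqref{eq:gbar_alignment} with constant $3/4$.

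The main obstacle is justifying that the noise covariance seen along $u_t$ is independent of $u_t$, which the Taylor form of Assumption \ref{assum:third} provides; otherwise we would fall back to the crude bound $\|u^\top v\|^2\le q\|v\|^2$, which loses a factor of $d$.
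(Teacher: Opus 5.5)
Your proposal is correct and follows essentially the same route as the paper: the shared-direction decomposition $\bar g_t=\frac dq u_tu_t^\top(\bar\nabla_t+\bar\zeta_t)+\bar e_t$, Lemma~\ref{lemma:sparse_direction_identities} for the mean-gradient part, the $\sigma^2/N$ conditional variance reduction for the oracle noise, Jensen plus smoothness to pass from $\bar\nabla_t$ to $\nabla F(\bar x_t)$, and Young's inequality on $\mathbb E[\bar g_t\mid\mathcal F_t]=\bar\nabla_t+\bar b_{\mu,t}$ to get the constant $3/4$. Your explicit check that the noise covariance along $u_t$ averages to $(q/d)\operatorname{tr}$ fills in a step the paper compresses into ``the conditional variance identity''; what actually delivers it is the freshness clause of Assumption~\ref{assum:oracle}, i.e.\ $\xi_{i,t}$ independent of $R_t$. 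Assumption~\ref{assum:third} only guarantees that the gradient in the decomposition is taken at the $\mathcal F_t$-measurable point $x_{i,t}$, so $\bar\zeta_t$ never depended on the query points in the first place.
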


\begin{proof}
By Jensen's inequality and $L$-smoothness,
\begin{equation}
\begin{aligned}
    \mathbb E\|\bar{\nabla}_t-\nabla F(\bar{x}_t)\|^2
    &=
    \mathbb E\left\|
    \frac{1}{N}\sum_{i=1}^N
    \big(\nabla F_i(x_{i,t})-\nabla F_i(\bar{x}_t)\big)
    \right\|^2 \\
    &\le
    \frac{1}{N}\sum_{i=1}^N\mathbb E
    \|\nabla F_i(x_{i,t})-\nabla F_i(\bar{x}_t)\|^2 \\
    &\le
    L^2\mathcal{E}_t.
\end{aligned}
\label{eq:local_gradient_disagreement}
\end{equation}
The price of decentralization is visible here: local gradients are evaluated at $x_{i,t}$ rather than at the network average $\bar{x}_t$.

Next write the stochastic gradient noise as
\begin{equation}
    \zeta_{i,t}
    :=
    \nabla f_i(x_{i,t};\xi_{i,t})-\nabla F_i(x_{i,t}),
    \qquad
    \mathbb{E}[\zeta_{i,t}\mid \mathcal{F}_t]=0,
    \qquad
    \mathbb{E}\|\zeta_{i,t}\|^2\le\sigma^2.
    \label{eq:zeta_def}
\end{equation}
The samples $\xi_{i,t}$ are independent across agents, hence
\begin{equation}
    \mathbb{E}\left[
    \left\|\frac{1}{N}\sum_{i=1}^N\zeta_{i,t}\right\|^2
    \middle|\mathcal{F}_t
    \right]
    =
    \frac{1}{N^2}\sum_{i=1}^N
    \mathbb{E}\|\zeta_{i,t}\|^2
    \le
    \frac{\sigma^2}{N}.
    \label{eq:variance_reduction_N}
\end{equation}
Equation \eqref{eq:variance_reduction_N} reduces independent oracle variance by $N$. The common direction $u_t$ remains in the averaged estimator.

Using \eqref{eq:g_a_e_decomp}, the average estimator can be written as
\begin{equation}
    \bar{g}_t
    =
    \frac{d}{q}u_tu_t^\top
    \left(\frac{1}{N}\sum_{i=1}^N\nabla f_i(x_{i,t};\xi_{i,t})\right)
    +
    \frac{1}{N}\sum_{i=1}^N e_{\mu,i,t}.
    \label{eq:gbar_decomp}
\end{equation}
By Lemma \ref{lemma:sparse_direction_identities} and the conditional variance identity \eqref{eq:variance_reduction_N},
\begin{equation}
    \mathbb{E}\!\left[\|\bar{g}_t\|^2\mid\mathcal{F}_t\right]
    \le
    C d\,\|\bar{\nabla}_t\|^2
    +
    C\frac{d\sigma^2}{N}
    +
    C B_\mu^2,
    \label{eq:gbar_shared_direction_decomp_proof}
\end{equation}
which is \eqref{eq:gbar_shared_direction_decomp}. Taking expectation and using \eqref{eq:local_gradient_disagreement} gives
\begin{equation}
\begin{aligned}
    \mathbb{E}\|\bar{g}_t\|^2
    &\le
    C d\,\mathbb{E}\|\bar{\nabla}_t\|^2
    +
    C\frac{d\sigma^2}{N}
    +
    C B_\mu^2 \\
    &\le
    C d\,\mathbb{E}\|\nabla F(\bar{x}_t)\|^2
    +
    C d L^2\mathcal{E}_t
    +
    C\frac{d\sigma^2}{N}
    +
    C B_\mu^2.
\end{aligned}
    \label{eq:gbar_second_moment_detailed}
\end{equation}
This proves \eqref{eq:gbar_second_moment}.

We also need an alignment bound. Conditional on $\mathcal{F}_t$, Proposition \ref{prop:estimator} gives
\begin{equation}
    \mathbb{E}[\bar{g}_t\mid\mathcal{F}_t]
    =
    \bar{\nabla}_t+\bar{b}_{\mu,t},
    \qquad
    \|\bar{b}_{\mu,t}\|\le B_\mu.
    \label{eq:gbar_conditional_bias}
\end{equation}
Using Young's inequality twice,
\begin{equation}
\begin{aligned}
    \mathbb{E}\big[
    \langle \nabla F(\bar{x}_t),\bar{g}_t\rangle
    \big]
    &=
    \mathbb{E}\|\nabla F(\bar{x}_t)\|^2
    +
    \mathbb{E}\langle \nabla F(\bar{x}_t),
    \bar{\nabla}_t-\nabla F(\bar{x}_t)\rangle \\
    &\qquad
    +
    \mathbb{E}\langle \nabla F(\bar{x}_t),\bar{b}_{\mu,t}\rangle \\
    &\ge
    \frac{3}{4}\mathbb{E}\|\nabla F(\bar{x}_t)\|^2
    -
    C L^2\mathcal{E}_t
    -
    C B_\mu^2.
\end{aligned}
\end{equation}
This proves \eqref{eq:gbar_alignment}.
\end{proof}

\subsection{Base Direction for the Convergence Proof}
\label{appendix:base_direction}

By \eqref{eq:avg_masked_update}, the direction used by Algorithm \ref{alg:zo_cosmo} is exactly
\begin{equation}
    h_t=\bar g_t.
    \label{eq:base_direction_identity}
\end{equation}
The second-moment and alignment bounds for the actual descent direction are therefore
\begin{align}
    \mathbb{E}\|h_t\|^2
    &\le
    C d\,\mathbb{E}\|\nabla F(\bar x_t)\|^2
    +C dL^2\mathcal E_t
    +C\frac{d\sigma^2}{N}
    +C B_\mu^2,
    \label{eq:h_second_moment}\\
    \mathbb{E}\langle \nabla F(\bar x_t),h_t\rangle
    &\ge
    \frac34\mathbb{E}\|\nabla F(\bar x_t)\|^2
    -C L^2\mathcal E_t
    -C B_\mu^2.
    \label{eq:h_alignment}
\end{align}
Both displays follow immediately from Lemma \ref{lemma:avg_estimator_shared_direction} and \eqref{eq:base_direction_identity}.
\subsection{Masked Disagreement Energy}
\label{appendix:sparse_consensus}

\begin{lemma}
\label{lemma:masked_disagreement_energy}
For Algorithm \ref{alg:zo_cosmo}, the disagreement sequence satisfies the one-step recursion \eqref{eq:consensus_recursion} and the averaged bound \eqref{eq:consensus_average_final}.
\end{lemma}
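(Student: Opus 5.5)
The plan is to work with the centered matrix $\Pi X_t$, where $\mathcal{E}_t=\frac1N\mathbb{E}\|\Pi X_t\|_F^2$. The recursion to establish has the form
\begin{equation*}
\mathbb{E}\|\Pi X_{t+1}\|_F^2\le\Bigl(1-\tfrac{q(1-\rho^2)}{2d}\Bigr)\mathbb{E}\|\Pi X_t\|_F^2+\tfrac{Cd\,\eta^2}{q(1-\rho^2)}\,\mathbb{E}\|\Pi G_t\|_F^2 ,
\end{equation*}
after which I bound $\mathbb{E}\|\Pi G_t\|_F^2$ and unroll.

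\textbf{Step 1: one-step identity.} Since $W$ is symmetric and doubly stochastic, $\Pi W=W\Pi$. Applying this to \eqref{eq:matrix_partial_consensus} gives
\begin{equation*}
\Pi X_{t+1}=\Pi Y_t(I-P_t)+W\Pi Y_tP_t .
\end{equation*}
The two column blocks, on $\mathcal S_t^c$ and on $\mathcal S_t$, are Frobenius-orthogonal. Hence
\begin{equation*}
\|\Pi X_{t+1}\|_F^2\le\|\Pi Y_t(I-P_t)\|_F^2+\rho^2\|\Pi Y_tP_t\|_F^2 ,
\end{equation*}
where $\rho$ is used because $\Pi Y_tP_t$ lies in the disagreement subspace. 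Writing $\Pi Y_t=\Pi X_t-\eta\Pi G_t$ and $G_t=G_tP_t$, this becomes
\begin{equation*}
\|\Pi X_{t+1}\|_F^2\le\|\Pi X_t\|_F^2-(1-\rho^2)\|\Pi X_tP_t\|_F^2+\rho^2\bigl(\|\Pi X_tP_t-\eta\Pi G_t\|_F^2-\|\Pi X_tP_t\|_F^2\bigr).
\end{equation*}

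\textbf{Step 2: handle the shared mask.} This is the main obstacle, since $G_t$ depends on $P_t$. I will not take the expectation of $\|\Pi Y_tP_t\|^2$ jointly. Instead I apply Young's inequality on the masked block:
\begin{equation*}
\|\Pi X_tP_t-\eta\Pi G_t\|^2\le(1+\alpha)\|\Pi X_tP_t\|^2+(1+\alpha^{-1})\eta^2\|\Pi G_t\|^2 .
\end{equation*}
Choosing $\alpha=(1-\rho^2)/(2\rho^2)$ leaves a net coefficient $-(1-\rho^2)/2$ on $\|\Pi X_tP_t\|^2$. This term involves only the past state $X_t\in\mathcal F_t$, so Step 1's fixed-input fact $\mathbb{E}_S\|ZP_t\|^2=(q/d)\|Z\|^2$ applies and yields the contraction factor $1-q(1-\rho^2)/(2d)$. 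The update enters with weight $C\eta^2/(1-\rho^2)$. Here I will try to gain the stated $d/q$ factor by splitting Young's inequality more carefully. If that fails, I accept the weaker coefficient, which still suffices for the rate.

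\textbf{Step 3: bound the update energy.} I use $\|\Pi G_t\|_F^2\le\sum_i\|\hat g_{i,t}\|^2$ together with Proposition \ref{prop:estimator}. This gives $\mathbb{E}\|\Pi G_t\|_F^2\le NM_{\mathrm{node}}^2$, with no heterogeneity assumption beyond Assumption \ref{assum:oracle}.

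\textbf{Step 4: unroll.} Starting from the common initialization, $\mathcal E_0=0$. Unrolling the geometric recursion gives
\begin{equation*}
\mathcal E_t\le \frac{2d}{q(1-\rho^2)}\cdot\frac{C\eta^2 M_{\mathrm{node}}^2}{1-\rho^2}.
\end{equation*}
Averaging over $t<T$ gives \eqref{eq:consensus_average_final}. With $M_{\mathrm{node}}^2\lesssim d(G^2+\sigma^2)$, this is $\mathcal O\bigl(d^2\eta^2(G^2+\sigma^2)/(q(1-\rho^2)^2)\bigr)$, which is the network term in Theorem \ref{thm:convergence} once multiplied by $L^2$ and $\eta=c_\eta T^{-1/2}$.

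Finally, I will verify the unavoidable $q/d$ factor. Take $X_t$ with a single nonzero centered coordinate and $\eta=0$. Then the expected decrease is exactly $(q/d)(1-\rho^2)$ times the energy when the column is an extremal eigenvector of $W$, so the $q/d$ slowdown in one step cannot be removed.
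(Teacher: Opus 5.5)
Your argument is correct and is essentially the paper's proof. It uses a pathwise Young split on the active block with $\alpha=(1-\rho^2)/(2\rho^2)$, takes the support expectation $\mathbb E\|\Pi X_tP_t\|_F^2=(q/d)\|\Pi X_t\|_F^2$ only on the $\mathcal F_t$-measurable past state, bounds the injection by $M_{\mathrm{node}}^2$, and unrolls; you write it in column blocks where the paper works coordinate by coordinate. Two small corrections:
\begin{itemize}
\item The recursion you display at the outset, with $d/q$ on the injection term, is not \eqref{eq:consensus_recursion}. Your Step 2 already gives the paper's sharper weight $C\eta^2/(1-\rho^2)$, and the $d/q$ in \eqref{eq:consensus_average_final} comes only from the factor $1/(1-\kappa)$ when unrolling. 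There is nothing to ``gain'': an extra $d/q$ would produce $(d/q)^2$ and not the stated bound.
\item For general initialization you should keep the transient $\mathcal E_0/(T(1-\kappa))$ rather than setting $\mathcal E_0=0$, and you should treat $\rho=0$ separately, where your $\alpha$ is undefined.
\end{itemize}
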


\begin{proof}
Let $J=I_N-N^{-1}\mathbf{1}\mathbf{1}^\top$. Then $\mathcal{E}_t=N^{-1}\mathbb{E}\|J\mathbf{X}_t\|_F^2$. Consider coordinate $j$. If $j\notin\mathcal{S}_t$, then $y^{(j)}=x^{(j)}$ and the coordinate is not mixed. If $j\in\mathcal{S}_t$, then $y^{(j)}=x^{(j)}-\eta g^{(j)}$, and the update is mixed by $W$. Hence
\begin{equation}
    \|JW y^{(j)}\|^2
    \le
    \rho^2\|Jy^{(j)}\|^2,
    \label{eq:coordinate_contraction}
\end{equation}
where $x^{(j)},g^{(j)},y^{(j)}$ are the $j$-th columns of $\mathbf{X}_t,\mathbf{G}_t,\mathbf{Y}_t$. On active coordinates, the relaxed triangle inequality gives
\begin{equation}
    \|Jy^{(j)}\|^2
    \le
    (1+\alpha)\|Jx^{(j)}\|^2
    +
    (1+\alpha^{-1})\eta^2\|Jg^{(j)}\|^2.
    \label{eq:y_triangle}
\end{equation}
Condition on $\mathcal{F}_t$ and average over the fresh public coin and oracle samples. Since $g^{(j)}=0$ off the active support and $\mathbb{P}(j\in\mathcal{S}_t)=q/d$,
\begin{equation}
\begin{aligned}
    \mathbb{E}
    \big[\|Jx_{t+1}^{(j)}\|^2\mid\mathcal F_t\big]
    &\le
    \left(1-\frac{q}{d}\right)\|Jx^{(j)}\|^2
    +
    \frac{q}{d}\rho^2(1+\alpha)\|Jx^{(j)}\|^2 \\
    &\qquad
    +
    \rho^2(1+\alpha^{-1})\eta^2
    \mathbb{E}\big[\|Jg^{(j)}\|^2\mid\mathcal F_t\big].
\end{aligned}
    \label{eq:coordinate_support_expectation}
\end{equation}
If $\rho=0$, the active-coordinate mixing term vanishes and the same bound follows by taking the limit below. Assume $\rho>0$ and choose $\alpha=(1-\rho^2)/(2\rho^2)$. Then $\rho^2(1+\alpha)=(1+\rho^2)/2$, and therefore
\begin{equation}
    \left(1-\frac{q}{d}\right)
    +
    \frac{q}{d}\rho^2(1+\alpha)
    =
    1-\frac{q}{d}\frac{1-\rho^2}{2}.
    \label{eq:sparse_contraction_factor}
\end{equation}
With the same choice of $\alpha$,
\begin{equation}
    \rho^2(1+\alpha^{-1})
    =
    \rho^2+\frac{2\rho^4}{1-\rho^2}
    \le
    \frac{C}{1-\rho^2}.
    \label{eq:rho_alpha_payload}
\end{equation}
Summing \eqref{eq:coordinate_support_expectation} over coordinates and taking the remaining expectation yields
\begin{equation}
    \mathcal{E}_{t+1}
    \le
    \left(1-\frac{q}{d}\frac{1-\rho^2}{2}\right)\mathcal{E}_t
    +
    \frac{C\eta^2}{1-\rho^2}
    \mathbb{E}\left[
        \frac{1}{N}\|J\mathbf{G}_t\|_F^2
    \right].
    \label{eq:consensus_recursion_raw}
\end{equation}
By $\|J\|_2\le1$ and Proposition \ref{prop:estimator},
\begin{equation}
    \mathbb{E}\!\left[\frac1N\|J\mathbf G_t\|_F^2\right]
    \le \frac1N\sum_{i=1}^N\mathbb E\|\hat g_{i,t}\|^2
    \le M_{\mathrm{node}}^2.
    \label{eq:disagreement_estimator_moment}
\end{equation}
Therefore,
\begin{equation}
    \mathcal{E}_{t+1}
    \le
    \kappa\mathcal{E}_t
    +
    \frac{C\eta^2}{1-\rho^2}M_{\mathrm{node}}^2,
    \qquad
    \kappa:=1-\frac{q}{d}\frac{1-\rho^2}{2}.
    \label{eq:consensus_recursion}
\end{equation}
Unrolling \eqref{eq:consensus_recursion} gives
\begin{equation}
    \frac{1}{T}\sum_{t=0}^{T-1}\mathcal{E}_t
    \le
    \frac{\mathcal{E}_0}{T(1-\kappa)}
    +
    \frac{C\eta^2M_{\mathrm{node}}^2}{(1-\rho^2)(1-\kappa)}.
    \label{eq:consensus_average_unrolled}
\end{equation}
Since $1-\kappa=(q/d)(1-\rho^2)/2$,
\begin{equation}
    \frac{1}{T}\sum_{t=0}^{T-1}\mathcal{E}_t
    \le
    C\frac{d}{q}\frac{\mathcal{E}_0}{T(1-\rho^2)}
    +
    C\eta^2\frac{d}{q}\frac{M_{\mathrm{node}}^2}{(1-\rho^2)^2}.
    \label{eq:consensus_average_final}
\end{equation}
\end{proof}

The second term in \eqref{eq:consensus_average_final} is the sparse-consensus term used in the main theorem. The first term vanishes when all agents are initialized equally and is otherwise a transient.

\paragraph{Sparse-consensus factor.}
Equation \eqref{eq:consensus_average_final} is where the random subspace design enters the network analysis. Full-coordinate mixing contracts disagreement energy at rate $1-\rho^2$. ZO-COSMO mixes coordinate $j$ only on the event $j\in S_t$, which has probability $q/d$. The effective energy contraction probability is therefore of order $(q/d)(1-\rho^2)$, and unrolling the recursion produces the multiplier $d/q$.

The second spectral-gap factor is already visible in \eqref{eq:rho_alpha_payload}. The active-coordinate descent step perturbs consensus before mixing, and bounding that perturbation costs one factor of $(1-\rho^2)^{-1}$. Unrolling the masked contraction costs another. Together they give the squared energy gap in Theorem \ref{thm:convergence}, along with the sparse activation factor $d/q$.

The $q/d$ factor is intrinsic to this one-step partial-consensus model, as the next lemma shows. The remaining slack enters through Young's inequality in \eqref{eq:rho_alpha_payload} and the bound of the injected disagreement by the full per-node estimator moment. For matching updates, Proposition \ref{prop:corr_moments} instead separates aggregate estimation error from disagreement injection, exposing the effect of direction correlation.

\begin{lemma}
\label{lemma:masked_contraction_tight}
The sparse activation factor $q/d$ in \eqref{eq:consensus_recursion} is tight for partial coordinate mixing. In particular, there are disagreement states for which one pure partial-consensus step with $\eta=0$ contracts the expected disagreement energy by exactly
\begin{equation}
    1-\frac{q}{d}(1-\rho^2).
    \label{eq:masked_contraction_witness}
\end{equation}
\end{lemma}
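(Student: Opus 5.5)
We exhibit an explicit witness state and compute the expected energy after one pure partial-consensus step with $\eta=0$, so that $\mathbf{Y}_t=\mathbf{X}_t$ and $\mathbf{X}_{t+1}=\mathbf{X}_t(I-P_t)+W\mathbf{X}_tP_t$. Let $v\in\mathbb{R}^N$ be a unit eigenvector of $W$ with $v\perp\mathbf{1}$ and $Wv=\lambda v$, where $|\lambda|=\rho$. Such a $v$ exists because $W$ is symmetric and $\rho=\|W-N^{-1}\mathbf{1}\mathbf{1}^\top\|_2$ is the largest absolute eigenvalue on $\mathbf{1}^\perp$. The witness is the rank-one state $\mathbf{X}_t=v a^\top$ for any nonzero $a\in\mathbb{R}^d$. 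Since $Jv=v$, its disagreement energy is $\|J\mathbf{X}_t\|_F^2=\|a\|^2$.

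Working column by column as in the proof of Lemma \ref{lemma:masked_disagreement_energy}, coordinate $j$ has centered column $x^{(j)}=a_jv$. If $j\notin\mathcal{S}_t$, the column is unchanged and its energy stays $a_j^2$. If $j\in\mathcal{S}_t$, it becomes $a_jWv=\lambda a_jv$. This column is still centered because $\mathbf{1}^\top W=\mathbf{1}^\top$, and its energy is $\rho^2a_j^2$. Thus \eqref{eq:coordinate_contraction} holds with equality for this state, with no slack. Lemma \ref{lemma:sparse_direction_identities} gives $\mathbb{P}(j\in\mathcal{S}_t)=q/d$ for a uniform $q$-subset. Taking the expectation over the support therefore gives
\begin{equation*}
\mathbb{E}\|J\mathbf{X}_{t+1}\|_F^2=\sum_{j=1}^d a_j^2\Bigl[\Bigl(1-\frac{q}{d}\Bigr)+\frac{q}{d}\rho^2\Bigr]=\Bigl(1-\frac{q}{d}(1-\rho^2)\Bigr)\|J\mathbf{X}_t\|_F^2,
\end{equation*}
which is exactly \eqref{eq:masked_contraction_witness}.

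We then compare this with \eqref{eq:consensus_recursion}. That recursion's factor $1-\frac{q}{d}\frac{1-\rho^2}{2}$ differs from the exact witness factor only by the constant $1/2$. The $1/2$ comes from the Young split with $\alpha$, which is only needed when $\eta>0$. The dependence on $q/d$ is therefore unimprovable: no bound of the form $1-c(q/d)^{\theta}(1-\rho^2)$ with $\theta<1$, or with a better-than-constant improvement in $q/d$, can hold uniformly over states. Even in the ideal case $\rho=0$, the witness contracts only by $1-q/d$. There is no genuine obstacle here. The only care point is choosing $v$ to attain $|\lambda|=\rho$ exactly, including when the extremal eigenvalue is negative; this is harmless because only $\lambda^2=\rho^2$ enters.
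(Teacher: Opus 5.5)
Your proposal is correct and follows the paper's proof essentially verbatim. It uses the same witness, each column $a_j v$ for a unit eigenvector $v\perp\mathbf{1}$ with $|\lambda|=\rho$, together with the same coordinatewise split on $j\in\mathcal{S}_t$ and the same averaging with $\mathbb{P}(j\in\mathcal{S}_t)=q/d$. Your extra observation is also correct: the residual factor $1/2$ in \eqref{eq:consensus_recursion} comes only from the Young split needed when $\eta>0$.
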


\begin{proof}
Because $W$ is symmetric and doubly stochastic, there is a unit vector $v\perp\mathbf{1}$ and an eigenvalue $\lambda$ with $Wv=\lambda v$ and $|\lambda|=\rho$. Take a disagreement matrix whose $j$-th column is $x^{(j)}=a_jv$ for arbitrary coefficients $a_j$, and set $\eta=0$ so that the step only applies partial consensus. If $j\notin S_t$, then $x_{+}^{(j)}=x^{(j)}$. If $j\in S_t$, then $x_{+}^{(j)}=Wx^{(j)}=a_j\lambda v$, and hence
\begin{equation}
    \|Jx_{+}^{(j)}\|^2
    =
    \begin{cases}
    \|x^{(j)}\|^2, & j\notin S_t,\\
    \rho^2\|x^{(j)}\|^2, & j\in S_t.
    \end{cases}
\end{equation}
Since $\mathbb{P}(j\in S_t)=q/d$,
\begin{equation}
    \mathbb{E}\|Jx_{+}^{(j)}\|^2
    =
    \left(1-\frac{q}{d}(1-\rho^2)\right)\|x^{(j)}\|^2.
\end{equation}
Summing over coordinates gives \eqref{eq:masked_contraction_witness}. Thus, for algorithms that mix only the active coordinates and leave inactive coordinates unchanged, the $q/d$ slowdown in the one-step consensus contraction cannot be removed by sharper analysis alone.
\end{proof}

\subsection{Proof of Theorem \ref{thm:convergence}}
\label{appendix:main_theorem_proof}

By Assumption \ref{assum:smooth_lower} and the averaged update \eqref{eq:avg_masked_update},
\begin{equation}
    F(\bar{x}_{t+1})
    \le
    F(\bar{x}_t)
    -
    \eta\langle\nabla F(\bar{x}_t),h_t\rangle
    +
    \frac{L\eta^2}{2}\|h_t\|^2.
    \label{eq:smooth_descent_h}
\end{equation}
Taking expectations and substituting \eqref{eq:h_alignment} and \eqref{eq:h_second_moment} gives
\begin{equation}
\begin{aligned}
    \mathbb{E}F(\bar{x}_{t+1})
    \le\;&
    \mathbb{E}F(\bar{x}_t)
    -
    \frac{3\eta}{4}\mathbb{E}\|\nabla F(\bar{x}_t)\|^2
    +
    C\eta L^2\mathcal{E}_t
    +
    C\eta B_\mu^2 \\
    +
    C L\eta^2 d\,\mathbb{E}\|\nabla F(\bar{x}_t)\|^2
    +
    C L\eta^2 dL^2\mathcal{E}_t \\
    &+
    C L\eta^2\frac{d\sigma^2}{N}
    +
    C L\eta^2B_\mu^2.
\end{aligned}
\label{eq:descent_before_absorb}
\end{equation}
Choose the universal constant in \eqref{eq:stepsize_condition} so that $CL\eta d\le1/4$. The positive stationarity term is then at most $(\eta/4)\mathbb{E}\|\nabla F(\bar{x}_t)\|^2$. The same condition absorbs $CL\eta d$ into the disagreement coefficient, and $\eta\le L^{-1}$ absorbs the smaller bias term. Thus
\begin{equation}
    \mathbb{E}F(\bar{x}_{t+1})
    \le
    \mathbb{E}F(\bar{x}_t)
    -
    \frac{\eta}{2}\mathbb{E}\|\nabla F(\bar{x}_t)\|^2
    +
    C\eta L^2\mathcal{E}_t
    +
    C\eta B_\mu^2
    +
    C L\eta^2\frac{d\sigma^2}{N}.
    \label{eq:descent_after_absorb}
\end{equation}
Summing \eqref{eq:descent_after_absorb} from $t=0$ to $T-1$ and using $F(\bar{x}_T)\ge F^*$ gives
\begin{equation}
\begin{aligned}
    \frac{1}{T}\sum_{t=0}^{T-1}
    \mathbb{E}\|\nabla F(\bar{x}_t)\|^2
    \le\;&
    \frac{2(F(\bar{x}_0)-F^*)}{\eta T}
    +
    C L^2
    \frac{1}{T}\sum_{t=0}^{T-1}\mathcal{E}_t \\
    &+
    C B_\mu^2
    +
    C L\eta\frac{d\sigma^2}{N}.
\end{aligned}
\label{eq:before_consensus_substitution}
\end{equation}
Substitute the masked disagreement bound from Lemma \ref{lemma:masked_disagreement_energy}. This gives
\begin{equation}
\begin{aligned}
    \frac{1}{T}\sum_{t=0}^{T-1}
    \mathbb{E}\|\nabla F(\bar{x}_t)\|^2
    \le\;&
    \frac{C(F(\bar{x}_0)-F^*)}{\eta T}
    +
    C L\eta\frac{d\sigma^2}{N}
    +
    C B_\mu^2 \\
    &+
    C L^2\eta^2
    \frac{d}{q}
    \frac{M_{\mathrm{node}}^2}{(1-\rho^2)^2}
    +
    C L^2
    \frac{d}{q}
    \frac{\mathcal{E}_0}{T(1-\rho^2)},
\end{aligned}
\end{equation}
Using $B_\mu^2=\mathcal{O}(\mu^4d^2q^2)$ and
$M_{\mathrm{node}}^2=\mathcal{O}(d(G^2+\sigma^2)+\mu^4d^2q^2)$ gives the explicit finite-\(\eta\) form
\begin{equation}\label{eq:main_convergence}
\begin{aligned}
R_T&:=\frac1T\sum_{t<T}\mathbb{E}\|\nabla F(\bar{x}_t)\|^2\\
&\le\mathcal{O}\!\left(
\frac{\Delta_0}{\eta T}+\frac{\eta d\sigma^2}{N}+\mu^4d^2q^2
\right.\\[-1mm]
&\hspace{18mm}\left.
+\frac{\eta^2 d[d(G^2+\sigma^2)+\mu^4d^2q^2]}{q(1-\rho^2)^2}
+\frac{d\mathcal{E}_0}{qT(1-\rho^2)}
\right).
\end{aligned}
\end{equation}
With common initialization and \eqref{eq:canonical_mu}, for any feasible \(\eta\),
\begin{equation}\label{eq:main_common_eta_rate}
    R_T\le \mathcal{O}\!\left(
    \frac{\Delta_0}{\eta T}
    +
    \eta\frac{d\sigma^2}{N}
    +
    \frac1T
    +
    \eta^2
    \frac{d^2(G^2+\sigma^2)}{q(1-\rho^2)^2}
    \right).
\end{equation}
Taking $\eta=c_\eta T^{-1/2}$ in \eqref{eq:main_common_eta_rate}, for horizons satisfying \eqref{eq:stepsize_condition}, gives \eqref{eq:simplified_bigo_rate}. The directed-link form \eqref{eq:main_budget_convergence} then follows from the round substitution in Appendix \ref{appendix:comm_budget_rate}.
If $\sigma^2>0$ and
\begin{equation}\label{eq:main_eta_worker_scaled}
    \eta_\star=\Theta\!\left(\sqrt{\frac{N\Delta_0}{d\sigma^2T}}\right)
\end{equation}
satisfies \eqref{eq:stepsize_condition}, then
\begin{equation}\label{eq:main_worker_scaled_rate}
    R_T\le
    \mathcal{O}\!\left(
    \sqrt{\frac{\Delta_0 d\sigma^2}{NT}}
    +
    \frac1T
    +
    \frac{N\Delta_0 d(G^2+\sigma^2)}
    {q(1-\rho^2)^2\sigma^2T}
    \right).
\end{equation}

\subsection{Descent Absorption}
\label{appendix:descent_absorption_details}

We spell out the passage from \eqref{eq:descent_before_absorb} to \eqref{eq:descent_after_absorb}. The only positive term containing the stationarity measure is
\begin{equation}
    CL\eta^2d\,\mathbb E\|\nabla F(\bar x_t)\|^2.
    \label{eq:positive_gradient_descent_term}
\end{equation}
Choose the universal constant $c$ in \eqref{eq:stepsize_condition} so that $CL\eta d\le1/4$. Then \eqref{eq:positive_gradient_descent_term} is at most $(\eta/4)\mathbb E\|\nabla F(\bar x_t)\|^2$. Combining it with the coefficient $-3\eta/4$ in \eqref{eq:descent_before_absorb} leaves
\begin{equation}
    -\frac{\eta}{2}\mathbb E\|\nabla F(\bar x_t)\|^2.
    \label{eq:remaining_descent}
\end{equation}

The two disagreement terms factor as
\begin{equation}
    C\eta L^2\mathcal E_t
    +CL\eta^2dL^2\mathcal E_t
    =C\eta L^2\mathcal E_t(1+L\eta d),
    \label{eq:descent_disagreement_terms}
\end{equation}
which is at most $C'\eta L^2\mathcal E_t$ under the same condition. Likewise, since $L\eta\le1$,
\begin{equation}
    C\eta B_\mu^2+CL\eta^2B_\mu^2
    \le C'\eta B_\mu^2.
    \label{eq:bias_absorbed}
\end{equation}
The remaining stochastic term is $CL\eta^2d\sigma^2/N$. After summing over $t$ and dividing by $\eta T$, it becomes $CL\eta d\sigma^2/N$. These are precisely the three positive terms in \eqref{eq:descent_after_absorb}; all graph dependence enters later through \eqref{eq:consensus_average_final}.
\subsection{Worker-Scaled Rate Derivation}
\label{appendix:worker_scaled_derivation}

For common initialization, $\mathcal{E}_0=0$. Equation \eqref{eq:main_worker_scaled_rate} follows by setting
\begin{equation}
    A_q(\mu)
    :=
    \frac{d}{q(1-\rho^2)^2}
    \left[d(G^2+\sigma^2)+\mu^4d^2q^2\right].
    \label{eq:appendix_Aq_def}
\end{equation}
Ignoring only constants depending on $L$, \eqref{eq:main_convergence} can be written as
\begin{equation}
    R_T(\eta,\mu)
    :=
    \frac{1}{T}\sum_{t=0}^{T-1}
    \mathbb{E}\|\nabla F(\bar{x}_t)\|^2
    \le
    \mathcal{O}\!\left(\frac{\Delta_0}{\eta T}\right)
    +
    \mathcal{O}\!\left(\eta\frac{d\sigma^2}{N}\right)
    +
    \mathcal{O}\!\left(\mu^4d^2q^2\right)
    +
    \mathcal{O}\!\left(\eta^2 A_q(\mu)\right).
    \label{eq:appendix_worker_start}
\end{equation}
The first term is the usual descent term, the second term is the averaged stochastic ZO noise, the third term is the finite-difference bias, and the fourth term is the sparse decentralized consensus penalty.

Balancing the first two terms in \eqref{eq:appendix_worker_start} gives
\begin{equation}
    \frac{\Delta_0}{\eta T}
    \asymp
    \eta\frac{d\sigma^2}{N},
    \qquad
    \eta_\star
    \asymp
    \sqrt{\frac{N\Delta_0}{d\sigma^2T}},
    \label{eq:appendix_eta_balance}
\end{equation}
Substituting \eqref{eq:appendix_eta_balance} back into the first two terms yields
\begin{equation}
    \frac{\Delta_0}{\eta_\star T}
    +
    \eta_\star\frac{d\sigma^2}{N}
    =
    \mathcal{O}\!\left(
        \sqrt{\frac{\Delta_0 d\sigma^2}{NT}}
    \right).
    \label{eq:appendix_worker_main_term}
\end{equation}
The sparse-consensus term becomes
\begin{equation}
    \eta_\star^2A_q(\mu)
    =
    \mathcal{O}\!\left(
        \frac{N\Delta_0}{d\sigma^2T}
        \frac{d}{q(1-\rho^2)^2}
        \left[d(G^2+\sigma^2)+\mu^4d^2q^2\right]
    \right).
    \label{eq:appendix_worker_consensus_raw}
\end{equation}
Using the canonical smoothing choice
\begin{equation}
    \mu=\Theta\!\left((dq)^{-1/2}T^{-1/4}\right)
    \label{eq:appendix_mu_choice}
\end{equation}
gives
\begin{equation}
    \mu^4d^2q^2=\mathcal{O}(T^{-1})
    \label{eq:appendix_bias_choice}
\end{equation}
and therefore the finite-difference bias contributes $\mathcal{O}(T^{-1})$. The bias-dependent part inside \eqref{eq:appendix_worker_consensus_raw} is also of smaller order than the displayed consensus term for the regimes used in our experiments. Thus
\begin{equation}
    \eta_\star^2A_q(\mu)
    =
    \mathcal{O}\!\left(
        \frac{N\Delta_0 d(G^2+\sigma^2)}
        {q(1-\rho^2)^2\sigma^2T}
    \right)
    +
    \text{higher-order smoothing terms}.
    \label{eq:appendix_worker_consensus_simplified}
\end{equation}
Combining \eqref{eq:appendix_worker_main_term}, \eqref{eq:appendix_bias_choice}, and \eqref{eq:appendix_worker_consensus_simplified} gives
\begin{equation}
    R_T\le
    \mathcal{O}\!\left(\sqrt{\frac{\Delta_0 d\sigma^2}{NT}}\right)
    +\mathcal{O}\!\left(\frac1T\right)
    +\mathcal{O}\!\left(\frac{N\Delta_0 d(G^2+\sigma^2)}{q(1-\rho^2)^2\sigma^2T}\right).
    \label{eq:worker_speedup_canonical}
\end{equation}
The same public coordinates are used across agents, so worker averaging removes independent oracle noise but not sparse-coordinate sampling itself.

\subsection{Parameter Choice}
\label{appendix:parameter_choice_theory}

The smoothing radius $\mu$ controls finite-difference bias, the stepsize $\eta$ balances descent against stochastic and network error, and the sparsity $q$ controls communication and coordinate refresh.

For a central-difference estimator with a third-order smoothness bound, the bias scales as
\begin{equation}
    B_\mu^2
    =
    \mathcal{O}(\mu^4d^2q^2).
    \label{eq:parameter_bias_scale}
\end{equation}
The dependence on $q$ comes from $\|u_t\|^4=q^2$ and the scaling factor $d/q$ used to keep the sparse estimator unbiased to first order. If $q$ is increased while $\mu$ is held fixed, the finite-difference truncation error can grow. The canonical theoretical choice therefore sets
\begin{equation}
    \mu
    =
    \Theta\!\left(\frac{1}{\sqrt{dq}\,T^{1/4}}\right),
    \label{eq:appendix_mu_choice_expanded}
\end{equation}
which keeps \eqref{eq:parameter_bias_scale} at order $T^{-1}$. The experimental values of $\mu$ are fixed separately to accommodate loss scale, minibatch noise, and numerical precision. The theoretical scaling prescribes a smaller radius as $q$ or $d$ increases.

The stepsize has two competing constraints. The smooth descent step requires
\begin{equation}
    \eta
    \le
    c(Ld)^{-1}
    \label{eq:appendix_eta_stability_constraint}
\end{equation}
for a universal constant $c$. The stochastic rate suggests the larger balancing choice
\begin{equation}
    \eta
    \asymp
    \sqrt{\frac{N\Delta_0}{d\sigma^2T}},
    \label{eq:appendix_eta_balancing_choice}
\end{equation}
whenever this value satisfies \eqref{eq:appendix_eta_stability_constraint}. In high-dimensional full-model tuning, the stability constraint can be much more restrictive than the formal stochastic balancing rule. This is consistent with the Qwen full-model study: the estimator is multiplied by $d/q$, so a learning rate that is harmless in a low-dimensional LoRA subspace can be unstable when the trainable coordinate set has billions of dimensions.

The sparsity level $q$ creates the main design tradeoff. Per round, larger $q$ improves coordinate coverage and reduces the sparse-consensus factor $d/q$, but it also increases the value payload linearly. Under a fixed directed-link budget $\mathcal{B}$, the number of affordable rounds is approximately
\begin{equation}
    T(q)
    \approx
    \frac{\mathcal{B}}{qB_{\mathrm{val}}}.
    \label{eq:Tq_budget_simple}
\end{equation}
Substitution into \eqref{eq:main_budget_convergence} cancels the explicit $1/q$ factor in the network term. The remaining displayed terms increase with $q$ when the other constants and the prescribed horizon-dependent parameters are held fixed. This bound therefore favors smaller supports on communication alone; it does not select an interior optimum. Practical selection also accounts for the two forward queries per round, their latency, and finite-difference precision.

For iid uniform supports at a node, coordinate exposure is measured by $\tau=qT/d$. For any fixed coordinate $j$,
\begin{equation}\label{eq:coordinate_exposure_v24}
 \mathbb E\sum_{t<T}\mathbf1\{j\in S_t\}=\frac{qT}{d},
 \qquad
 \Pr\{j\notin\cup_{t<T}S_t\}=(1-q/d)^T.
\end{equation}
The first identity sums the activation probabilities; the second uses independence across rounds. Hence $d/q$ is the mean waiting time, not a deterministic coverage period. At a fixed value-only budget, $\tau\approx\mathcal B/(dB_{\mathrm{val}})$, independent of $q$ to first order. The support-size sweep in Table \ref{tab:rosen_q_sensitivity} compares the resulting optimization trajectories at matched bits.

The graph parameter $\rho$ enters through the disagreement recursion, producing $(1-\rho^2)^{-2}$ in the bound. The one-step witness in Appendix \ref{appendix:sparse_consensus} establishes the $q/d$ contraction factor; it does not establish sharpness of the full spectral dependence. Sparse momentum retains information between coordinate activations. Theorem \ref{thm:momentum_convergence} gives its convergence guarantee, and Appendix \ref{appendix:extension_experiments_v23} tests the interaction between its step size and query correlation.

\input{v28_momentum_proof}

\input{v28_correlation_appendix}
\FloatBarrier
\input{v28_correlation_extension}
\FloatBarrier

\section{Communication Accounting Details}
\label{appendix:comm_accounting}

An indexed sparse method sends both values and coordinate identifiers; ZO-COSMO communicates only values after the public random seed is fixed. For a $q$-sparse update in dimension $d$, let
\begin{equation}
    \ell_d=\lceil\log_2 d\rceil
    \label{eq:index_bits_def}
\end{equation}
be the number of bits needed to name a coordinate. The per-round directed-link costs are
\begin{equation}
    C_{\mathrm{COSMO}}=qB_{\mathrm{val}},
    \qquad
    C_{\mathrm{indexed}}=q(B_{\mathrm{val}}+\ell_d).
    \label{eq:appendix_sparse_costs}
\end{equation}
The fraction of an indexed sparse message spent only on coordinate identities is therefore
\begin{equation}
    r_{\mathrm{idx}}(d,B_{\mathrm{val}})
    =
    \frac{\ell_d}{B_{\mathrm{val}}+\ell_d}.
    \label{eq:index_fraction_def}
\end{equation}
The public-support design removes exactly this fraction of the sparse payload. The saving is modest for small dense floating-point messages, but it becomes dominant for high-dimensional models and for low-bit value quantization.

\begin{table}[H]
    \centering
    \caption{Coordinate-identity overhead in indexed sparse communication.}
    \label{tab:appendix_index_share}
    \footnotesize
    \setlength{\tabcolsep}{3pt}
    \begin{tabular}{lcccccc}
        \toprule
        Setting & Dimension $d$ & $\ell_d$ & $B_{\mathrm{val}}=32$ & $B_{\mathrm{val}}=16$ & $B_{\mathrm{val}}=8$ & $B_{\mathrm{val}}=1$ \\
        \midrule
        Qwen LoRA-scale experiment & $2.52{\times}10^6$ & 22 & 40.7\% & 57.9\% & 73.3\% & 95.7\% \\
        Qwen full-model study & $6.53{\times}10^9$ & 33 & 50.8\% & 67.3\% & 80.5\% & 97.1\% \\
        \bottomrule
    \end{tabular}
\end{table}

Table \ref{tab:appendix_index_share} quantifies the representation cost removed by index-free sparsity. In the Qwen full-model study, even with 32-bit values, an indexed sparse method spends slightly more than half of each sparse payload on coordinate names. If the value payload is quantized to 8 bits, the coordinate names become more than four fifths of the message. The same effect appears in LoRA-scale training, where the dimension is far smaller than full-model fine-tuning but still large enough that coordinate identifiers dominate low-bit sparse messages.

\begin{table}[H]
    \centering
    \caption{Representative directed-link payloads.}
    \label{tab:appendix_payload_examples}
    \footnotesize
    \setlength{\tabcolsep}{3pt}
    \begin{tabular}{lccc>{\raggedright\arraybackslash}p{5.1cm}}
        \toprule
        Setting & $d$ & $q$ & ZO-COSMO & Private-support / refresh cost \\
        \midrule
        \makecell[l]{Qwen LoRA-scale\\32-bit values} & $2.52$M & $8{,}192$ & $262.1$K bits & $\ge341.6$K entropy bound; $442.4$K coordinate list; $80.6$M dense \\
        \makecell[l]{Qwen full-model\\32-bit values} & $6.53$B & $8{,}192$ & $262.1$K bits & $\ge434.6$K entropy bound; $532.5$K coordinate list; $2.09{\times}10^{11}$ dense \\
        \bottomrule
    \end{tabular}
\end{table}

The entropy entries evaluate the lower bound $qB_{\mathrm{val}}+\log_2\binom{d}{q}$ for a uniform private support. The coordinate-list entries give the implemented format. Together they distinguish the minimum support-description cost from the cost of explicit indices.

\subsection{Public Randomness and the Communication Operator}
\label{appendix:public_randomness_operators}

Shared randomness can encode different mathematical objects. Table \ref{tab:public_randomness_operators} separates three decentralized uses. Here $\bar\delta$ is average degree, $D_G$ is graph diameter, and $b=B_{\mathrm{seed}}+B_{\mathrm{scalar}}$ is the size of one seed--scalar update. Payload is the sum of packet bits crossing edges, divided by $N$; sequential stages count the longest dependency chain within one optimization round.

\begin{table}[H]
    \centering
    \caption{Payload and link stages for public-randomness protocols.}
    \label{tab:public_randomness_operators}
    \footnotesize
    \setlength{\tabcolsep}{3.5pt}
    \begin{adjustbox}{max width=\linewidth}
    \begin{tabular}{lcccc}
        \toprule
        Protocol & Reconstructed object & Destination & Bits/node/round & Link stages \\
        \midrule
        Global-support ZO-COSMO & Current state slice & Neighbors & $\bar\delta qB_{\mathrm{val}}$ & $1$ \\
        Edge-local ZO-COSMO & Current state slice & Matched peer & $qB_{\mathrm{val}}$ & $1$ \\
        Seed-scalar flooding \citep{kim2026seedflood} & Local update & Every node & $\Omega((N-1)b)$ & $\ge D_G$ \\
        \bottomrule
    \end{tabular}
    \end{adjustbox}
\end{table}

Server-mediated methods send updates to a coordinator: DeComFL and MEERKAT reconstruct seed--scalar updates, while FeedSign aggregates seed--sign votes \citep{li2025decomfl,ran2026meerkat,cai2026feedsign}. Table \ref{tab:public_randomness_operators} compares peer-to-peer operators, whose destinations and dependency graphs differ from this server-mediated pattern.

The flooding bound concerns delivery of every source packet to every node. Each of $N$ distinct packets requires at least $N-1$ packet-edge deliveries, giving $N(N-1)b$ network bits, or $(N-1)b$ per node; a diameter endpoint requires at least $D_G$ sequential links. Batching is allowed, but each original packet must reach every node; in-network aggregation is a different operator. SeedFlood uses this dissemination for all-gather-equivalent synchronization. Its SubCGE batches updates in a shared subspace, reducing application cost from $O(Nd)$ to $O(N+rd)$ for subspace rank $r$ \citep{kim2026seedflood}. The table counts packet delivery separately from this application cost.

A seed--scalar pair can be much smaller than $q$ state values, making flooding attractive for small $N$ or inexpensive multi-hop communication. Edge-local ZO-COSMO finishes in one link stage and keeps distinct local states, with constant per-node payload on a perfect matching. The support-locality experiment in Appendix \ref{appendix:support_locality_experiment} measures the crossover at a common bit budget.

\subsection{Sparse-Message Serialization Check}
\label{appendix:serialization_diagnostic}

We measure local encoding and decoding with contiguous 32-bit floating-point values. Indexed packets additionally use \texttt{uint32} coordinates for Qwen LoRA and \texttt{uint64} for full-model Qwen. These timings cover serialization only; network transfer is measured separately below.

\begin{table}[H]
    \centering
    \caption{Sparse-message serialization check.}
    \label{tab:appendix_serialization_microbenchmark}
    \footnotesize
    \setlength{\tabcolsep}{4pt}
    \begin{tabular}{lcccc}
        \toprule
        Setting & Value-only & Indexed impl. & Min. indexed & Time ratio \\
        \midrule
        Qwen LoRA-scale & $32.0$ KiB & $64.0$ KiB & $54.0$ KiB & $3.22\times$ \\
        Qwen full-model & $32.0$ KiB & $96.0$ KiB & $65.0$ KiB & $4.04\times$ \\
        \bottomrule
    \end{tabular}
\end{table}

The ``Min. indexed'' column is the ideal bit-packed cost $q(32+\ell_d)$ rounded to bytes; the indexed implementation column is the ordinary contiguous-coordinate representation used by sparse tensor code. The exact timing ratios are machine dependent, but the payload relation is not. Even under ideal bit packing, indexed sparse communication remains larger than the value-only message; under ordinary integer-coordinate serialization, the Qwen full-model indexed payload is three times the ZO-COSMO payload before any network overhead is counted.

\subsection{Two-Process Message Transfer Check}
\label{appendix:two_process_transport}

We send LoRA-scale messages between two processes on the same host using PyTorch's Gloo TCP backend. The sender transmits contiguous tensors and waits for a one-byte acknowledgement. Sparse messages are repeated $300$ times and the dense message $30$ times within each trial; method order rotates across $31$ trials. Table \ref{tab:appendix_two_process_transport} reports median exchange time, including the acknowledgement.

\begin{table}[H]
    \centering
    \caption{Two-process transfer of Qwen LoRA-scale messages.}
    \label{tab:appendix_two_process_transport}
    \small
    \setlength{\tabcolsep}{5pt}
    \begin{tabular}{lcccc}
        \toprule
        Message & Tensors & Payload & Median exchange & Relative time \\
        \midrule
        ZO-COSMO value-only & $1$ & $32.0$ KiB & $127.2\,\mu$s & $1.00\times$ \\
        Indexed sparse & $2$ & $64.0$ KiB & $162.8\,\mu$s & $1.28\times$ \\
        Dense LoRA refresh & $1$ & $9.6$ MiB & $2.69$ ms & $21.16\times$ \\
        \bottomrule
    \end{tabular}
\end{table}

Doubling the sparse payload increases median transfer time by $28\%$, indicating a substantial fixed-latency component. The dense refresh carries $308\times$ as many bits and takes $21.16\times$ as long on this transport. We use payload bits for the method comparisons and report these same-host timings as a separate implementation measurement.

\subsection{Amortized Cost of the Shared Seed}
\label{appendix:seed_overhead}

The value-only cost in \eqref{eq:appendix_sparse_costs} assumes that the public random seed has already been fixed. It is the per-round accounting for a repeated optimization run; the one-time overhead can be written separately. Let $B_{\mathrm{seed}}$ be the number of bits used to initialize the public pseudorandom generator and let $B_{\mathrm{sync}}(T)$ be the total number of bits used for occasional round-counter or resynchronization control messages over $T$ rounds. The average directed-link payload of ZO-COSMO is then
\begin{equation}
    \bar{C}_{\mathrm{COSMO}}(T)
    =
    qB_{\mathrm{val}}
    +
    \frac{B_{\mathrm{seed}}+B_{\mathrm{sync}}(T)}{T}.
    \label{eq:seed_amortized_cost}
\end{equation}
This differs from \eqref{eq:appendix_sparse_costs} only by a term that vanishes with the number of optimization rounds. By contrast, an indexed sparse method pays coordinate identifiers every round:
\begin{equation}
    \bar{C}_{\mathrm{indexed}}(T)
    =
    qB_{\mathrm{val}}+q\ell_d.
    \label{eq:indexed_recurrent_cost}
\end{equation}
Seed setup is amortized across rounds, whereas coordinate lists recur in every message.

For example, suppose a run uses a 128-bit seed and one 32-bit epoch counter every 100 rounds. Then
\begin{equation}
    \frac{B_{\mathrm{seed}}+B_{\mathrm{sync}}(T)}{T}
    \le
    \frac{128}{T}
    +
    0.32
    \quad\text{bits per round}.
    \label{eq:seed_numeric_example}
\end{equation}
In the Qwen full-model setting with $q=8192$ and 32-bit values, the value payload is $262{,}144$ bits per round and the indexed coordinate payload is $270{,}336$ bits per round. The seed and control overhead is therefore negligible relative to both the value payload and the repeated coordinate-identifier payload.

The same accounting applies to balanced public schedules. An epoch seed suffices to regenerate the support without a coordinate list. After a missed round, the round id and seed recover the support, not the missed values or optimizer state; reliable delivery or explicit state recovery is still required.

\subsection{ZO-MARINA-type Baseline Definition}
\label{appendix:marina_like_definition}

ZO-MARINA-type alternates sparse ZO updates with periodic dense refreshes. The name refers to this communication pattern. MARINA itself uses compressed gradient differences and a variance-reduced estimator; the comparator averages independent two-query ZO updates and has no MARINA convergence guarantee. Algorithm \ref{alg:appendix_marina_like} gives the two-worker Qwen implementation.

\begin{algorithm}[htbp]
\caption{ZO-MARINA-type for two-worker Qwen}
\label{alg:appendix_marina_like}
\begin{algorithmic}[1]
\Require $K,q,\mu,\eta_t,\beta$; initialize $x_{1,0}=x_{2,0}$ and $m_{1,0}=m_{2,0}=0$
\For{round $t=0,1,\ldots,T-1$}
    \State Set $q_t=d$ if $t\bmod K=0$, and $q_t=q$ otherwise.
    \For{$i\in\{1,2\}$}
        \State Independently draw a size-$q_t$ support $S_{i,t}$ and Rademacher signs, defining $u_{i,t}$.
        \State $g_{i,t}=\frac{d}{q_t}\frac{f_i(x_{i,t}+\mu u_{i,t};\xi_{i,t})-f_i(x_{i,t}-\mu u_{i,t};\xi_{i,t})}{2\mu}u_{i,t}$.
        \State Set $P_{i,t}=\diag(\mathbf1_{S_{i,t}})$ and update $m_{i,t+1}$ by \eqref{eq:optional_subspace_momentum}.
        \State Send the active values of $h_{i,t}=P_{i,t}m_{i,t+1}$ to the other worker.
        \State Include the coordinate list on sparse rounds; dense coordinates have fixed order.
    \EndFor
    \State Both workers set $x_{i,t+1}=x_{i,t}-\frac{\eta_t}{2}(h_{1,t}+h_{2,t})$.
\EndFor
\end{algorithmic}
\end{algorithm}

On sparse rounds, each worker applies both decoded sparse updates, including overlaps. On dense rounds, it applies their full-vector average. Common initialization keeps the two replicas synchronized, so this is an update-aggregation comparison. Dense query signs are independent across workers. The LoRA experiment uses $K=25$, with the first dense refresh at $t=0$.

Each directed link carries $q(B_{\mathrm{val}}+\ell_d)$ bits on sparse rounds and $dB_{\mathrm{val}}$ bits on dense rounds. Over a complete refresh cycle, the average is
\begin{equation}
    C_{\mathrm{MARINA}}
    =
    \frac{K-1}{K}q(B_{\mathrm{val}}+\ell_d)
    +
    \frac{1}{K}dB_{\mathrm{val}}.
    \label{eq:appendix_marina_cost}
\end{equation}
For a finite horizon $T\ge1$, there are $\lceil T/K\rceil$ dense rounds, so the cumulative cost is
\begin{equation}
    C_T=\lceil T/K\rceil dB_{\mathrm{val}}
    +(T-\lceil T/K\rceil)q(B_{\mathrm{val}}+\ell_d).
\end{equation}
With $d=2{,}523{,}136$, $q=8192$, and 32-bit values, the first dense packet costs $80.740$ Mbit per link. Extrapolating the same packet format to full-model Qwen gives about $2.09{\times}10^{11}$ bits per refresh. Seed-aware sparse protocols are evaluated separately.

Equal-round comparisons fix the query count; fixed-budget comparisons also account for the affordable horizon. When the first refresh exceeds the budget, the reported state is the initialization.

\subsection{Com-DSZO Baseline Definition}
\label{appendix:com_dszo_definition}

Com-DSZO combines a two-point ZO estimator with error-compensated compressed consensus \citep{hua2026compressedzo}. We instantiate its published recursion with Top-$k$ innovation compression. In matrix form, let $X_t$ stack the node states, let $\widehat X_t$ be the reconstructed state, let $B_t$ accumulate disagreement, and let $Q_t$ be the compressed innovation. One round is
\begin{align}
    \widehat X_{t+1}&=\widehat X_t+\psi Q_t, \\
    B_{t+1}&=B_t+\psi(I-W)Q_t, \\
    X_{t+1}&=X_t-\gamma B_{t+1}-\eta G_t^{\mathrm{ZO}}, \\
    Q_{t+1}&=\operatorname{Top}_q(X_{t+1}-\widehat X_{t+1}).
    \label{eq:com_dszo_instantiation}
\end{align}
The Rosenbrock problem is unconstrained, so the projection is the identity. We use the same two function evaluations and dense Rademacher ZO estimator as the dense baselines, $\psi=0.5$, $\gamma=0.1$, and $\eta=2.5\times10^{-3}$. Top-$q$ selects coordinates from the innovation, giving a message cost of $q(B_{\mathrm{val}}+\ell_d)$ bits. The baseline also stores dense reconstruction and disagreement arrays, $\widehat X_t$ and $B_t$. This comparison tests adaptive compression with error compensation. The separate indexed control isolates representation cost by using the ZO-COSMO trajectory and charging its coordinate list.

\subsection{Communication-Budget Rate}
\label{appendix:comm_budget_rate}
\label{appendix:budget_rate}

Under a directed-link budget $\mathcal{B}\ge 2qB_{\mathrm{val}}$, choose a horizon $T=\lfloor\mathcal{B}/(qB_{\mathrm{val}})\rfloor$ satisfying Theorem \ref{thm:budget_convergence}, with its prescribed stepsize and smoothing radius. Let $R_{\mathcal{B}}^{\mathrm{COSMO}}$ denote the resulting averaged gradient norm. Then
\begin{equation}\label{eq:main_budget_convergence}
    R_{\mathcal{B}}^{\mathrm{COSMO}}\le
    \mathcal{O}\!\left(
    \left(\Delta_0+\frac{d\sigma^2}{N}\right)\sqrt{\frac{qB_{\mathrm{val}}}{\mathcal{B}}}
    +\frac{qB_{\mathrm{val}}}{\mathcal{B}}
    +\frac{d^2(G^2+\sigma^2)B_{\mathrm{val}}}{(1-\rho^2)^2\mathcal{B}}
    \right).
\end{equation}

The communication-budget form of Theorem \ref{thm:budget_convergence} follows by substituting the number of affordable rounds into the iteration bound. Let $\mathcal{B}$ denote the total number of transmitted bits per directed link. For ZO-COSMO,
\begin{equation}
    T_{\mathrm{COSMO}}
    =
    \left\lfloor
    \frac{\mathcal{B}}{qB_{\mathrm{val}}}
    \right\rfloor.
    \label{eq:budget_T_cosmo}
\end{equation}
Using common initialization and the canonical smoothing choice, the simplified bound \eqref{eq:simplified_bigo_rate} has the schematic form
\begin{equation}
    R_T
    \le
    \mathcal{O}\!\left(\frac{a}{\sqrt{T}}\right)
    +
    \mathcal{O}\!\left(\frac{1}{T}\right)
    +
    \mathcal{O}\!\left(\frac{b}{qT}\right),
    \label{eq:budget_schematic_iteration}
\end{equation}
where
\begin{equation}
    a:=\Delta_0+\frac{d\sigma^2}{N},
    \qquad
    b:=\frac{d^2(G^2+\sigma^2)}{(1-\rho^2)^2}.
    \label{eq:budget_ab_def}
\end{equation}
Substituting \eqref{eq:budget_T_cosmo} into \eqref{eq:budget_schematic_iteration} gives
\begin{equation}
    R_{\mathcal{B}}^{\mathrm{COSMO}}
    \le
    \mathcal{O}\!\left(
        a\sqrt{\frac{qB_{\mathrm{val}}}{\mathcal{B}}}
    \right)
    +
    \mathcal{O}\!\left(
        \frac{qB_{\mathrm{val}}}{\mathcal{B}}
    \right)
    +
    \mathcal{O}\!\left(
        \frac{d^2(G^2+\sigma^2)B_{\mathrm{val}}}
        {(1-\rho^2)^2\mathcal{B}}
    \right).
    \label{eq:budget_cosmo_rate}
\end{equation}
The network term is first-order independent of $q$ after converting rounds to bits: larger supports mix more coordinates per round and consume proportionally more communication. The other displayed terms favor the additional rounds purchased by smaller supports. Appendix \ref{appendix:parameter_choice_theory} separates this prediction from query cost and finite-precision considerations in practical parameter selection.

To isolate representation cost, consider transmitting the same $q$-sparse round with an explicit coordinate list. With the same value precision, the budget purchases
\begin{equation}
    T_{\mathrm{idx}}
    =
    \left\lfloor
    \frac{\mathcal{B}}
    {q(B_{\mathrm{val}}+\ell_d)}
    \right\rfloor
    \approx
    \frac{T_{\mathrm{COSMO}}}
    {1+\ell_d/B_{\mathrm{val}}}.
    \label{eq:budget_T_indexed}
\end{equation}
Thus index metadata directly reduces the number of rounds available under the same bit budget. If the update sequence has the same iteration-wise bound and differs only in its encoding, the leading stochastic term is inflated by
\begin{equation}
    \sqrt{1+\frac{\ell_d}{B_{\mathrm{val}}}},
    \label{eq:budget_index_sqrt_factor}
\end{equation}
and the lower-order $1/T$ terms are inflated by
\begin{equation}
    1+\frac{\ell_d}{B_{\mathrm{val}}}.
    \label{eq:budget_index_linear_factor}
\end{equation}
These factors quantify the effect of encoding the same update sequence. For full-model Qwen with 32-bit values, \eqref{eq:budget_index_linear_factor} is $2.03$ before any dense refresh is counted. Seed-aware sparse encodings provide a second value-only reference, as evaluated in the Qwen controls.

The same calculation covers a mask refreshed every $K$ rounds. A coordinate-list implementation has amortized multiplier
\begin{equation}
    m_{\mathrm{adapt}}(K)=1+\frac{\ell_d}{KB_{\mathrm{val}}},
    \label{eq:periodic_adaptive_multiplier}
\end{equation}
while any zero-error code is lower bounded by $1+H(S\mid R)/(KqB_{\mathrm{val}})$. For full-model Qwen, the coordinate-list factors are $2.03$, $1.10$, and $1.02$ for $K=1,10,50$. Long mask reuse therefore makes metadata inexpensive when one stable subspace is adequate. ZO-COSMO addresses the complementary case in which supports continue to rotate for coordinate coverage and no data-dependent mask-selection state is maintained.

For the ZO-MARINA-type baseline, combining \eqref{eq:appendix_marina_cost} with the value-only ZO-COSMO cost gives the per-round cost multiplier
\begin{equation}
    m_{\mathrm{MARINA}}
    =
    \frac{C_{\mathrm{MARINA}}}{qB_{\mathrm{val}}}
    =
    \frac{K-1}{K}
    \left(1+\frac{\ell_d}{B_{\mathrm{val}}}\right)
    +
    \frac{d}{Kq}.
    \label{eq:marina_round_multiplier}
\end{equation}
The first term is the indexed sparse overhead; the second term is the amortized dense refresh. Under a fixed bit budget, ZO-COSMO can take approximately $m_{\mathrm{MARINA}}$ times as many sparse rounds as this ZO-MARINA-type baseline.

\begin{table}[htbp]
    \centering
    \caption{Round-cost multipliers relative to ZO-COSMO.}
    \label{tab:appendix_round_multipliers}
    \small
    \setlength{\tabcolsep}{4pt}
    \begin{tabular}{lcccc}
        \toprule
        Setting & $d$ & $q$ & \makecell{Indexed sparse\\multiplier} & \makecell{ZO-MARINA-type\\multiplier, $K=25$} \\
        \midrule
        Qwen LoRA-scale & $2.52{\times}10^6$ & $8{,}192$ & $1.69{\times}$ & $13.94{\times}$ \\
        Qwen full-model & $6.53{\times}10^9$ & $8{,}192$ & $2.03{\times}$ & $\approx 3.19{\times}10^4{\times}$ \\
        \bottomrule
    \end{tabular}
\end{table}

The refresh cost in \eqref{eq:marina_round_multiplier} grows as $d/(Kq)$. The table uses the LoRA period $K=25$; its full-model row extrapolates payload cost. In Qwen LoRA, the first dense packet exceeds the fixed comparison budget, so the corresponding entry reports initialization. Equal-round SST-2 results separately evaluate the trained refresh method with its full cumulative payload.

\section{Experimental Details and Reproducibility}
\label{appendix:experimental_details}

We report all experiments using cumulative directed-link communication. For a target budget, each curve is evaluated at the latest logged checkpoint whose cumulative bit count does not exceed that budget. This avoids interpolating across different communication schedules. Sparse baselines that reveal their chosen coordinates are charged for both values and coordinate identifiers. ZO-COSMO is charged only for the transmitted values after the shared seed is fixed. For the ZO-MARINA-type baseline, we charge indexed sparse communication on ordinary rounds and the full dense vector on refresh rounds.

\subsection{Fixed-Budget Evaluation Protocol}
\label{appendix:fixed_budget_protocol}

Every run logs an iteration counter, a method name, a seed, the graph topology, the active support size $q$, the target metric, and the cumulative communication cost per node. Let $r$ index a single method/seed/topology/$q$ run and let
\begin{equation}
    \big\{
        (b_{r,0},m_{r,0}),\ldots,(b_{r,L_r},m_{r,L_r})
    \big\}
    \label{eq:logged_budget_metric_pairs}
\end{equation}
be its logged cumulative-bit and metric pairs, sorted by $b_{r,\ell}$. For a target budget $B$, the reported fixed-budget value is
\begin{equation}
    \widehat{m}_r(B)
    =
    m_{r,\ell^\star},
    \qquad
    \ell^\star
    =
    \max\{\ell:\ b_{r,\ell}\le B\}.
    \label{eq:fixed_budget_selector}
\end{equation}
If no checkpoint satisfies the budget, the run is excluded from that budget rather than extrapolated backwards. In the reported tables this case does not occur for the displayed budgets. The aggregate value is then the mean and standard deviation of $\widehat{m}_r(B)$ over seeds:
\begin{equation}
    \bar{m}(B)
    =
    \frac{1}{|\mathcal{R}|}
    \sum_{r\in\mathcal{R}}\widehat{m}_r(B),
    \qquad
    s(B)^2
    =
    \frac{1}{|\mathcal{R}|-1}
    \sum_{r\in\mathcal{R}}
    \big(\widehat{m}_r(B)-\bar{m}(B)\big)^2.
    \label{eq:fixed_budget_mean_std}
\end{equation}
The selected checkpoint lies within the budget. We use its measured value directly, without interpolating between different communication schedules.

The per-node communication counter follows the graph. A directed message with payload cost $C$ sent over every directed edge contributes approximately $\bar{d}_{\mathrm{graph}}C$ bits per node, where $\bar{d}_{\mathrm{graph}}$ is the average degree. The code records this value through the same accounting routine for every method. Thus a sparse value-only round, an indexed sparse round, and a dense refresh differ only through the payload formula:
\begin{equation}
    C_{\mathrm{value}}=qB_{\mathrm{val}},
    \qquad
    C_{\mathrm{indexed}}=q(B_{\mathrm{val}}+\ell_d),
    \qquad
    C_{\mathrm{dense}}=dB_{\mathrm{val}}.
    \label{eq:experiment_payload_rules}
\end{equation}
Fixed-budget plots use cumulative bits; equal-round controls show the corresponding optimization dynamics.

\begin{algorithm}[htbp]
\caption{Fixed-Budget Table Extraction}
\label{alg:fixed_budget_table}
\begin{algorithmic}[1]
\Require Raw logs with cumulative bits $b_{r,\ell}$ and metric $m_{r,\ell}$; budgets $\mathcal{B}$
\For{budget $B\in\mathcal{B}$}
    \For{each method/seed/topology/$q$ run $r$}
        \State Select $\ell^\star=\max\{\ell:b_{r,\ell}\le B\}$
        \State Record $\widehat{m}_r(B)=m_{r,\ell^\star}$
    \EndFor
    \State Report the seed mean and standard deviation of $\widehat{m}_r(B)$
\EndFor
\end{algorithmic}
\end{algorithm}

\begin{table}[htbp]
    \centering
    \caption{Core experimental hyperparameters.}
    \label{tab:appendix_hyperparams}
    \footnotesize
    \setlength{\tabcolsep}{3pt}
    \begin{adjustbox}{max width=\linewidth}
    \begin{tabular}{lcccccc}
        \toprule
        Experiment & $d$ & $N$ & $q$ & Steps & $\eta$ & $\mu$ \\
        \midrule
        Rosenbrock main & 20 & 10 & 1 & 3000 & $2.5{\times}10^{-3}$ & $5{\times}10^{-3}$ \\
        Rosenbrock $q$ sweep & 20 & 10 & $\{1,2,4,8,16\}$ & 3000 & $8{\times}10^{-4}$ & $5{\times}10^{-3}$ \\
        Network stress & 128 & $\{4,8,16,32,64\}$ & 16 & 1200 & $4{\times}10^{-4}$ & $5{\times}10^{-3}$ \\
        Support locality & 128 & $\{8,16,32,64\}$ & 16 & 2200 & $4{\times}10^{-4}$ & $5{\times}10^{-3}$ \\
        Qwen LoRA confirmatory & $2.52{\times}10^6$ & 2 & 8192 & $59/100$ & $5{\times}10^{-6}$--$5{\times}10^{-5}$ & $10^{-3}$ \\
        Qwen topology diagnostic & $2.52{\times}10^6$ & $\{2,4,8\}$ & 8192 & 100 & $5{\times}10^{-5}$ & $10^{-3}$ \\
        Qwen edge-local study & $2.52{\times}10^6$ & $\{4,8\}$ & 8192 & $14$--$100$ & $5{\times}10^{-5}$ & $10^{-3}$ \\
        Qwen full model & $6.53{\times}10^9$ & 1 & 8192 & 200 & $10^{-8}$ & $10^{-3}$ \\
        \bottomrule
    \end{tabular}
    \end{adjustbox}
\end{table}

Unless otherwise stated, sparse payloads use 32-bit values and momentum uses $\beta=0.9$. The Rosenbrock main comparison selects $\eta$ from $\{8\times10^{-4},1.2\times10^{-3},1.6\times10^{-3},2\times10^{-3},2.5\times10^{-3},3\times10^{-3}\}$ on pilot seed 0 and reports disjoint seeds 1--5. The support-size ablation fixes one learning rate. The network stress and support-locality tests use the core method ($\beta=0$) and heterogeneity scale $0.3$. Edge supports are generated from $(s_0,t,\min\{i,j\},\max\{i,j\})$; no edge identifier or coordinate list is transmitted.

The confirmatory QNLI study uses $\eta=5\times10^{-5}$ for both ZO-COSMO and Rand-$k$. The zero-momentum controls use either that nominal step or $5\times10^{-6}$ to match the initial scale $(1-\beta)\eta$. ZO-MARINA-type uses $2.5\times10^{-5}$. The topology and edge-local studies use five seeds and charge every repeated exchange. The fixed-budget study uses $26.214$M bits per node. The same-matching extension runs to $1540$ rounds with either a constant step or a common tenfold reduction after round $100$; all other optimization settings are unchanged. Both report full-validation accuracy and label NLL.

The Rosenbrock experiment permits direct runs of both sparse and dense baselines on the same nonconvex objective and query budget. The indexed control isolates recurring metadata cost, while Com-DSZO tests adaptive, error-compensated compression.

For node $i$, the local Rosenbrock objective is
\begin{equation}
    f_i(x)
    =
    \sum_{r=1}^{d-1}
    \left[
        \alpha
        \left(
            z_{i,r+1}-z_{i,r}^2
        \right)^2
        +
        (1-z_{i,r})^2
    \right],
    \qquad
    z_i=x-s_i,
    \label{eq:appendix_rosenbrock_objective}
\end{equation}
where $\alpha=2$ and $s_i$ is a small node-specific shift. The shift scale is $0.02$ in the reported runs. This creates mild heterogeneity without changing the qualitative geometry of the objective. We report both the objective at the network average and the average local objective in the logs; the main paper uses the objective at the network average because it is the quantity controlled by the convergence theorem.

All Rosenbrock runs use the same initialization, smoothing radius, value precision, graph, and oracle budget within a seed. In the main comparison, the indexed control copies the ZO-COSMO support and update exactly and changes only the message cost. Com-DSZO instead uses dense local ZO directions and Top-$k$ innovation compression as defined in Appendix \ref{appendix:com_dszo_definition}. The separate support-size ablation retains the earlier independent Rand-$k$ comparator, a no-momentum control, and dense ZO-DSGD.

\begin{table}[htbp]
    \centering
    \caption{Rosenbrock $q$-sensitivity at fixed bit budgets with the common step size $\eta=8\times10^{-4}$. Cells report ZO-COSMO / indexed Rand-$k$, mean $\pm$ standard deviation over five seeds; lower is better.}
    \label{tab:rosen_q_sensitivity}
    \footnotesize
    \setlength{\tabcolsep}{3pt}
    \begin{adjustbox}{width=\linewidth}
    \begin{tabular}{c c c c}
        \toprule
        $q$ & $0.3$M bits & $1$M bits & $3$M bits \\
        \midrule
        1  & $0.131\pm0.018$ / $0.145\pm0.017$ & $0.119\pm0.009$ / $0.113\pm0.006$ & $0.119\pm0.009$ / $0.113\pm0.006$ \\
        2  & $0.632\pm0.314$ / $0.866\pm0.384$ & $0.121\pm0.011$ / $0.113\pm0.008$ & $0.121\pm0.011$ / $0.113\pm0.008$ \\
        4  & $7.283\pm0.322$ / $8.054\pm0.540$ & $0.172\pm0.023$ / $0.192\pm0.023$ & $0.121\pm0.009$ / $0.115\pm0.009$ \\
        8  & $13.971\pm0.306$ / $14.689\pm0.569$ & $1.630\pm0.594$ / $2.679\pm0.947$ & $0.124\pm0.012$ / $0.122\pm0.015$ \\
        16 & $22.680\pm3.168$ / $23.303\pm0.827$ & $9.385\pm0.857$ / $10.234\pm0.643$ & $0.288\pm0.058$ / $0.401\pm0.131$ \\
        \bottomrule
    \end{tabular}
    \end{adjustbox}
\end{table}

At $q=1$ and $q=2$, both methods take many rounds at the larger budgets and approach saturation. At $q=8$ and $q=16$, larger packets leave fewer rounds and the fixed-budget gap is clearer (Table \ref{tab:rosen_q_sensitivity}). The sweep illustrates how the value of compact encoding changes with support size, complementing the identical-update control in the main experiment.

For all decentralized experiments, we use symmetric connected graphs and Metropolis mixing weights. Given an adjacency matrix $A$, the off-diagonal weights are
\begin{equation}
    W_{ij}
    =
    \frac{1}{1+\max\{\deg(i),\deg(j)\}}
    \quad\text{if } A_{ij}=1,\ i\ne j,
    \label{eq:appendix_metropolis_offdiag}
\end{equation}
and $W_{ii}=1-\sum_{j\ne i}W_{ij}$. The reported spectral parameter is computed from the centered mixing matrix:
\begin{equation}
    \rho
    =
    \rho_{\mathrm{spec}}\!\left(
        W-\frac{1}{N}\mathbf{1}\mathbf{1}^{\top}
    \right),
    \label{eq:appendix_rho_compute}
\end{equation}
where $\rho_{\mathrm{spec}}(\cdot)$ is the spectral radius. Ring, grid, and Erd\H{o}s-R\'enyi graphs therefore differ mainly through this mixing parameter and average degree. The Erd\H{o}s-R\'enyi graphs are resampled until connected, with edge probability $0.4$.

The network stress test changes both $N$ and the graph family while holding the objective dimension, support size, step size, and horizon fixed. It uses $d=128$, $q=16$, $T=1200$, five seeds, and node shifts of scale $0.3$. Complete graphs provide the full-mixing reference; grids grow in two dimensions; rings deliberately drive the spectral gap toward zero. At $N=64$, their respective $\rho$ values are $0$, $0.9677$, and $0.9968$. All three reduce the objective to about $6.1\%$ of its initial value. The network-average objective remains within $0.1\%$ across graph families, while disagreement records the predicted deterioration on the ring.

\begin{figure}[htbp]
    \centering
    \includegraphics[width=0.78\linewidth]{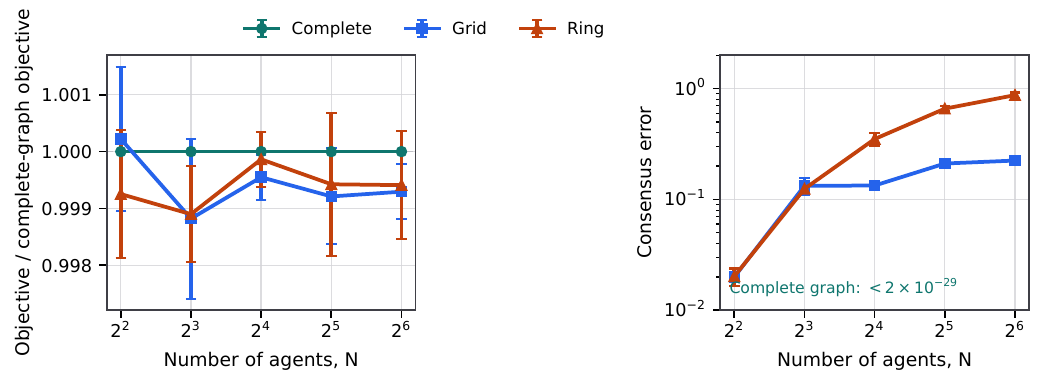}
    \caption{Five-seed topology stress test for the global-support update.}
    \label{fig:appendix_network_stress}
\end{figure}

\subsection{Support-Locality Experiment}
\label{appendix:support_locality_experiment}

This experiment compares the communication operators in Algorithm \ref{alg:edge_local_cosmo}. All methods use the same heterogeneous Rosenbrock network, $d=128$, $q=16$, two function queries per node and round, no momentum, and five seeds on even rings. Global-support ZO-COSMO sends $q$ values to both ring neighbors, costing $2qB_{\mathrm{val}}=1024$ bits per node and round. Edge-local ZO-COSMO alternates the ring's two perfect matchings and sends one $q$-value message, costing $512$ bits per node and round.

The seed--scalar control uses ideal all-gather communication. Each node forms a dense seed-reconstructible Rademacher ZO update, and every node applies the exact average after diameter-depth dissemination. We charge the spanning-tree lower bound $(N-1)(64+32)$ bits per node for a 64-bit seed and 32-bit scalar. This operator-level control uses exact aggregation and the minimum packet-edge count from Appendix \ref{appendix:public_randomness_operators}; SeedFlood's SubCGE implementation is not reproduced.

\begin{figure}[htbp]
    \centering
    \includegraphics[width=0.96\linewidth]{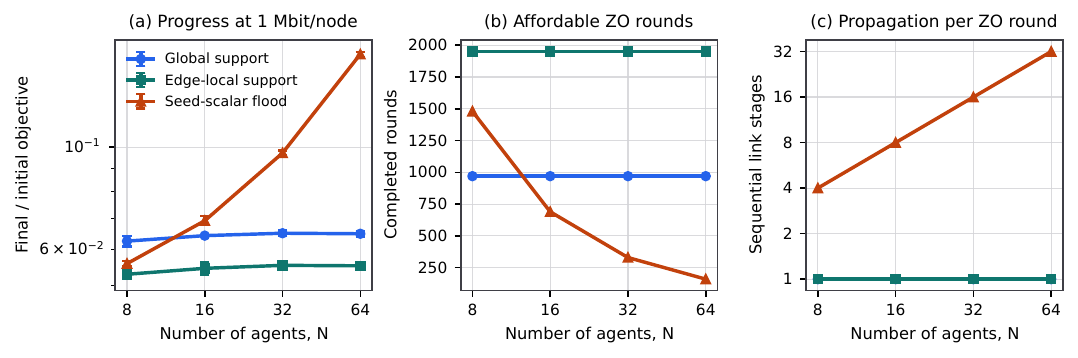}
    \caption{One-hop support locality versus graph-wide seed--scalar dissemination. Bars show one standard deviation over five seeds.}
    \label{fig:appendix_support_locality}
\end{figure}

\begin{table}[htbp]
    \centering
    \caption{Relative objective at $1$ Mbit per node. Parentheses give completed logged rounds.}
    \label{tab:appendix_support_locality}
    \footnotesize
    \setlength{\tabcolsep}{4pt}
    \begin{tabular}{cccc}
        \toprule
        $N$ & Global support & Edge-local support & Seed--scalar flood \\
        \midrule
        8  & $0.0625\pm0.0017$ (970) & $\mathbf{0.0530\pm0.0013}$ (1950) & $0.0558\pm0.0007$ (1480) \\
        16 & $0.0643\pm0.0003$ (970) & $\mathbf{0.0546\pm0.0017}$ (1950) & $0.0692\pm0.0016$ (690) \\
        32 & $0.0650\pm0.0010$ (970) & $\mathbf{0.0554\pm0.0006}$ (1950) & $0.0971\pm0.0012$ (330) \\
        64 & $0.0649\pm0.0010$ (970) & $\mathbf{0.0552\pm0.0004}$ (1950) & $0.1593\pm0.0021$ (160) \\
        \bottomrule
    \end{tabular}
\end{table}

At $N=8$, the compact flood packet offsets its four sequential ring stages and remains competitive. Its payload then grows linearly with $N$ and its propagation depth grows as $N/2$: by $N=64$, only $160$ logged rounds fit the budget. Edge-local support uses one message and one neighbor stage per active node and is best at every tested $N$. Global support is also one-hop, but spends twice the ring payload to mix both neighbors in each round. These results quantify the communication benefit of local mixing as the ring grows.

\paragraph{Same-matching support control.}
We fix the alternating matching and give every edge the same public support and signs. Both matching variants cost 512 bits per node and round, with identical query counts, initialization, and $\eta=4\times10^{-4}$. Their objectives are close at $N=8,16$, separating the scheduling gain from any additional benefit of independent supports. At $N=4$, the common-support variants are unstable on one of five seeds. Table \ref{tab:matching_synthetic_v19} includes all five seeds with unclipped updates.

\begin{table}[htbp]
\centering
\caption{Same-matching control at 1 Mbit per node, five-seed mean $\pm$ std of $F(\bar x_T)/F(\bar x_0)$.}
\label{tab:matching_synthetic_v19}
\small
\setlength{\tabcolsep}{4pt}
\begin{tabular}{rccc}
\toprule
$N$ & All-neighbor, common & Matching, common & Matching, edge-local \\
\midrule
4 & $(1.18\pm2.63)10^{86}$ & $(7.61\pm17.01)10^{106}$ & $0.0466\pm0.0017$ \\
8 & $0.0626\pm0.0017$ & $0.0520\pm0.0013$ & $0.0529\pm0.0013$ \\
16 & $0.0644\pm0.0003$ & $0.0543\pm0.0015$ & $0.0545\pm0.0017$ \\
\bottomrule
\end{tabular}
\end{table}

\subsection{Qwen2-7B Experiments}

\paragraph{Task selection.}
SST-2 starts near its accuracy ceiling, so we fixed a second-task rule before running any method. The candidate pool was QNLI, BoolQ, MRPC, and RTE: binary accuracy tasks with single-token label verbalizers and at most $6000$ validation examples. We evaluated the unmodified Qwen2-7B model on each full validation set, retained tasks with accuracy in $[55,90)\%$, and selected the eligible task with the largest validation set. Table \ref{tab:qwen_task_selection} records the complete screen; the rule selects QNLI.

\begin{table}[H]
    \centering
    \caption{Qwen2-7B task screen before method training.}
    \label{tab:qwen_task_selection}
    \small
    \setlength{\tabcolsep}{6pt}
    \begin{tabular}{lccc}
        \toprule
        Task & Validation examples & Initial accuracy & Selected \\
        \midrule
        QNLI & $5463$ & $60.79\%$ & Yes \\
        BoolQ & $3270$ & $81.01\%$ & No \\
        MRPC & $408$ & $69.12\%$ & No \\
        RTE & $277$ & $78.34\%$ & No \\
        \bottomrule
    \end{tabular}
\end{table}

The confirmatory QNLI study evaluates all $5463$ validation examples for ten paired seeds. Its initial accuracy is $60.79\%$. We fix the directed-link budget at $26.214$ Mbit, enough for $100$ value-only ZO-COSMO rounds. Indexed Rand-$k$ costs $0.442$ Mbit per round and therefore completes $59$ rounds without exceeding the same budget. Method selection and optimization settings are paired by seed; the primary endpoint is full-validation accuracy.

\paragraph{Two-worker protocol.}
The LoRA benchmark simulates two decentralized workers. Each worker has an independent LoRA state and momentum, receives a disjoint contiguous training shard, uses batch size $4$ and maximum sequence length $256$, and participates in exact complete-graph mixing. The network-average model is evaluated on all $872$ SST-2 examples or all $5463$ QNLI examples. Rank-$8$ adapters on the query and value projections give $d=2{,}523{,}136$ trainable coordinates. We use $q=8192$, $\mu=10^{-3}$, and subspace momentum $\beta=0.9$. ZO-COSMO and Indexed Rand-$k$ use $\eta=5{\times}10^{-5}$; the stable $\eta=2.5{\times}10^{-5}$ setting is used for ZO-MARINA-type, whose dense refresh period is $25$ rounds.

Each worker makes two forward queries on its local minibatch. ZO-COSMO uses one public support and exchanges ordered update values. Rand-$k$ uses independent node supports generated from $(s_0,t,i)$; its original encoding also sends their indices. Both replicas apply the same averaged sparse updates. Common initialization therefore keeps their trainable states identical, even when their momentum and queried supports differ. The $N>2$ experiments below test disagreement reduction between distinct peer states.

\paragraph{Seed-aware re-encoding.}
This post-hoc control exposes the existing support-generator seed and node identifier to the receiver. It reconstructs each independent support while preserving the directions, values, update order, and trajectory. An existing 100-round Rand-$k$ run then costs $100qB_{\mathrm{val}}=26.2144$ Mbit per link instead of $100q(B_{\mathrm{val}}+22)=44.2368$ Mbit. The analysis reuses the original training logs. A packet-level test verifies exact float32 trajectory agreement between encodings. Seed setup and packet headers are accounted for separately from recurring value fields.

\begin{table}[H]
    \centering
    \caption{QNLI at $B=26.214$ Mbit per directed link (ten seeds). Seed-aware Rand-$k$ re-encodes existing runs.}
    \label{tab:qwen_diagnostics}
    \renewcommand{\arraystretch}{1.12}
    \small
    \setlength{\tabcolsep}{3.5pt}
    \begin{tabular}{lccc}
        \toprule
        Method & Mbit/round & Rounds & Accuracy (\%) \\
        \midrule
        Explicit-index Rand-$k$ & $0.442$ & $59$ & $61.75{\pm}3.41$ \\
        Seed-aware Rand-$k$ & $0.262$ & $100$ & $66.48{\pm}7.36$ \\
        \rowcolor{gray!10}
        \textbf{ZO-COSMO} & $0.262$ & $100$ & $65.40{\pm}3.57$ \\
        \bottomrule
    \end{tabular}
\end{table}

The supplied CSVs contain individual-seed measurements; figures show mean and standard deviation.

Explicit-index Rand-$k$ leaves $0.44\%$ of the budget unused. The paired ZO-COSMO contrast is $+3.65$ points (95\% percentile-bootstrap CI $[0.49,6.86]$, $100{,}000$ resamples). Against seed-aware Rand-$k$, it is $-1.08$ points (CI $[-5.85,3.67]$), with mean query loss lower by $0.091$ (CI $[0.006,0.175]$). The accuracy interval spans zero. ZO-MARINA-type is reported at its $60.79\%$ initialization because its first refresh costs $80.7$ Mbit, above the comparison budget.

\begin{table}[H]
    \centering
    \caption{QNLI after 100 rounds (ten seeds). The first two rows are identical trajectories under different encodings.}
    \label{tab:qwen_qnli_equal_rounds}
    \small
    \setlength{\tabcolsep}{5pt}
    \begin{tabular}{lcccc}
        \toprule
        Method & Rounds & Mbit/link & Accuracy (\%) & Query loss \\
        \midrule
        Indexed Rand-$k$ & $100$ & $44.237$ & $66.48{\pm}7.36$ & $6.700{\pm}0.178$ \\
        Seed-aware Rand-$k$ & $100$ & $26.214$ & $66.48{\pm}7.36$ & $6.700{\pm}0.178$ \\
        \rowcolor{gray!10}
        \textbf{ZO-COSMO} & $100$ & $\mathbf{26.214}$ & $65.40{\pm}3.57$ & $\mathbf{6.609{\pm}0.190}$ \\
        \bottomrule
    \end{tabular}
\end{table}

\paragraph{Momentum and effective step size.}
On a coordinate with zero momentum state, the first update is $(1-\beta)g$. Thus removing $\beta=0.9$ while holding $\eta$ fixed makes the first update ten times larger. Table \ref{tab:qwen_qnli_momentum_ablation} reports both that stress test and a control with $\eta$ reduced by ten, which isolates momentum memory from this deterministic rescaling.

\begin{table}[H]
    \centering
    \caption{QNLI momentum controls after $100$ rounds (ten seeds). The matched-step comparison gives a $3.92$-point momentum gain, with 95\% paired bootstrap CI $[1.67,6.08]$.}
    \label{tab:qwen_qnli_momentum_ablation}
    \small
    \setlength{\tabcolsep}{5pt}
    \begin{tabular}{lcccc}
        \toprule
        Variant & $\beta$ & $\eta$ & $(1-\beta)\eta$ & Accuracy (\%) \\
        \midrule
        Same nominal step & $0$ & $5{\times}10^{-5}$ & $5{\times}10^{-5}$ & $50.40{\pm}0.61$ \\
        Core, matched first step & $0$ & $5{\times}10^{-6}$ & $5{\times}10^{-6}$ & $61.48{\pm}3.31$ \\
        \rowcolor{gray!10}
        \textbf{ZO-COSMO} & $0.9$ & $5{\times}10^{-5}$ & $5{\times}10^{-6}$ & $\mathbf{65.40{\pm}3.57}$ \\
        \bottomrule
    \end{tabular}
\end{table}

\begin{table}[H]
    \centering
    \caption{SST-2 after $100$ rounds, mean $\pm$ standard deviation over five seeds.}
    \label{tab:qwen_two_worker_lora}
    \renewcommand{\arraystretch}{1.12}
    \small
    \setlength{\tabcolsep}{3pt}
    \begin{tabular}{l c c c c c}
        \toprule
        \textbf{Method} & $\boldsymbol{\eta}$ & \makecell{\textbf{Final acc.}\\\textbf{(\%)}} & \makecell{\textbf{Last-50}\\\textbf{query loss}} & \makecell{\textbf{Total}\\\textbf{Mbit/link}} & \textbf{Relative} \\
        \midrule
        \rowcolor{gray!10}
        \textbf{ZO-COSMO (Ours)} & $5{\times}10^{-5}$ & $94.13{\pm}0.42$ & $0.741{\pm}0.165$ & $\mathbf{26.2}$ & $\mathbf{1.00\times}$ \\
        Indexed Rand-$k$ & $5{\times}10^{-5}$ & $94.15{\pm}0.63$ & $\mathbf{0.725{\pm}0.060}$ & $44.2$ & $1.69\times$ \\
        ZO-MARINA-type & $2.5{\times}10^{-5}$ & $\mathbf{94.84{\pm}0.24}$ & $0.976{\pm}0.301$ & $365.4$ & $13.94\times$ \\
        \bottomrule
    \end{tabular}
\end{table}

\begin{figure}[H]
    \centering
    \includegraphics[width=0.86\linewidth]{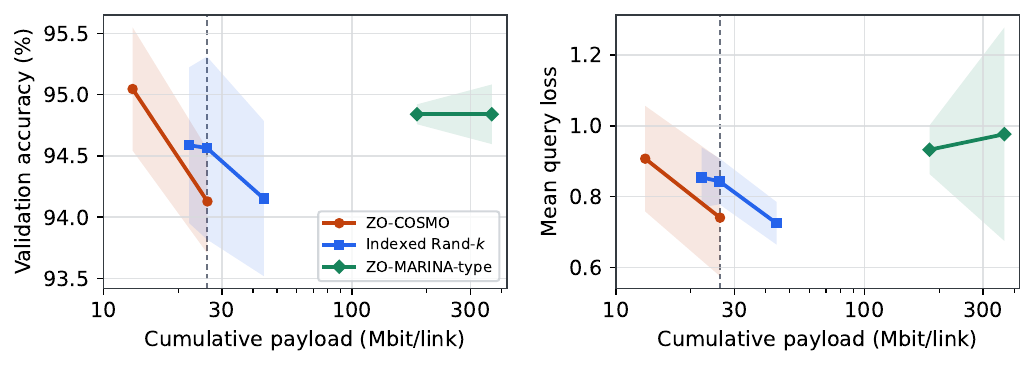}
    \caption{Two-worker Qwen2-7B SST-2 results over five seeds. The dashed line marks the $26.214$M-bit comparison budget.}
    \label{fig:qwen_two_worker_sst2}
\end{figure}

\begin{figure}[H]
    \centering
    \begin{subfigure}{0.48\textwidth}
        \centering
        \includegraphics[width=\linewidth,trim={0 0 0 24bp},clip]{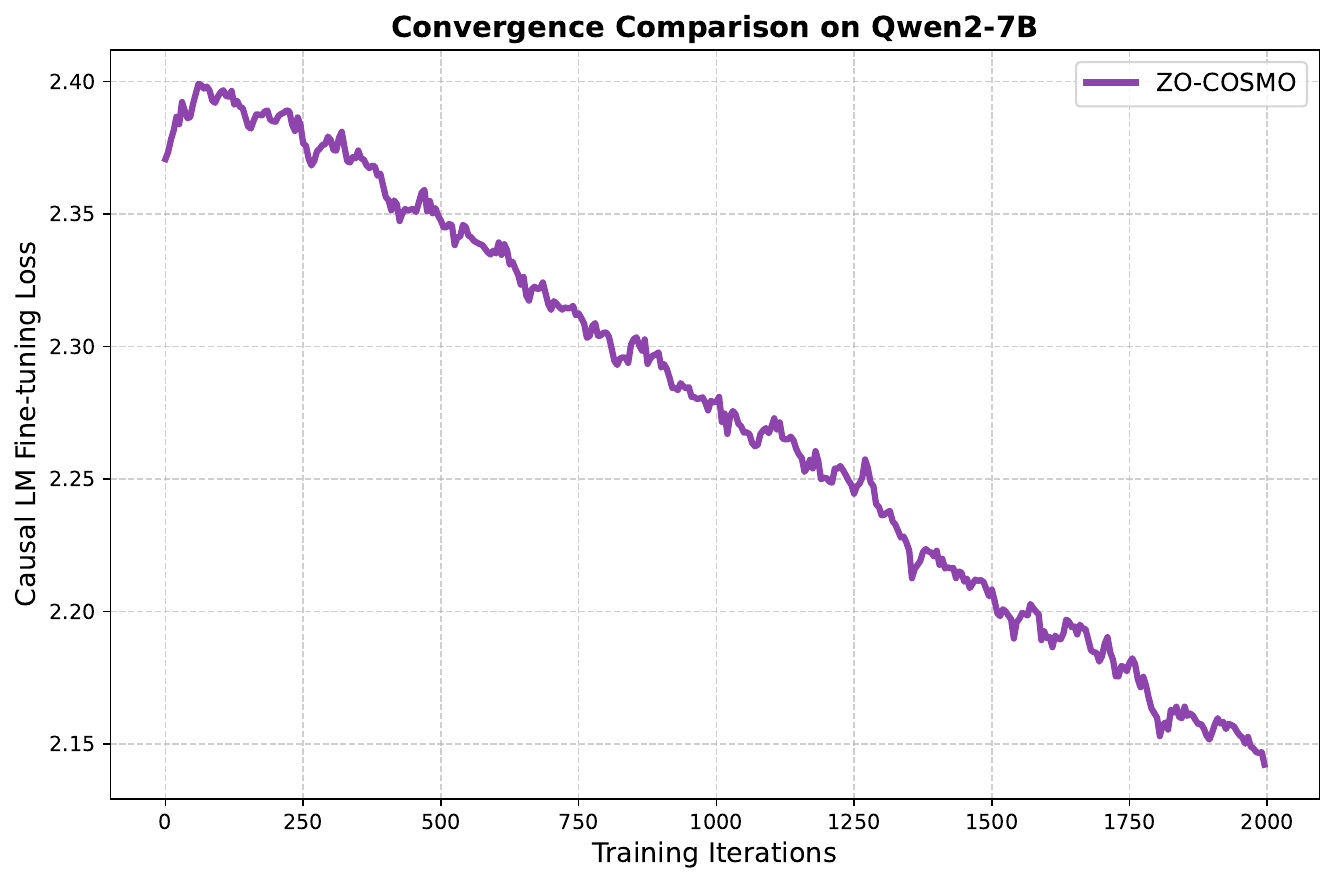}
        \caption{ZO-COSMO.}
        \label{fig:qwen_loss_zocosmo_restored}
    \end{subfigure}\hfill
    \begin{subfigure}{0.48\textwidth}
        \centering
        \includegraphics[width=\linewidth,trim={0 0 0 24bp},clip]{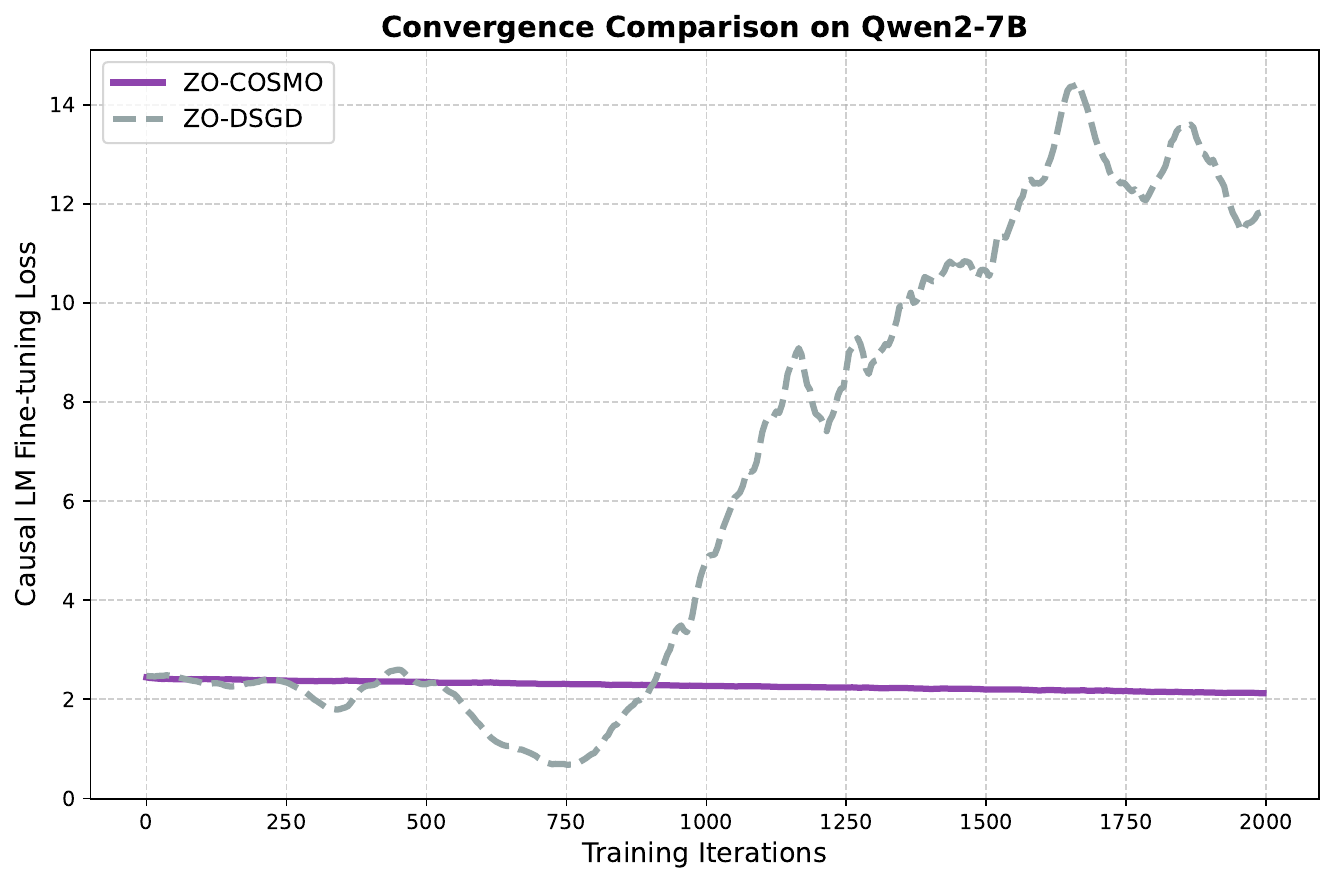}
        \caption{Comparison.}
        \label{fig:qwen_loss_comp_restored}
    \end{subfigure}
    \caption{Qwen2-7B LoRA training-loss trajectories on SST-2. ZO-COSMO reaches loss $2.12$ rather than $11.46$ for dense ZO-DSGD with $308\times$ less total payload.}
    \label{fig:qwen_training_restored}
\end{figure}

At the $26.214$M-bit budget, ZO-COSMO completes $100$ rounds and Indexed Rand-$k$ completes $59$ rounds using $26.100$M bits. Their accuracies are $94.13\pm0.42\%$ and $94.56\pm0.75\%$, respectively; both remain in the pretrained model's near-ceiling regime. ZO-COSMO has the lower interval query loss, $0.741\pm0.165$ versus $0.843\pm0.065$. ZO-MARINA-type cannot purchase its first $80.7$M-bit dense refresh. The maximum sampled discrepancy between synchronized replicas is zero in every run.

\paragraph{Logical-worker topology simulation.}
Every logical worker has its own float32 LoRA state, momentum, and disjoint QNLI shard. Logical workers may share an in-memory copy of the immutable bfloat16 Qwen2-7B backbone. A placement audit runs the same four-worker trajectories on one and two backbone copies: all query values, final adapter fingerprints, and validation metrics agree exactly. Repeated runs and an additional intermediate evaluation also reproduce the trajectory exactly; evaluation restores every worker's state. These checks verify the simulated graph updates independently of memory placement.

We run $N\in\{2,4,8\}$ logical workers for $100$ rounds over complete and ring graphs, using the same $q=8192$, $\mu=10^{-3}$, $\beta=0.9$, batch size $4$, and learning rate $5\times10^{-5}$ as the confirmatory run. The reported model is the network-average LoRA state, evaluated on all $5463$ QNLI validation examples. One gossip pass costs $26.214$ Mbit per directed link. At $N=8$, this is $1.468$ Gbit over the complete graph and $0.419$ Gbit over the ring.

\begin{figure}[H]
    \centering
    \includegraphics[width=0.88\linewidth]{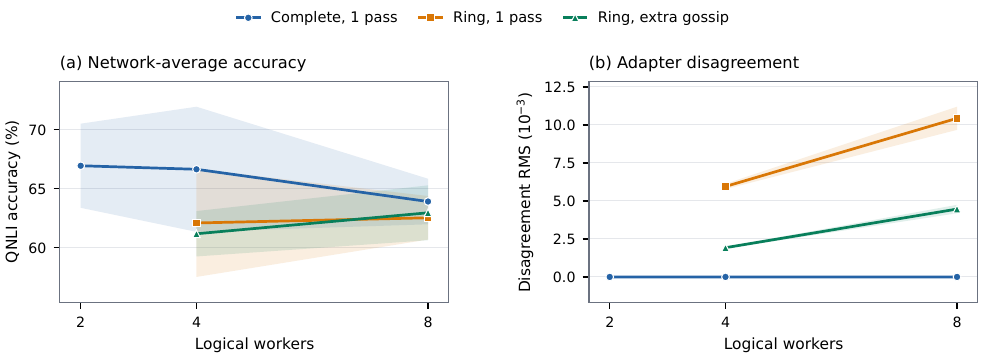}
    \caption{QNLI with $2$--$8$ logical workers after $100$ rounds (five-seed mean $\pm$ std). Extra ring passes are fully charged.}
    \label{fig:qwen_multi_agent}
\end{figure}

The complete-graph means are $66.96\pm3.57\%$, $66.66\pm5.31\%$, and $63.92\pm1.93\%$ for $N=2,4,8$. With one ring pass, $N=4$ reaches $62.10\pm4.58\%$ and $N=8$ reaches $62.54\pm1.85\%$. Their consensus RMS values, sampled on a fixed set of $2048$ LoRA coordinates, are $0.00595\pm0.00018$ and $0.01043\pm0.00076$. Repeating the exchange twice at $N=4$ and four times at $N=8$ changes the spectral factors from $0.333$ to $0.111$ and from $0.805$ to $0.419$. RMS falls to $0.00192\pm0.00005$ and $0.00446\pm0.00028$, while accuracy becomes $61.19\pm1.93\%$ and $62.97\pm2.33\%$. Extra gossip consistently reduces disagreement, but its task-metric benefit is not uniform.

\paragraph{Edge-local QNLI study.}
We next compare global support with Algorithm \ref{alg:edge_local_cosmo} on complete and ring overlays for $N\in\{4,8\}$. The budget is $26.214$ Mbit per node. A global round sends one $q$-value message on every outgoing edge, so the last feasible checkpoints are 33 and 14 rounds on the $N=4$ and $N=8$ complete overlays, and 50 rounds on either ring. A perfect-matching round sends one message per node, allowing 100 edge-local rounds. The index-charged control is the same edge-local trajectory at round 59, re-encoded with 22 coordinate bits per value. We report full-validation accuracy and label NLL over five paired seeds; the 100-round control separates communication efficiency from per-round mixing strength.

\begin{figure}[H]
    \centering
    \includegraphics[width=0.98\linewidth]{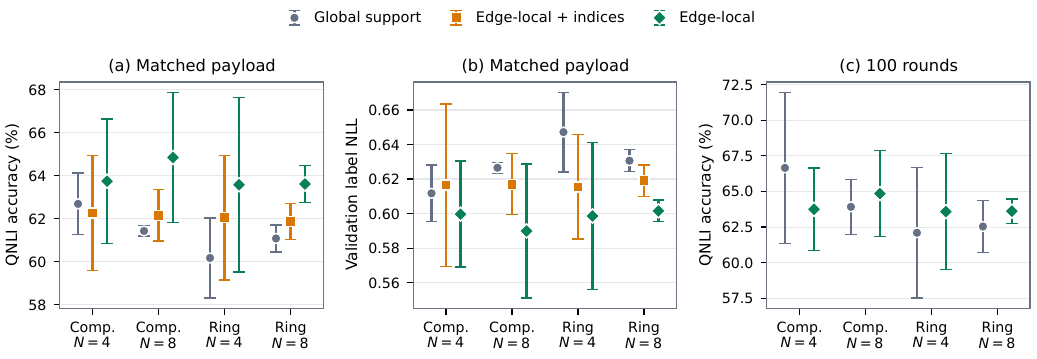}
    \caption{QNLI with global and edge-local support (five-seed mean $\pm$ std). The first two panels fix $26.214$ Mbit per node; the last fixes 100 rounds.}
    \label{fig:qwen_edge_local}
\end{figure}

On the eight-worker complete graph, global support, index-charged edge-local, and index-free edge-local updates reach $61.44\pm0.24\%$, $62.16\pm1.19\%$, and $64.85\pm3.03\%$ accuracy. The index-free gain over global support is $3.42$ points (95\% paired bootstrap CI $[1.06,5.63]$), with a label-NLL reduction of $0.0366$ (CI $[0.0030,0.0649]$). On the eight-worker ring, the endpoints are $61.09\pm0.63\%$, $61.88\pm0.83\%$, and $63.62\pm0.85\%$: an accuracy gain of $2.53$ points (CI $[1.87,3.07]$) and an NLL reduction of $0.0291$ (CI $[0.0238,0.0343]$).

At $N=4$, the corresponding accuracy gains are $1.05$ points on the complete graph (CI $[-2.00,4.10]$) and $3.40$ points on the ring (CI $[0.31,6.49]$). Equal-100-round accuracy intervals span zero in three settings; the eight-worker ring gives $+1.08$ points (CI $[0.07,2.07]$), while all four equal-round NLL intervals span zero. Matching purchases more updates per bit, whereas all-neighbor mixing reaches more peers on each selected coordinate. The alternating ring satisfies the two-round contraction in \eqref{eq:alternating_ring_block_gap}. These paired intervals describe each comparison separately, without a multiple-comparison adjustment. The next control holds scheduling fixed.

Here $d/q=308$, so the 100-round Qwen runs have exposure $\tau=qT/d\approx0.325$. Under iid uniform supports, a coordinate has been selected with probability $1-(307/308)^{100}\approx0.278$. The longer control below extends Qwen training to $\tau=5$. The quadratic mechanism study reaches $\tau=200$, covering substantially more repeated coordinate activations.

\input{v28_qwen_correlation_experiment}

The single-worker full-model study tests sparse-update stability at $6.53$B trainable coordinates and compares two encodings of the same trajectory. With $q=8192$, each update costs $262.1$K bits with public support or $532.5$K bits with explicit coordinates.

In the Qwen full-model study, each step samples a public sparse Rademacher direction supported on $q$ coordinates of the selected trainable parameter set. The two-query loss difference is
\begin{equation}
    \widehat{\partial}_t
    =
    \frac{
        \ell(\theta_t+\mu u_t;\xi_t)
        -
        \ell(\theta_t-\mu u_t;\xi_t)
    }{2\mu},
    \label{eq:appendix_qwen_directional}
\end{equation}
and the communicated sparse values correspond to
\begin{equation}
    \widehat{g}_{t,S_t}
    =
    \frac{d}{q}\widehat{\partial}_t u_{t,S_t}.
    \label{eq:appendix_qwen_sparse_grad}
\end{equation}
Momentum is stored only on coordinates that have appeared in the sparse support, so the auxiliary optimizer state scales with the visited support rather than with the full parameter dimension. In the full-model run, the model itself is loaded in bfloat16, while the sparse momentum map can remain on CPU and grow with the number of distinct sampled coordinates.

The full-model study uses the local Qwen2-7B checkpoint, SST-2 batches of size $8$, maximum sequence length $256$, and evaluation on the SST-2 validation set. The trainable coordinate set contains all model parameters except token embeddings and the language-model head, giving $6.53{\times}10^9$ coordinates. For $q=8192$, the value-only payload is
\begin{equation}
    qB_{\mathrm{val}}
    =
    8192\cdot 32
    =
    262{,}144
    \quad\text{bits per step},
    \label{eq:qwen_value_payload_numeric}
\end{equation}
whereas an indexed sparse message additionally pays
\begin{equation}
    q\lceil\log_2(6.53{\times}10^9)\rceil
    =
    8192\cdot 33
    =
    270{,}336
    \quad\text{index bits per step}.
    \label{eq:qwen_index_payload_numeric}
\end{equation}
Adding coordinate identifiers to the same updates gives $532.5$K bits per step, $2.03\times$ the value-only payload. Supports, signs, and values are unchanged, so the difference is the support-description cost.

This single-worker full-model run verifies sparse two-query updates at the advertised dimensionality, where coordinate metadata is already comparable to the value payload. The multi-worker LoRA experiments above evaluate decentralized optimization and graph topology.

\begin{figure}[H]
    \centering
    \includegraphics[width=0.72\linewidth]{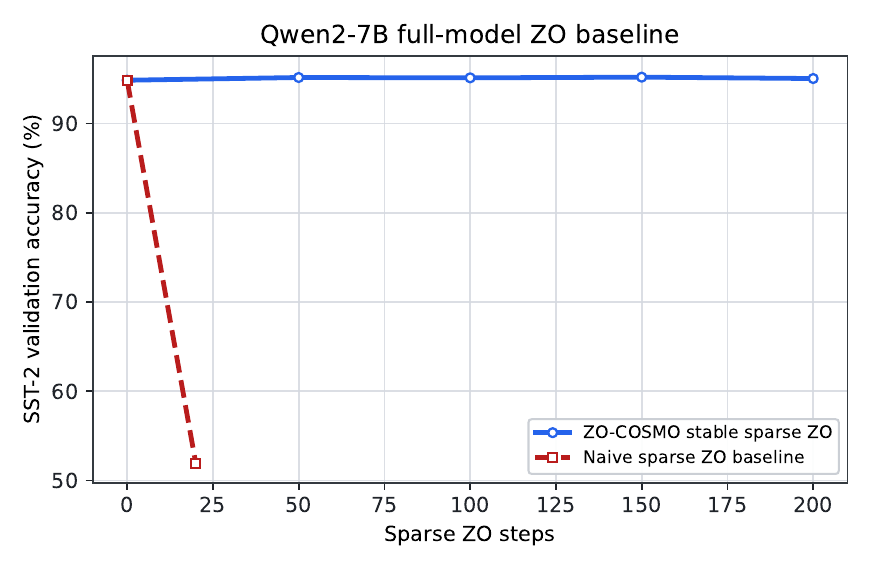}
    \caption{Qwen2-7B full-model sparse ZO study on SST-2.}
    \label{fig:qwen_stable}
\end{figure}

\begin{table}[H]
    \centering
    \caption{Qwen2-7B full-model sparse ZO communication check.}
    \label{tab:qwen_indexed_sparse}
    \renewcommand{\arraystretch}{1.12}
    \small
    \setlength{\tabcolsep}{3pt}
    \begin{tabular}{>{\raggedright\arraybackslash}p{3.45cm} c >{\raggedright\arraybackslash}p{3.1cm} c c}
        \toprule
        \textbf{Run / accounting} & \textbf{Steps} & \textbf{Outcome} & \makecell{\textbf{Bits}\\\textbf{/ Step}} & \makecell{\textbf{Relative}\\\textbf{Payload}} \\
        \midrule
        \rowcolor{gray!10}
        \textbf{ZO-COSMO (Ours)} & 200 & \makecell[l]{$\mathbf{94.84\%\rightarrow}$\\$\mathbf{95.03{\pm}0.13\%}$} & $\mathbf{262.1}$K & $\mathbf{1.00\times}$ \\
        Same ZO-COSMO trajectory + explicit indices & 200 & identical after re-encoding & $532.5$K & $2.03\times$ \\
        Naive sparse ZO, large step & 20 & $94.84\%\rightarrow51.95\%$ & $262.1$K & $1.00\times$ \\
        \bottomrule
    \end{tabular}
\end{table}

All random-support methods use 32-bit payload accounting. The index fraction $\ell_d/(B_{\mathrm{val}}+\ell_d)$ increases at smaller value widths (Table \ref{tab:appendix_index_share}); those lower-bit entries are payload calculations, not quantized training results.

\subsection{Reproducibility Artifacts}
\label{appendix:reproducibility_artifacts}

Raw logs record the method, seed, topology when applicable, support size, metrics, and cumulative communication cost. Tables use the fixed-budget rule in Appendix \ref{appendix:fixed_budget_protocol}; figures are generated from the same logs.

\begin{table}[htbp]
    \centering
    \caption{Main artifacts used for the reported experiments.}
    \label{tab:appendix_artifacts}
    \footnotesize
    \setlength{\tabcolsep}{3pt}
    \begin{tabular}{>{\raggedright\arraybackslash}p{2.3cm}>{\raggedright\arraybackslash}p{3.0cm}>{\raggedright\arraybackslash}p{3.25cm}>{\raggedright\arraybackslash}p{3.25cm}}
        \toprule
        Experiment & Config & Raw or summary logs & Main figure/table artifact \\
        \midrule
        Rosenbrock & main, strong-baseline, network-stress, and support-locality YAMLs & fixed-budget, $N$-sweep, and support-locality raw logs & composite synthetic figures and summary CSVs \\
        Correlation mechanism & original and extension YAMLs & protocols, trajectories, and frozen moments & eight-seed moment and quadratic plots; plot audits \\
        Qwen2-7B & two-worker, topology, edge-local, and full-model YAMLs & ten-seed QNLI, edge-local, SST-2, and full-model logs & LoRA communication, topology, and full-model figures \\
        Communication & serialization and transfer scripts & serialization and 31-trial transfer CSVs & appendix communication diagnostics \\
        \bottomrule
    \end{tabular}
\end{table}

The original mechanism study uses \path{configs/correlation_v21.yaml}, with raw CSVs and its source manifest in \path{results/correlation_v21/}. The extension uses \path{configs/correlation_v23.yaml} and \path{results/correlation_v23/}; its main panels were fixed before training, and the additional appendix views have a separate plot audit. Paths are relative to the experiment-code directory.

The uniform-precision Qwen topology and edge-local study uses \path{configs/qwen_precision_recheck_v25.yaml}; all 55 trials and 95 checkpoints are retained in \path{results/qwen_precision_recheck_v25/}. Aggregation checks initialization fingerprints, source hashes, checkpoint completeness, and payload formulas. The longer fixed-matching study uses \path{configs/qwen_qnli_correlation_v25_fp32.yaml} and \path{results/qwen_correlation_v25_fp32/}. Each trial records its query trace, coordinate coverage, evaluations, and final adapter states. The evaluation and placement audits are stored separately.

The common step-drop extension uses \path{configs/qwen_qnli_step_drop_v27.yaml}. All 15 runs and 75 checkpoints are in \path{results/qwen_step_drop_v27/}; \path{scripts/summarize_qwen_step_drop.py} verifies the archived reference hashes, exact round-100 matches, recorded learning rates, and payload counters before producing the figures and paired statistics. All three couplings use the same schedule.

The support generator folds the seed with the round number and, when needed, a node or unordered edge identifier. Global-support ZO-COSMO uses no node identifier; edge-local ZO-COSMO agrees within each matched pair. Independent Rand-$k$ uses node identifiers. Those supports are private only when the generator seed is withheld: making it public permits the seed-aware encoding without changing the independent-support trajectories.

The Qwen configuration specifies the HuggingFace checkpoint and trainable-coordinate filter. The topology runner records logical workers, graph, spectral factor, active links, per-node and total payload, and sampled consensus error. The full-model run excludes embeddings and the language-model head, uses bfloat16 weights, and stores sparse momentum in a CPU dictionary. These choices reduce memory use while preserving the recorded protocol and payload.

\subsection{Implementation Details}
\label{appendix:implementation_notes}

Global support is keyed by the master seed and round; edge-local support additionally uses the unordered endpoints of the active edge. The same support serves queries and message decoding. The explicit-index baselines send coordinate lists, while the seed-aware control reconstructs independent supports from node-specific public keys.

For dense baselines, communication is charged as a full vector payload even if a local implementation stores tensors in contiguous blocks. For sparse baselines, communication is charged per directed link and per sparse vector. If an undirected graph edge carries messages in both directions, the total edge traffic is twice the directed-link number. The main method comparisons use directed-link cost. The edge-local topology study instead fixes payload per node because global and matching rounds activate different numbers of links; total network payload is recorded as well.

Fine-tuning payloads are counted with 32-bit values. Table \ref{tab:appendix_index_share} gives the corresponding representation costs at other value widths.

\paragraph{Qwen LoRA-scale experiment.}
Dense ZO-DSGD communicates all LoRA parameters; ZO-COSMO sends only public sparse values. Both use forward-only loss differences. Each logical worker keeps independent float32 adapter and momentum states; the frozen backbone uses bfloat16. Adapter precision is set explicitly when each adapter is created and reset. Sharing a frozen backbone changes memory placement and scheduling, not these states or the graph update.

\paragraph{Qwen full-model experiment.}
The trainable set excludes token embeddings and the language-model head. Weights use bfloat16, and a coordinate map stores sparse momentum without a dense update buffer. The stable runs use seeds $0,1,2$ and are re-encoded with explicit indices; the oversized-step row is a one-seed failure check. Table \ref{tab:appendix_payload_examples} gives the cost of a hypothetical dense refresh.

\subsection{Ablation Design}
\label{appendix:ablation_interpretation}

The momentum ablation keeps the public support and removes the momentum buffer; its matched-first-step control accounts for the resulting scale change. The independent-support ablation changes cross-node alignment. We evaluate both its original coordinate-list encoding and the seed-aware encoding reconstructed from node-specific public seeds.

The same-update indexed control isolates representation: it uses the identical public support, estimator, momentum, and masked consensus as ZO-COSMO, then charges the coordinate list that an ordinary sparse format would carry. Com-DSZO tests a different question. Its Top-$k$ innovation is data-adaptive and error compensated, but it communicates indices and keeps two dense auxiliary arrays in addition to its compressed innovation. Comparing these methods separates an index-free representation gain from the optimization and memory benefits of a stronger compressor.

The periodic-refresh control's first dense packet exceeds the Qwen LoRA comparison budget, so that entry is marked at initialization. Its trained equal-round SST-2 result is reported separately with the full cumulative payload.

\section{Broader Impact and Ethics}
Forward-only fine-tuning reduces storage of backward-pass activations. Local data and exchanged model states still require access controls; function-value access and decentralized execution alone provide no privacy guarantee. The experiments evaluate optimization and payload use with logical workers, while hardware and execution details are reported in Appendix \ref{appendix:experimental_details}.

\end{document}

%% file: figures/one_hop_compatibility_v20.tex
\begin{tikzpicture}[
    font=\small,
    >=Stealth,
    packet/.style={draw=gray!55, fill=gray!3, line width=0.6pt,
        minimum width=2.25cm, minimum height=0.58cm, inner sep=3pt},
    receiver/.style={draw=gray!65, fill=white, line width=0.7pt,
        minimum width=1.18cm, minimum height=1.35cm, align=center},
    flow/.style={->, draw=gray!65, line width=0.85pt}
]
\node[font=\bfseries\small] at (2.85,1.35) {Public, incompatible supports};
\node[font=\bfseries\small] at (9.05,1.35) {ZO-COSMO: aligned supports};
\draw[gray!35] (0,1.02) -- (5.70,1.02);
\draw[gray!35] (6.20,1.02) -- (11.90,1.02);
\node at (0.30,0.48) {$j$};
\node at (0.30,-0.43) {$\ell$};
\node[packet] (jl) at (2.10,0.48) {$(y_j^{(1)},y_j^{(2)})$};
\node[packet] (ll) at (2.10,-0.43) {$(y_\ell^{(2)},y_\ell^{(3)})$};
\node[receiver] (il) at (4.90,0.03) {$i$\\[3pt]$K=\{1,2\}$};
\draw[flow] (jl.east) -- ($(il.west)+(0,0.34)$);
\draw[flow] (ll.east) -- ($(il.west)+(0,-0.34)$);
\node[text=red!65!black] at (2.85,-1.18) {$y_\ell^{(1)}$ missing: exact average unavailable};
\node at (6.50,0.48) {$j$};
\node at (6.50,-0.43) {$\ell$};
\node[packet,draw=green!40!black,fill=green!4] (jr) at (8.30,0.48) {$(y_j^{(1)},y_j^{(2)})$};
\node[packet,draw=green!40!black,fill=green!4] (lr) at (8.30,-0.43) {$(y_\ell^{(1)},y_\ell^{(2)})$};
\node[receiver] (ir) at (11.10,0.03) {$i$\\[3pt]$K=\{1,2\}$};
\draw[flow] (jr.east) -- ($(ir.west)+(0,0.34)$);
\draw[flow] (lr.east) -- ($(ir.west)+(0,-0.34)$);
\node[text=green!35!black] at (9.05,-1.18) {All target values present; no index field};
\end{tikzpicture}

%% file: v28_correlation_main.tex
\subsection{Direction Correlation and Heterogeneity}
\label{sec:correlation_mechanism}

Support agreement makes a message decodable; direction agreement also changes the update statistics. Hold the matching fixed, with $N=2m$. Let $a_e=(\nabla F_i(x_i)+\nabla F_j(x_j))/2$ on pair $e=\{i,j\}$, $\bar a=m^{-1}\sum_e a_e$, and $H_M^2=m^{-1}\sum_e\|a_e-\bar a\|^2$. The linearized two-query update is $v_e=D_ea_e$, where $D_e=(d/q)u_eu_e^\top$ and $\bar v=m^{-1}\sum_e v_e$.

Both endpoints share $u_e$. Across pairs, G shares the support and signs, S shares only the support, and I draws both independently. All three use the same matching and send $q$ values per node.

\begin{proposition}\label{prop:corr_moments}
Condition on the states and matching. With $V_s=\mathbb E\|\bar v-\bar a\|^2$ and $U_s=m^{-1}\sum_e\mathbb E\|v_e-\bar v\|^2$,
\begin{align}
 V_{\mathrm G}&=(d-1)\|\bar a\|^2,&
 V_{\mathrm I}&=\frac{d-1}{m}(\|\bar a\|^2+H_M^2),\label{eq:corr_main_variance}\\
 V_s+U_s&=(d-1)\|\bar a\|^2+dH_M^2,
 &&s\in\{\mathrm G,\mathrm S,\mathrm I\}.\label{eq:corr_main_energy}
\end{align}
\end{proposition}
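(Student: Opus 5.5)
The plan is to use only two moment facts about $D=(d/q)uu^\top$, for a vector $a$ that is fixed given the conditioning. First, $\mathbb E[D]=I_d$, which follows from \eqref{eq:uut}. Second, $\mathbb E\|Da\|^2=d\|a\|^2$, which is \eqref{eq:sparse_direction_second_moment_lemma} in Lemma~\ref{lemma:sparse_direction_identities}. Conditioning on the states and the matching makes every $a_e$ and $\bar a$ deterministic. All remaining randomness is then in the directions $u_e$, and the three couplings G, S, I differ only in the joint law of $\{u_e\}_e$. Each marginal $u_e$ is a uniform sparse Rademacher direction in all three cases. So every single-pair moment is the same across couplings; only cross-pair moments can differ.

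\textbf{Energy identity \eqref{eq:corr_main_energy}.} I would prove this first, since it needs no cross moments.

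1. Apply the parallel-axis identity to the empirical distribution of $\{v_e\}$, whose mean is $\bar v$. Pathwise this gives $\|\bar v-\bar a\|^2+m^{-1}\sum_e\|v_e-\bar v\|^2=m^{-1}\sum_e\|v_e-\bar a\|^2$.
2. Take expectations to get $V_s+U_s=m^{-1}\sum_e\mathbb E\|v_e-\bar a\|^2$. This quantity involves only single-pair marginals.
3. Expand using the two facts: $\mathbb E\|v_e-\bar a\|^2=d\|a_e\|^2-2\langle a_e,\bar a\rangle+\|\bar a\|^2$.
4. Average over $e$, using $m^{-1}\sum_e\|a_e\|^2=\|\bar a\|^2+H_M^2$. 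This gives $d(\|\bar a\|^2+H_M^2)-\|\bar a\|^2$, which is \eqref{eq:corr_main_energy} for every $s$.

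\textbf{Variances \eqref{eq:corr_main_variance}.}

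For G, all pairs share one $u$, so $\bar v=D\bar a$. Then $V_{\mathrm G}=\mathbb E\|D\bar a\|^2-\|\bar a\|^2=(d-1)\|\bar a\|^2$. Here heterogeneity cancels because the common operator acts on the average.

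For I, write $\bar v-\bar a=m^{-1}\sum_e(D_e-I)a_e$. The summands are independent and mean zero, so all cross terms vanish. Each diagonal term is $\mathbb E\|(D_e-I)a_e\|^2=(d-1)\|a_e\|^2$. Summing gives $V_{\mathrm I}=m^{-2}(d-1)\sum_e\|a_e\|^2=\tfrac{d-1}{m}(\|\bar a\|^2+H_M^2)$.

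The S case is not asserted separately, so it needs only the energy identity. If a check is wanted, the cross moment under shared support and independent signs is $\mathbb E\langle D_ea,D_fb\rangle=(d/q)\langle a,b\rangle$, but I would not need it.

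\textbf{Expected obstacle.} The delicate point is bookkeeping rather than any estimate. One must state clearly that the identities are for the linearized update $v_e=D_ea_e$, conditional on states and matching, so that $a_e$ is independent of the directions. One must also note that \eqref{eq:corr_main_energy} uses only the marginals of each $u_e$, which is exactly why it holds uniformly over all three couplings. Everything else is the two moment identities.
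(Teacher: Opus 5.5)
Your proposal is correct and is essentially the paper's proof. It uses the same two moment identities ($\mathbb E D_e=I$, $\mathbb E\|D_ea\|^2=d\|a\|^2$), the commutation $\bar v=D\bar a$ for G, independence of the centered terms $(D_e-I)a_e$ for I, and a pathwise average/disagreement split for the energy identity; the only difference is that the paper centers that split at the origin, writing $U_s=m^{-1}\sum_e\mathbb E\|v_e\|^2-\mathbb E\|\bar v\|^2$ and subtracting $V_s$, whereas you center it at $\bar a$, which shows directly that $V_s+U_s$ depends only on single-pair marginals.
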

\begin{proof}
See Appendix \ref{appendix:correlation_moments}, including the formula for S.
\end{proof}

Under G, averaging commutes with the random map: $\bar v=D\bar a$. Local gradient components that cancel in $\bar a$ therefore cancel before estimation. Independent maps trade this cancellation for averaging direction noise across pairs. For $d>1$, I has smaller $V_s$ precisely when $H_M^2<(m-1)\|\bar a\|^2$; G benefits on the other side of this threshold. Equation \eqref{eq:corr_main_energy} tracks the accompanying disagreement injection. Correlation therefore changes the update itself, even when the schedule and payload are fixed.

With unequal quadratic Hessians, state disagreement controls the residual (Appendix Proposition \ref{prop:curvature_v23}). Let $\mathcal A$ denote masked pair averaging and $V$ the applied update matrix. The two rows of $V$ on each pair agree, giving $\mathcal A V=V$ even though queries and mixing share a mask. Appendix Proposition \ref{prop:momentum_moments_v23} extends the moments to oracle noise, stored momentum, and finite differences. Theorems \ref{thm:corr_core} and \ref{thm:momentum_convergence} give, respectively, a sharper deterministic core rate for iid matchings and convergence with stochastic sparse momentum.

%% file: v23_correlation_experiment.tex
\subsection{Mechanism Experiment}
\label{sec:correlation_experiment}

We compare G, S, and I with $N=8$, $d=32$, $q=4$, and paired matchings. Local quadratics have unequal positive diagonal Hessians and centered linear terms with $N^{-1}\sum_i\|b_i\|^2=4$. Noisy queries add fresh independent $\epsilon_i^\top x$, with $\mathbb E\|\epsilon_i\|^2=0.25$, reusing each sample within its query pair.

Figure \ref{fig:correlation_mechanism} uses iid ring matchings and $\eta=0.004$. With oracle noise, G reaches $(1.06\pm0.30)\times10^{-3}$ versus $(9.60\pm1.69)\times10^{-3}$ for I; with sparse momentum $\beta=0.9$ at the same step, the corresponding values are $(1.00\pm0.13)\times10^{-3}$ and $(10.60\pm2.70)\times10^{-3}$. Noise leaves a nonzero floor.

Matching the first update by increasing the momentum step to $0.04$ reverses the comparison: G ends at $1.30\pm1.42$, versus $0.131\pm0.021$ for I. Appendix \ref{appendix:extension_experiments_v23} includes this control, zero-linear-heterogeneity cases, and all 1536 runs across two steps and topologies.

\begin{figure}[!htbp]
\centering
\includegraphics[width=\linewidth]{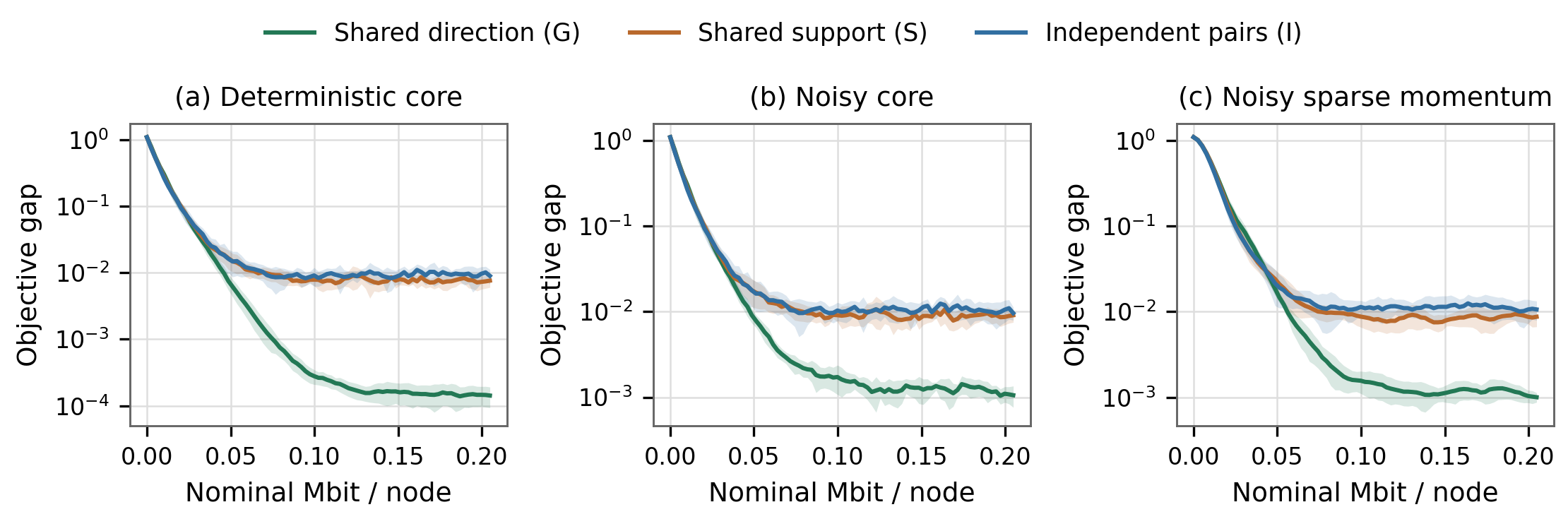}
\caption{Different local curvatures on an eight-node ring. Eight-seed mean $\pm$ std at equal queries and nominal payload; (c) uses the same step as (b).}
\label{fig:correlation_mechanism}
\end{figure}

%% file: v28_qwen_long_main.tex
Long-horizon training remains effective with a common tenfold step reduction after round $100$. At $1540$ rounds on the four-worker ring, the three couplings reach $74.07$--$77.82\%$ accuracy over five seeds. With matching and support draws fixed, independent edge signs improve accuracy by $3.75$ points (CI $[0.24,6.79]$) and lower label NLL by $0.027$ (CI $[0.009,0.049]$). The gain comes from query correlation at unchanged scheduling and support selection (Appendix \ref{appendix:qwen_long_correlation}).

%% file: v28_momentum_proof.tex
\subsection{Convergence with Sparse Momentum}
\label{appendix:momentum_convergence}

The finite difference and local update use the current mixing support. We apply contraction to the pre-query state and bound the resulting update separately. This argument covers global support, independent supports on matched edges, and a common support on all matched edges.

Write $\mathcal A_t$ for the linear mixing operator and $\Pi=I-N^{-1}\mathbf1\mathbf1^\top$. Each $\mathcal A_t$ preserves the average, commutes with $\Pi$, and is nonexpansive in Frobenius norm. Suppose that for an integer $h\ge1$ and $\gamma\in(0,1]$, every block satisfies
\begin{equation}\label{eq:momentum_block_contraction}
 \mathbb E\!\left[\|\mathcal A_{kh+h-1}\cdots\mathcal A_{kh}Z\|_F^2
       \mid\mathcal F_{kh}\right]\le(1-\gamma)\|Z\|_F^2
 \qquad (\mathbf1^\top Z=0),
\end{equation}
for every $\mathcal F_{kh}$-measurable $Z$. The block's supports and matching schedule are independent of the state at its start. Their dependence on subsequent within-block iterates is handled explicitly below. Every node is active, and its rank-$q$ projector $P_{i,t}$ is conditionally uniform given the past. Put $p=q/d$ and $a_\beta=\beta/(1-\beta)$.

\begin{theorem}\label{thm:momentum_convergence}
Under Assumptions \ref{assum:smooth_lower}, \ref{assum:oracle}, and \ref{assum:third}, consider either global or perfect-matching ZO-COSMO with \eqref{eq:optional_subspace_momentum}, common initialization, and zero initial momentum. If \eqref{eq:momentum_block_contraction} holds, then for any constant $\eta>0$, $\mu>0$, and $0\le\beta<1$,
\begin{equation}\label{eq:momentum_finite_rate}
 \frac1T\sum_{t<T}\mathbb E\|\nabla F(\bar x_t)\|^2
 \le \mathcal O\!\left(
 \frac{\Delta_0}{\eta T}+L\eta M_{\mathrm{node}}^2+B_\mu^2
 +L^2\eta^2M_{\mathrm{node}}^2
       \left[\frac{h^2}{\gamma^2}+\frac{a_\beta^2}{p}\right]\right).
\end{equation}
The hidden constant is numerical. Choosing $\eta=\Theta(T^{-1/2})$ and $\mu=\Theta((dq)^{-1/2}T^{-1/4})$ yields vanishing average stationarity for fixed problem parameters, $h,\gamma$, and $\beta$. A directed-link or per-node budget is incorporated by substituting the number of rounds affordable under that protocol's actual payload.
\end{theorem}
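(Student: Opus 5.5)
We use the auxiliary sequence \eqref{eq:main_momentum_identity}: $z_t=\bar x_t-\eta a_\beta\bar m_t$ with $z_{t+1}=z_t-\eta\bar g_t$. Here $\bar g_t=N^{-1}\sum_i P_{i,t}\hat g_{i,t}$ is the node average of the fresh sparse estimators, because every node is active. The identity follows from average preservation of $\mathcal A_t$ and the momentum recursion \eqref{eq:optional_subspace_momentum}. On active coordinates, $\beta m+(1-\beta)\hat g$ combined with the step $-\eta P m_{+}$ telescopes exactly as in the heavy-ball auxiliary-sequence trick, and inactive coordinates leave both $x$ and $m$ unchanged.

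\textbf{Descent step.} Apply $L$-smoothness to $F$ at $z_t$:
\[
\mathbb E F(z_{t+1})\le \mathbb E F(z_t)-\eta\,\mathbb E\langle\nabla F(z_t),\bar g_t\rangle+\tfrac{L\eta^2}{2}\mathbb E\|\bar g_t\|^2 .
\]
Conditionally on $\mathcal F_t$, Proposition \ref{prop:estimator} gives $\mathbb E[\bar g_t\mid\mathcal F_t]=N^{-1}\sum_i(\nabla F_i(x_{i,t})+b_{\mu})$, since $\mathbb E P_{i,t}uu^\top=\mathbb E uu^\top$. It also gives $\mathbb E\|\bar g_t\|^2\le M_{\mathrm{node}}^2$ by Jensen. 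Young's inequality then splits the inner product into three parts: $\frac12\|\nabla F(\bar x_t)\|^2$; a penalty $CL^2(\|z_t-\bar x_t\|^2+\mathcal E_t)$; and $CB_\mu^2$. Summing and telescoping with $F\ge F^*$ yields
\[
\frac{\Delta_0}{\eta T}+L\eta M_{\mathrm{node}}^2+B_\mu^2+L^2\cdot\frac1T\sum_t\bigl(\eta^2a_\beta^2\mathbb E\|\bar m_t\|^2+\mathcal E_t\bigr).
\]
It remains to bound the two averaged error terms by $\eta^2M_{\mathrm{node}}^2a_\beta^2/p$ and $\eta^2M_{\mathrm{node}}^2h^2/\gamma^2$, respectively.

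\textbf{Momentum term.} Fix a coordinate $k$. Coordinate $k$ of $m_{i,t}$ is a geometric average of past estimator entries, decayed only at rounds where $k$ is active, which happens with probability $p$ per round. Each active entry has conditional second moment controlled by $M_{\mathrm{node}}^2$ through the moment identity of Lemma \ref{lemma:sparse_direction_identities}. Jensen over the $(1-\beta)$-weights gives $\mathbb E\|m_{i,t}\|^2\le C M_{\mathrm{node}}^2/p$: the weights sum to at most one, and the $1/p$ comes from the coordinate-wise scaling $d/q$ of entries that persist until refreshed. Hence $\eta^2a_\beta^2\mathbb E\|\bar m_t\|^2\le C\eta^2a_\beta^2M_{\mathrm{node}}^2/p$. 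I expect some care here in tracking which coordinates carry stale mass, but a crude per-coordinate accounting should suffice.

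\textbf{Consensus term, the main obstacle.} The difficulty is that the current update shares the mask with $\mathcal A_t$, so the block contraction \eqref{eq:momentum_block_contraction} cannot be applied to $Y_t$ directly. Following the preamble, write $X_{t+1}=\mathcal A_t X_t-\eta\mathcal A_t U_t$, where $U_t$ stacks $P_{i,t}m_{i,t+1}$. Unroll over a block starting at $kh$:
\[
\Pi X_{kh+h}=\mathcal A_{[kh,kh+h)}\Pi X_{kh}-\eta\sum_{s}\mathcal A_{(s,kh+h)}\mathcal A_s\Pi U_s .
\]
The first term is $\mathcal F_{kh}$-measurable at its input and contracts by $1-\gamma$. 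Each of the $h$ update terms is bounded by nonexpansiveness alone, with no independence needed, giving $\|\cdot\|_F^2\le\|U_s\|_F^2$. By Cauchy--Schwarz and $(a+b)^2\le(1+\gamma/2)a^2+(1+2/\gamma)b^2$, we get
\[
\mathcal E_{(k+1)h}\le(1-\gamma/2)\,\mathcal E_{kh}+C\eta^2h^2\gamma^{-1}\max_s N^{-1}\mathbb E\|U_s\|_F^2 .
\]
The same unrolling bounds within-block indices. The key point is that $\mathbb E\|P_{i,s}m_{i,s+1}\|^2\le C M_{\mathrm{node}}^2$ with no $1/p$ loss, since only the $p$-fraction of $m$ in the active coordinates is read, and this cancels the $1/p$ from the momentum bound. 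Unrolling the geometric recursion then gives $\mathcal E_t\le C\eta^2M_{\mathrm{node}}^2h^2/\gamma^2$. Substituting both bounds proves \eqref{eq:momentum_finite_rate}. Setting $\eta\propto T^{-1/2}$ and $\mu$ as stated makes every term vanish, because $B_\mu^2=O(T^{-1})$ and $M_{\mathrm{node}}^2$ stays bounded.
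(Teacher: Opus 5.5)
Your proposal is correct and follows essentially the same route as the paper. The shared steps are: the virtual sequence $z_t=\bar x_t-\eta a_\beta\bar m_t$ with smooth descent at $z_t$; the storage bound $\mathbb E\|m_{i,t}\|^2\le M_{\mathrm{node}}^2/p$ paired with the read-out bound $\mathbb E\|P_{i,t}m_{i,t+1}\|^2\le M_{\mathrm{node}}^2$; and block unrolling that applies \eqref{eq:momentum_block_contraction} only to the pre-block state while bounding the mask-dependent updates by nonexpansiveness.

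The only difference is in the storage bound. The paper gets it from the one-step recursion $\mathbb E\|m_{i,t+1}\|^2\le[1-(1-\beta)p]\mathbb E\|m_{i,t}\|^2+(1-\beta)M_{\mathrm{node}}^2$, which uses that the current support is fresh relative to the stored momentum. This replaces your per-coordinate unrolling, shows that the $1/p$ comes from the expected decay rate $(1-\beta)p$, and avoids the bookkeeping you anticipated.
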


\begin{proof}
Let $G_t,M_t,V_t$ stack the gradient estimates, stored momentum, and applied directions. On node $i$, the update is
\begin{equation}\label{eq:momentum_node_update}
 m_{i,t+1}=(I-P_{i,t})m_{i,t}+\beta P_{i,t}m_{i,t}
             +(1-\beta)\hat g_{i,t},\qquad
 v_{i,t}=P_{i,t}m_{i,t+1}.
\end{equation}
Since $P_{i,t}\hat g_{i,t}=\hat g_{i,t}$, rearranging gives the exact identity
\begin{equation}\label{eq:sparse_momentum_identity}
 v_{i,t}=\hat g_{i,t}-a_\beta(m_{i,t+1}-m_{i,t}).
\end{equation}
This identity retains the inactive entries of $m_{i,t}$ and requires no independence between the query and the projector.

First control the moments. Orthogonality of active and inactive coordinates and convexity of the squared norm give
\begin{align}
 \mathbb E\|m_{i,t+1}\|^2
 &\le[1-(1-\beta)p]\mathbb E\|m_{i,t}\|^2
                      +(1-\beta)M_{\mathrm{node}}^2,\label{eq:momentum_storage_bound}\\
 \mathbb E\|v_{i,t}\|^2
 &\le\beta p\mathbb E\|m_{i,t}\|^2
                      +(1-\beta)M_{\mathrm{node}}^2.
\end{align}
Here only the old momentum is conditioned on the past; $\mathbb E\|P_{i,t}m_{i,t}\|^2=p\mathbb E\|m_{i,t}\|^2$. Proposition \ref{prop:estimator} bounds the fresh estimate, including its dependence on the current support. Starting from zero momentum, induction proves
\begin{equation}\label{eq:momentum_uniform_moments}
 \mathbb E\|m_{i,t}\|^2\le M_{\mathrm{node}}^2/p,
 \qquad \mathbb E\|v_{i,t}\|^2\le M_{\mathrm{node}}^2,
 \qquad \mathbb E\|\bar g_t\|^2\le M_{\mathrm{node}}^2.
\end{equation}
The last bound follows from Jensen's inequality for both common and edge-local directions. It uses the per-node second moment, without a worker-averaging factor.

Next bound disagreement. The actual recursion is
\begin{equation}
 X_{t+1}=\mathcal A_t(X_t-\eta V_t).
\end{equation}
Over a block, expand it as
\begin{equation}\label{eq:momentum_block_recursion}
 \Pi X_{kh+h}=\mathcal A_{kh+h-1}\cdots\mathcal A_{kh}\Pi X_{kh}
 -\eta\sum_{s=0}^{h-1}\mathcal A_{kh+h-1}\cdots\mathcal A_{kh+s}\Pi V_{kh+s}.
\end{equation}
Nonexpansiveness and $\|\sum_{s=0}^{h-1}b_s\|^2\le h\sum_s\|b_s\|^2$ bound the normalized expected squared norm of the second term by $\eta^2h^2M_{\mathrm{node}}^2$. This bound is pathwise before expectation and remains valid when $V_{kh+s}$ depends on any of the block's supports. Apply \eqref{eq:momentum_block_contraction} only to the first term. For $\gamma<1$, Young's inequality with $\alpha=\gamma/[2(1-\gamma)]$ yields
\begin{equation}
 \mathcal E_{kh+h}\le(1-\gamma/2)\mathcal E_{kh}
                   +\frac{3\eta^2h^2M_{\mathrm{node}}^2}{\gamma}.
\end{equation}
If $\gamma=1$, the homogeneous term has zero second moment and the same inequality follows directly. Common initialization gives
$\mathcal E_{kh}\le6\eta^2h^2M_{\mathrm{node}}^2/\gamma^2$.
Expanding a partial block of length $r<h$ and using the same norm bound gives, including the final incomplete block,
\begin{equation}\label{eq:momentum_disagreement_bound}
 \sup_{t<T}\mathcal E_t\le C\eta^2h^2M_{\mathrm{node}}^2/\gamma^2.
\end{equation}

Finally, define the virtual network average
\begin{equation}\label{eq:momentum_virtual_average}
 z_t=\bar x_t-\eta a_\beta\bar m_t.
\end{equation}
Average preservation and \eqref{eq:sparse_momentum_identity} imply
$z_{t+1}=z_t-\eta\bar g_t$ exactly, with $z_0=\bar x_0$.
The conditional mean of $\bar g_t$ is
$N^{-1}\sum_i\nabla F_i(x_{i,t})+\bar b_t$, where $\|\bar b_t\|\le B_\mu$.
Smoothness, Jensen's inequality, and \eqref{eq:momentum_uniform_moments} therefore imply
\begin{equation}\label{eq:momentum_mean_alignment}
 \mathbb E\left\|\mathbb E[\bar g_t\mid\mathcal F_t]-\nabla F(z_t)\right\|^2
 \le C L^2\mathcal E_t+C L^2\eta^2a_\beta^2M_{\mathrm{node}}^2/p+C B_\mu^2.
\end{equation}
Applying smooth descent at $z_t$, and using
$\langle a,b\rangle\ge\|a\|^2/2-\|a-b\|^2/2$, gives
\begin{align}
 \mathbb EF(z_{t+1})
 &\le\mathbb EF(z_t)-\frac\eta2\mathbb E\|\nabla F(z_t)\|^2
      +C\eta L^2\mathcal E_t+C\eta B_\mu^2\nonumber\\
 &\qquad+C L^2\eta^3a_\beta^2M_{\mathrm{node}}^2/p
                +\frac{L\eta^2}{2}M_{\mathrm{node}}^2.
\end{align}
Sum over $t<T$, use $F(z_T)\ge F^*$, and divide by $\eta T/2$.
The inequality
\begin{equation}
 \mathbb E\|\nabla F(\bar x_t)\|^2
 \le2\mathbb E\|\nabla F(z_t)\|^2
       +2L^2\eta^2a_\beta^2M_{\mathrm{node}}^2/p
\end{equation}
then transfers the result back to the reported network average. Substitution of \eqref{eq:momentum_disagreement_bound} proves \eqref{eq:momentum_finite_rate}.
\end{proof}

\paragraph{Schedules covered by the bound.}
For global support, the frozen-input calculation gives $h=1$ and
$\gamma=p(1-\rho^2)$. For independent perfect matchings with
$\lambda_2(\bar L)>0$, Theorem \ref{thm:edge_local_contraction} gives
$h=1$ and $\gamma=p\lambda_2(\bar L)/2$. A common uniform support on all edges of the matching gives the same pure-mixing expectation: independence across disjoint edge supports is not needed for that identity.

The implemented ring schedule for even $N\ge4$ alternates two perfect matchings. Each phase is disconnected, so we bound contraction over two rounds. Write $L_0,L_1$ for the matching Laplacians; $L_0+L_1=L_{\mathrm{ring}}$. For a fixed centered $Z$, let $Z'=\mathcal A_0Z$ and define the expected energy drops
\begin{equation}
 D_0=\frac p2\operatorname{tr}(Z^\top L_0Z),\qquad
 D_1=\frac p2\mathbb E\operatorname{tr}({Z'}^\top L_1Z').
\end{equation}
The second support is fresh, so the total two-round drop is $D_0+D_1$. Moreover,
$\mathbb E\|Z-Z'\|_F^2=D_0$ because the first mixing step is an orthogonal projection. Since $\|L_1\|_2=2$,
\begin{equation}
 \operatorname{tr}(Z^\top L_1Z)
 \le2\mathbb E\operatorname{tr}({Z'}^\top L_1Z')+4D_0.
\end{equation}
Consequently,
\begin{equation}
 \frac p2\operatorname{tr}(Z^\top L_{\mathrm{ring}}Z)
 \le(1+2p)D_0+2D_1\le3(D_0+D_1).
\end{equation}
Thus \eqref{eq:momentum_block_contraction} holds with
\begin{equation}\label{eq:alternating_ring_block_gap}
 h=2,\qquad \gamma=\frac{p\lambda_2(L_{\mathrm{ring}})}6
               =\frac p3\bigl(1-\cos(2\pi/N)\bigr).
\end{equation}
Substituting this block contraction into \eqref{eq:momentum_finite_rate} establishes convergence for alternating-ring momentum updates with either common or independent edge supports. Direction-dependent moment bounds are developed in Appendix \ref{appendix:correlation_moments}.

%% file: v28_correlation_appendix.tex
\section{Correlation and Optimization}
\label{appendix:correlation}

We analyze Algorithm \ref{alg:edge_local_cosmo} without momentum, holding the matching schedule fixed across couplings. Correlated compression already distinguishes individual from aggregate estimator error \citep{szlendak2022permutation}. Here the same direction also determines which peer-state coordinates are mixed, so aggregate variance must be analyzed together with the disagreement update. The identities below describe that joint operation.

In this section $\mathcal D_t=N^{-1}\|\Pi X_t\|_F^2$ is the instantaneous state disagreement; its expectation is the $\mathcal E_t$ used in the main convergence analysis.

\subsection{Protocol and Couplings}
Let $N=2m$ nodes maintain $x_i\in\R^d$. A perfect matching $M$ is sampled independently of the states. Each pair $e=\{i,j\}$ uses a uniform $q$-coordinate support $S_e$, signs $\epsilon_e$, and
\begin{equation}\label{eq:corr_operators}
 P_e=\diag(\mathbf 1_{S_e}),\qquad u_e=P_e\epsilon_e,\qquad
 D_e=\frac dq u_eu_e^\top,\qquad p=\frac qd.
\end{equation}
Both endpoints use the same $u_e$. Across distinct pairs we consider:
\begin{enumerate}
\item G: one common support and one common sign vector;
\item S: one common support, with independent sign vectors across pairs;
\item I: independent supports and sign vectors across pairs.
\end{enumerate}
The public seed, round counter, and unordered edge identifiers generate all three choices without recurring support metadata. Each node evaluates
\begin{equation}\label{eq:corr_fd}
 \hat g_i=\frac dq\frac{F_i(x_i+\mu u_e)-F_i(x_i-\mu u_e)}{2\mu}u_e,
 \qquad y_i=x_i-\eta\hat g_i,
\end{equation}
and sends $q$ ordered values of $y_i$ to its matched peer. The endpoints average those coordinates and retain their own remaining entries. Every variant therefore uses two optimization queries and $qB_{\rm val}$ outgoing value bits per node per round. Initial seed setup is separate.

Each run keeps its coupling fixed; support and sign draws are fresh at every round.

\subsection{Conditional Moments}
\label{appendix:correlation_moments}
Fix the states and matching before drawing directions. Put
\begin{equation}\label{eq:corr_pair_gradients}
 a_e=\frac{\nabla F_i(x_i)+\nabla F_j(x_j)}2,\quad
 \bar a=\frac1m\sum_e a_e=\frac1N\sum_i\nabla F_i(x_i),\quad
 H_M^2=\frac1m\sum_e\corrnorm{a_e-\bar a}^2.
\end{equation}
For the ideal directional estimates let $v_e=D_ea_e$ and $\bar v=m^{-1}\sum_e v_e$. Define
\begin{equation}\label{eq:corr_moments_def}
 V_s=\mathbb E\corrnorm{\bar v-\bar a}^2,\qquad
 U_s=\frac1m\sum_e\mathbb E\corrnorm{v_e-\bar v}^2,
 \qquad s\in\{\mathrm G,\mathrm S,\mathrm I\}.
\end{equation}
Here $U_s$ measures disagreement injected by the applied pair updates, not the current state disagreement.

For $1\le q\le d$, the three couplings satisfy
\begin{align}
 V_s&=\alpha_s\corrnorm{\bar a}^2+\tau_s H_M^2,\label{eq:corr_variance}\\
 U_s&=(d-1-\alpha_s)\corrnorm{\bar a}^2+(d-\tau_s)H_M^2,\label{eq:corr_injection}
\end{align}
where
\begin{align}
 (\alpha_{\mathrm G},\tau_{\mathrm G})&=(d-1,0),\nonumber\\
 (\alpha_{\mathrm S},\tau_{\mathrm S})&=\left(\frac dq-1+\frac{d-d/q}{m},\frac{d-d/q}{m}\right),\label{eq:corr_coefficients}\\
 (\alpha_{\mathrm I},\tau_{\mathrm I})&=\left(\frac{d-1}{m},\frac{d-1}{m}\right).\nonumber
\end{align}
In particular, $V_s+U_s=(d-1)\corrnorm{\bar a}^2+dH_M^2$ is independent of the coupling.

\begin{proof}
The sparse Rademacher identities give
\begin{equation}\label{eq:corr_D_moments}
 \mathbb E D_e=I,\qquad D_e^2=dD_e,\qquad
 \mathbb E\corrnorm{D_ea}^2=d\corrnorm a^2,
 \qquad \mathbb E\corrnorm{(D_e-I)a}^2=(d-1)\corrnorm a^2.
\end{equation}
Under G, $\bar v=D\bar a$, so $V_{\mathrm G}=(d-1)\corrnorm{\bar a}^2$. Under I, centered pair estimates are independent. Consequently
\begin{equation}
 V_{\mathrm I}=\frac{d-1}{m^2}\sum_e\corrnorm{a_e}^2
 =\frac{d-1}{m}\bigl(\corrnorm{\bar a}^2+H_M^2\bigr).
\end{equation}
For S, condition first on the common projector $P$. Independence of the signs gives $\mathbb E[D_e\mid P]=(d/q)P$. The support component of the variance is
\begin{equation}
 \mathbb E\corrnorm{((d/q)P-I)\bar a}^2=(d/q-1)\corrnorm{\bar a}^2.
\end{equation}
The conditional sign fluctuations are independent across pairs and centered given $P$. Their averaged contribution is
\begin{align}
 \frac1{m^2}\sum_e\mathbb E\corrnorm{(D_e-(d/q)P)a_e}^2
 &=\frac{d-d/q}{m^2}\sum_e\corrnorm{a_e}^2\\
 &=\frac{d-d/q}{m}\bigl(\corrnorm{\bar a}^2+H_M^2\bigr).
\end{align}
The law of total variance proves the middle line of \eqref{eq:corr_coefficients}. Finally, the pathwise identity
\begin{equation}
 \frac1m\sum_e\corrnorm{v_e-\bar v}^2
 =\frac1m\sum_e\corrnorm{v_e}^2-\corrnorm{\bar v}^2
\end{equation}
and \eqref{eq:corr_D_moments} give
$U_s=d(\corrnorm{\bar a}^2+H_M^2)-\corrnorm{\bar a}^2-V_s$.
\end{proof}

For $d>1$ and $m>1$, I has lower average-estimation error than G exactly when
\begin{equation}\label{eq:corr_crossover}
 H_M^2<(m-1)\corrnorm{\bar a}^2.
\end{equation}
Any decrease in $V_s$ is an equal increase in $U_s$. Thus a smaller average-estimator variance does not alone order the optimization trajectories. At $q=1$, G and S coincide at the level of $D_e$ because the sign cancels. At $q=d$, S and I have the same operator law because the support is fixed. For $N=2$, all variants have the same law because there is only one pair. If $d=1$, every $D_e=1$ and all $V_s$ vanish.

\subsection{The Coupled Mixing Step}
Stack states as rows of $X$ and let $\Pi=I-N^{-1}\mathbf1\mathbf1^\top$. With $L_e=(e_i-e_j)(e_i-e_j)^\top$, the mixing operator is
\begin{equation}\label{eq:corr_mixing}
 \mathcal A X=X-\frac12\sum_{e\in M}L_eXP_e.
\end{equation}
Disjoint edges imply that $\mathcal A$ is an orthogonal projection in the Frobenius inner product. It preserves the average and commutes with $\Pi$.

Let $V$ have row $v_e$ at both endpoints of $e$. The actual ideal-direction update is
\begin{equation}\label{eq:corr_pair_update}
 X^+=\mathcal A X-\eta V.
\end{equation}
This follows by mixing the two local estimates: averaging $D_e\nabla F_i(x_i)$ and $D_e\nabla F_j(x_j)$ gives $D_ea_e$. Because the two rows of $V$ agree within every pair,
\begin{equation}\label{eq:corr_pair_invariance}
 \mathcal A V=V,\qquad
 \langle\Pi\mathcal A X,\Pi V\rangle_F
 =\langle\Pi X,\Pi V\rangle_F.
\end{equation}
No independence between $V$ and the query--mixing mask is used here.

\begin{proposition}\label{prop:corr_joint}
For fixed $X,M$, define $\bar x=N^{-1}\sum_i x_i$, $\mathcal D=N^{-1}\corrnorm{\Pi X}_F^2$, and
$C_M=N^{-1}\sum_{e=\{i,j\}}\langle x_i+x_j-2\bar x,a_e\rangle$.
Under the ideal-direction update,
\begin{equation}\label{eq:corr_exact_disagreement}
 \mathbb E\mathcal D^+=\mathcal D
 -\frac{p}{2N}\sum_{\{i,j\}\in M}\corrnorm{x_i-x_j}^2
 -2\eta C_M+\eta^2U_s.
\end{equation}
\end{proposition}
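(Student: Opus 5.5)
We expand $\|\Pi X^+\|_F^2$ directly from the update \eqref{eq:corr_pair_update}. Since $\mathcal A$ commutes with $\Pi$, we have $\Pi X^+=\mathcal A\Pi X-\eta\Pi V$. Hence
\begin{equation*}
N\mathcal D^+=\|\mathcal A\Pi X\|_F^2-2\eta\langle \mathcal A\Pi X,\Pi V\rangle_F+\eta^2\|\Pi V\|_F^2 .
\end{equation*}
We treat the three terms separately and then take expectation over the directions (support and signs), with $X$ and $M$ held fixed.

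\emph{Quadratic mixing term.} This is exactly the computation in the proof of Theorem \ref{thm:edge_local_contraction}. Pythagoras in \eqref{eq:edge_pythagoras} gives
\begin{equation*}
\|\mathcal A\Pi X\|_F^2=\|\Pi X\|_F^2-\tfrac12\sum_{e\in M}\|P_e(x_i-x_j)\|^2 .
\end{equation*}
Each $P_e$ has uniform marginal and is independent of the fixed $X$, so $\mathbb E\|P_ev\|^2=p\|v\|^2$. Dividing by $N$ yields the terms $\mathcal D-\frac{p}{2N}\sum\|x_i-x_j\|^2$. Only uniform marginals are used, so this step is valid for all three couplings G, S and I.

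\emph{Cross term.} By the pair invariance \eqref{eq:corr_pair_invariance}, $\langle\Pi\mathcal A X,\Pi V\rangle_F=\langle\Pi X,\Pi V\rangle_F$ pathwise. This is the key step: it removes the dependence between the mask in $\mathcal A$ and the direction in $V$. Since $\Pi$ is an orthogonal projector, $\langle \Pi X,\Pi V\rangle_F=\langle\Pi X,V\rangle_F$. The rows of $V$ equal $v_e$ at both endpoints, so
\begin{equation*}
\langle\Pi X,V\rangle_F=\sum_{e=\{i,j\}}\langle x_i+x_j-2\bar x,v_e\rangle .
\end{equation*}
Taking expectations and using $\mathbb E D_e=I$ from \eqref{eq:corr_D_moments}, which holds marginally under every coupling, gives $\mathbb E v_e=a_e$. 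The expected cross term is therefore $-2\eta N C_M$, and after dividing by $N$ it becomes $-2\eta C_M$.

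\emph{Injection term.} Each of the $m$ pairs contributes two equal rows $v_e$, and $\bar v$ is the row mean. Hence
\begin{equation*}
\|\Pi V\|_F^2=2\sum_e\|v_e-\bar v\|^2=N\cdot\tfrac1m\sum_e\|v_e-\bar v\|^2 .
\end{equation*}
Its expectation divided by $N$ is exactly $U_s$ from \eqref{eq:corr_moments_def}.

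Adding the three contributions proves \eqref{eq:corr_exact_disagreement}. The only delicate point is the cross term, where the mask and the update are dependent. The exact invariance $\mathcal AV=V$ is what makes this term linear in $V$, so only first moments of $D_e$ are needed there. The rest is bookkeeping: check the factor $2$ from duplicated rows against $N=2m$, and the sign convention of $C_M$.
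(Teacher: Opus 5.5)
Your proof is correct and follows the paper's argument. It uses the Pythagorean pure-mixing drop with $\mathbb E\|P_e v\|^2=p\|v\|^2$, expands $X^+=\mathcal A X-\eta V$, applies the pair invariance $\mathcal A V=V$ so that the cross term needs only $\mathbb E v_e=a_e$, and uses the duplicated-row identity to recognize the update energy as $U_s$; you just make explicit the bookkeeping (the factor $N/m=2$ and the sign of $C_M$) that the paper leaves implicit.
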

\begin{proof}
Orthogonality gives the exact pure-mixing energy drop
\[
 \corrnorm{\Pi\mathcal A X}_F^2=\corrnorm{\Pi X}_F^2
 -\frac12\sum_e\corrnorm{P_e(x_i-x_j)}^2.
\]
Take its support expectation, expand \eqref{eq:corr_pair_update}, and apply \eqref{eq:corr_pair_invariance}. Since $\mathbb E v_e=a_e$, the cross term is $C_M$. The normalized update energy is precisely $U_s$.
\end{proof}

Equation \eqref{eq:corr_exact_disagreement} separates pure mixing, gradient alignment, and update disagreement. Pair constancy determines the cross term while retaining the shared query--mixing mask. Proposition \ref{prop:momentum_moments_v23} gives the moments with stored momentum, and Theorem \ref{thm:momentum_convergence} proves its convergence.

\subsection{Graph and Objective Assumptions}
\begin{assumption}\label{ass:corr_smooth}
Each deterministic objective $F_i$ is $L$-smooth, $L>0$, and $F=N^{-1}\sum_iF_i$ satisfies $F\ge F^*>-\infty$. For all $x$,
\begin{equation}\label{eq:corr_heterogeneity}
 \frac1N\sum_i\corrnorm{\nabla F_i(x)-\nabla F(x)}^2\le\zeta^2.
\end{equation}
\end{assumption}
\begin{assumption}\label{ass:corr_fd}
For every used direction and point, the central-difference remainder obeys
\begin{equation}
 |F_i(x+\mu u)-F_i(x-\mu u)-2\mu\nabla F_i(x)^\top u|
 \le (L_3/3)\mu^3\corrnorm u^3.
\end{equation}
\end{assumption}
Here the objectives are deterministic and nonconvex, with uniformly bounded gradient heterogeneity and unrestricted absolute gradients. Theorem \ref{thm:convergence} instead uses the stochastic-oracle assumptions from Section \ref{sec:theory}.

Matchings are independent draws from a fixed state-independent distribution on perfect matchings. Write
\begin{equation}\label{eq:corr_gaps}
 \bar L=\mathbb E_M\sum_{e\in M}L_e,\qquad
 \chi=1-\frac{\lambda_2(\bar L)}2<1,\qquad
 \gamma=\frac{p\lambda_2(\bar L)}2=p(1-\chi)>0.
\end{equation}
Because matching Laplacians have spectrum in $[0,2]$, $0\le\chi<1$. With $\mathcal F_t$ containing only past randomness, $X_t$ is fixed when the current matching and directions are sampled. The pure-mixing contraction therefore applies to $X_t$, not to a mask-dependent post-query state.

There is a second role for the matching spectrum. Let $G$ stack the current local gradients and $\bar a$ be their average. Pair averaging all coordinates uses the projection $T_M=I-\frac12\sum_e L_e$. Hence
\begin{align}
 \mathbb E_M H_M^2
 &=\frac1N\operatorname{tr}\bigl((\Pi G)^\top(I-\bar L/2)\Pi G\bigr)\nonumber\\
 &\le\frac\chi N\sum_i\corrnorm{\nabla F_i(x_i)-\bar a}^2
 \le2\chi(L^2\mathcal D+\zeta^2).\label{eq:corr_matching_heterogeneity}
\end{align}
The last inequality uses the network gradient as a candidate center, smoothness, and \eqref{eq:corr_heterogeneity}. Also
\begin{equation}\label{eq:corr_average_gradient_bounds}
 \corrnorm{\bar a-\nabla F(\bar x)}^2\le L^2\mathcal D,
 \qquad\corrnorm{\bar a}^2\le2\corrnorm{\nabla F(\bar x)}^2+2L^2\mathcal D.
\end{equation}
On complete graphs, uniform perfect matchings give $\chi=(N-2)/(2(N-1))$. On even rings with independently sampled matching phases, $\chi=(1+\cos(2\pi/N))/2$. The Qwen ring uses deterministic alternation, whose two-round contraction is proved in Appendix \ref{appendix:momentum_convergence}.

\subsection{A Correlation-Dependent Core Rate}
For a fixed coupling $s$, abbreviate $\alpha=\alpha_s$, $\tau=\tau_s$ and define
\begin{equation}\label{eq:corr_constants}
 r=d-1-\alpha,\quad J=\chi(d-\tau+2/\gamma),\quad K=r+J,\quad
 B_\mu=\frac{L_3\mu^2dq}{6}.
\end{equation}
\begin{theorem}\label{thm:corr_core}
Under Assumptions \ref{ass:corr_smooth}--\ref{ass:corr_fd} and \eqref{eq:corr_gaps}, run the two-query matching core with common initialization, fixed coupling, and constant $\eta,\mu>0$. If
\begin{equation}\label{eq:corr_step_condition}
 \eta\le\min\left\{\frac1{8L(1+\alpha+\tau\chi)},
                    \frac{\sqrt\gamma}{16L\sqrt K}\right\},
\end{equation}
where the second restriction is omitted for $K=0$, then with $\Delta_0=F(\bar x_0)-F^*$,
\begin{align}
 \frac1T\sum_{t<T}\mathbb E\corrnorm{\nabla F(\bar x_t)}^2
 &\le\mathcal O\!\left(\frac{\Delta_0}{\eta T}
 +L\eta\tau\chi\zeta^2
 +\frac{\eta^2L^2J}{\gamma}\zeta^2\right.\nonumber\\
 &\hspace{20mm}\left.+\left[1+\frac{\eta^2L^2}{\gamma}
                  (1+\gamma^{-1})\right]B_\mu^2\right).
 \label{eq:corr_core_rate}
\end{align}
\end{theorem}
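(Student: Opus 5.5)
The plan is a coupled Lyapunov argument. It pairs the descent of $F$ at the exact network average with the exact disagreement identity of Proposition \ref{prop:corr_joint}, after adding the finite-difference remainder. Averaging is preserved and every pair applies a common row, so $\bar x_{t+1}=\bar x_t-\eta\bar v_t-\eta\bar e_t$, where $\bar e_t$ collects the central-difference remainders. Assumption \ref{ass:corr_fd} gives $\|\bar e_t\|\le B_\mu$ pathwise, exactly as in \eqref{eq:emu_norm_bound}. Conditioning on $\mathcal F_t$ and the matching, $\mathbb E\bar v_t=\bar a_t$ and $\mathbb E\|\bar v_t-\bar a_t\|^2=V_s=\alpha\|\bar a\|^2+\tau H_M^2$ by \eqref{eq:corr_variance}. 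Smoothness then gives
$\mathbb EF(\bar x_{t+1})\le F(\bar x_t)-\eta\langle\nabla F(\bar x_t),\bar a\rangle+C\eta B_\mu^2+CL\eta^2\bigl[(1+\alpha)\|\bar a\|^2+\tau H_M^2+B_\mu^2\bigr]$.
I then insert \eqref{eq:corr_average_gradient_bounds} and average over the matching with \eqref{eq:corr_matching_heterogeneity}, using $\mathbb E_MH_M^2\le2\chi(L^2\mathcal D+\zeta^2)$. The first step-size restriction, $8L\eta(1+\alpha+\tau\chi)\le1$, absorbs the $\|\nabla F\|^2$ terms into a $-\eta/4$ coefficient. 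What remains is $C\eta L^2\mathcal D_t+CL\eta^2\tau\chi\zeta^2+C\eta B_\mu^2$.

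For disagreement, I use \eqref{eq:corr_exact_disagreement} with the remainder added to the update. The pure-mixing term contracts, and after the matching expectation this gives $-\gamma\mathcal D_t$ by the spectral bound in the proof of Theorem \ref{thm:edge_local_contraction}. The injection term is $\eta^2U_s$, with $U_s=r\|\bar a\|^2+(d-\tau)H_M^2$. The cross term $-2\eta C_M$ is the delicate part. Because $C_M$ pairs the centered state with pair-averaged gradients, it equals $N^{-1}\langle\Pi X,T_M\Pi G\rangle$ (up to a factor). It therefore involves only the matching-filtered heterogeneity, whose energy is controlled by $\chi(L^2\mathcal D+\zeta^2)$. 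Young's inequality gives $2\eta|C_M|\le\frac\gamma2\mathcal D+\frac{2\eta^2}{\gamma}\cdot2\chi(L^2\mathcal D+\zeta^2)$, which is the source of the $2\chi/\gamma$ term inside $J$. The remainder contributes $O(\eta B_\mu\sqrt{\mathcal D})+\eta^2B_\mu^2$, which Young converts to $\frac\gamma4\mathcal D+O(\eta^2B_\mu^2/\gamma)$.

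Collecting these pieces gives
$\mathbb E\mathcal D_{t+1}\le(1-\gamma/4)\mathcal D_t+C\eta^2\bigl[r\|\nabla F(\bar x_t)\|^2+(rL^2+JL^2)\mathcal D_t+J\zeta^2+(1+\gamma^{-1})B_\mu^2\bigr]$.
The second step-size restriction, $\eta^2L^2K\le\gamma/256$, keeps the contraction at $1-\gamma/8$.

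To finish, I form $\Phi_t=F(\bar x_t)+c\,\eta L^2\gamma^{-1}\mathcal D_t$ with a numerical $c$. The $\eta L^2\mathcal D_t$ term from descent is then paid by the contraction. The feedback term $c\eta^3L^2\gamma^{-1}r\|\nabla F\|^2$ is absorbed into $-\eta/8$, again by the second restriction, since $\eta^2L^2r/\gamma\lesssim1$. Telescoping from common initialization ($\mathcal D_0=0$) and dividing by $\eta T$ yields \eqref{eq:corr_core_rate}: the $\zeta^2$ injection contributes $\eta^2L^2J\zeta^2/\gamma$, and the remainder contributes $[1+\eta^2L^2\gamma^{-1}(1+\gamma^{-1})]B_\mu^2$. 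I expect the main obstacle to be the cross term $C_M$. It must be bounded using only the matching-filtered heterogeneity, so that the factor $\chi$ appears instead of a full $\zeta^2$. Controlling it needs the projection identity for $T_M$ together with its interaction with the expectation over the matching. Careful bookkeeping there is what delivers exactly $J=\chi(d-\tau+2/\gamma)$.
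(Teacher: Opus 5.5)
Your proposal is correct and follows the paper's proof essentially step for step. The shared steps are: descent at $\bar x_t$ with the pathwise remainder bound $B_\mu$; a disagreement recursion that applies the pure-mixing contraction only to the pre-query state; pair invariance reducing the cross term to $C_M$, bounded by $\sqrt{\mathcal D}\,H_M$ as the paper does; Young's inequality supplying the $2/\gamma$ inside $J$; and the potential $F(\bar x_t)+c\,\eta L^2\gamma^{-1}\mathcal D_t$, with the second step restriction absorbing the $r\|\nabla F(\bar x_t)\|^2$ feedback. The only slip is that your displayed smoothness inequality drops the $\eta\|\nabla F(\bar x_t)\|B_\mu$ cross term from the remainder. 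Including it costs part of the gradient coefficient but still leaves the $-\eta/4$ you use.
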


For fixed problem and graph parameters, $\eta=\Theta(T^{-1/2})$ and $\mu=\Theta((dq)^{-1/2}T^{-1/4})$ give an $O(T^{-1/2})$ bound. G has $\tau=0$: taking $\eta=\Theta(T^{-1/3})$ and $\mu=\Theta((dq)^{-1/2}T^{-1/6})$ yields $O(T^{-2/3})$. The step condition \eqref{eq:corr_step_condition} holds for sufficiently large horizons. These are upper bounds; no matching lower bound is established for S or I.

For a nominal per-node value-payload budget $\mathcal B$, substitute $T=\lfloor\mathcal B/(qB_{\rm val})\rfloor$ whenever $T\ge1$. The couplings have identical payloads and round counts, so \eqref{eq:corr_core_rate} isolates their optimization differences. The bound uses real-valued updates, omits quantization error, and retains dependence on $d,q,N,L,\gamma,\chi,\zeta$.

\subsubsection{Proof of Theorem \ref{thm:corr_core}}
Condition on $\mathcal F_t$ and suppress $t$. Taylor's bound gives
\begin{equation}\label{eq:corr_residual}
 \hat g_i=D_e\nabla F_i(x_i)+e_i,\qquad\corrnorm{e_i}\le B_\mu.
\end{equation}
After pair averaging the applied update has identical rows $D_ea_e+r_e$ at the two endpoints, where $r_e=(e_i+e_j)/2$ and $\corrnorm{r_e}\le B_\mu$. Thus
\begin{equation}
 X^+=\mathcal A X-\eta(V+R),\quad
 \mathcal A(V+R)=V+R,\quad
 \bar x^+=\bar x-\eta(\bar v+\bar r).
\end{equation}
The pair-invariance identity holds for the residual as well, even though it depends on the direction.

Let $g=\nabla F(\bar x)$. Equations \eqref{eq:corr_average_gradient_bounds} and \eqref{eq:corr_residual} imply
\begin{equation}\label{eq:corr_alignment}
 \mathbb E\langle g,\bar v+\bar r\rangle
 \ge\frac34\corrnorm g^2-2L^2\mathcal D-2B_\mu^2.
\end{equation}
Indeed the conditional mean error from $g$ has squared norm at most $2L^2\mathcal D+2B_\mu^2$, and $\langle g,e\rangle\ge-\corrnorm g^2/4-\corrnorm e^2$.
By Proposition \ref{prop:corr_moments}, $\corrnorm{\bar r}\le B_\mu$, and \eqref{eq:corr_matching_heterogeneity},
\begin{align}
 \mathbb E\corrnorm{\bar v+\bar r}^2
 &\le4(1+\alpha)\corrnorm g^2
 +4L^2(1+\alpha+\tau\chi)\mathcal D
 +4\tau\chi\zeta^2+2B_\mu^2.\label{eq:corr_mean_moment_bound}
\end{align}
Smooth descent and the first condition in \eqref{eq:corr_step_condition} now give
\begin{equation}\label{eq:corr_function_recursion}
 \mathbb E F(\bar x^+)\le F(\bar x)-\frac\eta2\corrnorm g^2
 +\frac94\eta L^2\mathcal D
 +2L\eta^2\tau\chi\zeta^2+3\eta B_\mu^2.
\end{equation}
Here $L\eta\le1/8$, the gradient coefficient is at most $-3\eta/4+2L\eta^2(1+\alpha)\le-\eta/2$, and the disagreement coefficient is at most $2\eta L^2(1+1/8)=9\eta L^2/4$.

For disagreement, expand the squared norm of the actual update. Apply pure-mixing contraction only to $X$, then apply pair invariance to the cross term. The ideal cross term has magnitude at most $\sqrt{\mathcal D}\sqrt{\mathbb E_M H_M^2}$; the residual contributes at most $B_\mu\sqrt{\mathcal D}$. Young's inequality separately bounds each term by $\gamma\mathcal D/4$ plus $4\eta^2/\gamma$ times its corresponding squared magnitude. Also
$\mathbb E\corrnorm{\Pi(V+R)}_F^2/N\le2\mathbb E_M U_s+2B_\mu^2$.
Consequently,
\begin{align}
 \mathbb E\mathcal D^+
 &\le(1-\gamma/2)\mathcal D
 +2\eta^2r\corrnorm{\bar a}^2
 +2\eta^2(d-\tau+2/\gamma)\mathbb E_M H_M^2
 +\eta^2(2+4/\gamma)B_\mu^2\nonumber\\
 &\le(1-\gamma/2+4\eta^2L^2K)\mathcal D
 +4\eta^2r\corrnorm g^2+4\eta^2J\zeta^2
 +\eta^2(2+4/\gamma)B_\mu^2.\label{eq:corr_consensus_recursion}
\end{align}
The second condition in \eqref{eq:corr_step_condition} implies $\eta^2L^2K\le\gamma/256$; in particular the coefficient on $\mathcal D$ is at most $1-\gamma/4$.

Use the potential
\begin{equation}\label{eq:corr_potential}
 \Phi_t=F(\bar x_t)+\frac{12\eta L^2}{\gamma}\mathcal D_t.
\end{equation}
Adding the weighted form of \eqref{eq:corr_consensus_recursion} to \eqref{eq:corr_function_recursion} absorbs the disagreement term. Since $r\le K$,
$48\eta^3L^2r/\gamma\le3\eta/16\le\eta/4$.
Dropping the remaining negative disagreement term gives
\begin{align}
 \mathbb E[\Phi_{t+1}\mid\mathcal F_t]
 &\le\Phi_t-\frac\eta4\corrnorm{\nabla F(\bar x_t)}^2
 +2L\eta^2\tau\chi\zeta^2
 +\frac{48\eta^3L^2J}{\gamma}\zeta^2\nonumber\\
 &\quad+\left[3\eta+\frac{12\eta^3L^2}{\gamma}
                              (2+4/\gamma)\right]B_\mu^2.
 \label{eq:corr_potential_descent}
\end{align}
Take expectations, sum over $t<T$, and use $\Phi_0=F(\bar x_0)$ and $\Phi_T\ge F^*$. Division by $\eta T/4$ gives the explicit bound
\begin{align}
 \frac1T\sum_{t<T}\mathbb E\corrnorm{\nabla F(\bar x_t)}^2
 &\le\frac{4\Delta_0}{\eta T}
 +8L\eta\tau\chi\zeta^2
 +\frac{192\eta^2L^2J}{\gamma}\zeta^2\nonumber\\
 &\quad+\left[12+\frac{48\eta^2L^2}{\gamma}
                         \left(2+\frac4\gamma\right)\right]B_\mu^2.
 \label{eq:corr_explicit_core_rate}
\end{align}
This proves \eqref{eq:corr_core_rate}. No uniform bound on $\corrnorm{\nabla F_i(x_i)}$ was used.
\qed

\subsection{A Quadratic Cancellation Example}
The trajectory experiment uses
\begin{equation}\label{eq:corr_quadratic}
 F_i(x)=\tfrac12x^\top Hx+b_i^\top x,\qquad
 H=\diag(h_1,\ldots,h_d)\succ0,\qquad\sum_i b_i=0.
\end{equation}
Then $F(x)=x^\top Hx/2$ and $\zeta^2=N^{-1}\sum_i\corrnorm{b_i}^2$. Central differences are exact for every $\mu>0$ in ideal arithmetic.

\begin{proposition}\label{prop:corr_quadratic}
For G on \eqref{eq:corr_quadratic}, the network average satisfies
\begin{equation}\label{eq:corr_cancellation}
 \bar x_{t+1}=\bar x_t-\eta D_tH\bar x_t
\end{equation}
regardless of the matching sequence and the individual linear terms. At the consensus optimum $x_i=0$, a single step of coupling $s$ instead has
$\mathbb E\corrnorm{\bar x^+}^2=\eta^2\tau_s\mathbb E_M H_M^2$.
\end{proposition}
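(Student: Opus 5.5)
The plan is to prove both claims directly from three facts: central differences are exact on quadratics, every matching-based mixing operator preserves the network average, and the conditional moment identities \eqref{eq:corr_variance}--\eqref{eq:corr_coefficients} hold.

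\textbf{Exact gradient estimates.} For $F_i(x)=\tfrac12x^\top Hx+b_i^\top x$, the even part cancels in $F_i(x+\mu u)-F_i(x-\mu u)$. This leaves exactly $2\mu(Hx+b_i)^\top u$, for every $\mu>0$ and every direction. Hence the remainder $e_i$ in \eqref{eq:corr_residual} vanishes, and node $i$ on edge $e$ computes
\[
\hat g_i=D_e\nabla F_i(x_i)=D_e(Hx_i+b_i).
\]

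\textbf{Network-average recursion.} The next step is to track the average. The update is $X^+=\mathcal A(X-\eta G)$ with $\mathcal A$ as in \eqref{eq:corr_mixing}. Since $\mathbf 1^\top L_e=0$ for every edge, $\mathbf 1^\top\mathcal A Z=\mathbf 1^\top Z$ for any $Z$, whatever supports the edges use. Therefore
\[
\bar x^+=\bar x-\frac{\eta}{N}\sum_i\hat g_i .
\]
Under coupling G, every pair uses the same $D_t$. The sum therefore factors as
\[
\frac1N\sum_i D_t(Hx_i+b_i)=D_t\Bigl(H\bar x+\frac1N\sum_ib_i\Bigr)=D_tH\bar x,
\]
using linearity of the gradient in $x$ and $\sum_ib_i=0$. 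This gives \eqref{eq:corr_cancellation}. No property of the matching or of the individual $b_i$ enters, which proves the first claim.

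\textbf{One step from the consensus optimum.} Now set $x_i=0$ for all $i$. The local gradients are $b_i$, so $a_e=(b_i+b_j)/2$ and $\bar a=N^{-1}\sum_ib_i=0$. Because the estimate is exact and the average is preserved, $\bar x^+=-\eta\bar v$ with $\bar v=m^{-1}\sum_eD_ea_e$.

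Conditioning on the matching $M$, which is drawn independently of the (fixed) states, we get $\mathbb E\|\bar v\|^2=\mathbb E\|\bar v-\bar a\|^2=V_s$. By \eqref{eq:corr_variance} with $\bar a=0$, this equals $\tau_sH_M^2$. Taking the expectation over $M$ gives
\[
\mathbb E\|\bar x^+\|^2=\eta^2\tau_s\,\mathbb E_MH_M^2 .
\]
For G this is zero, consistent with the first part.

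\textbf{Expected obstacle.} The only delicate point is confirming that the general moment identities apply unchanged here. They were stated for ideal directional estimates $v_e=D_ea_e$ with the states and matching fixed. Here exactness of the central difference makes the applied pair updates literally equal to $v_e$, so nothing needs to be re-derived; I would spell this out explicitly.
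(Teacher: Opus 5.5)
Your proposal is correct and follows the paper's proof. Average preservation plus the common $D_t$ under G gives $N^{-1}\sum_i\hat g_i=D_t(H\bar x_t+N^{-1}\sum_i b_i)=D_tH\bar x_t$, and at consensus zero $\bar a=0$ reduces the second claim to \eqref{eq:corr_variance} in Proposition~\ref{prop:corr_moments}; your explicit checks that central differences are exact on the quadratic and that $\bar x^+=-\eta\bar v$ (using $N=2m$) fill in steps the paper leaves implicit.
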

\begin{proof}
Average preservation and a common $D_t$ give
$N^{-1}\sum_i\hat g_i=D_t(H\bar x_t+N^{-1}\sum_i b_i)=D_tH\bar x_t$.
At consensus zero, $\bar a=0$, so Proposition \ref{prop:corr_moments} gives the second assertion.
\end{proof}
Shared curvature makes the G average trajectory identical across these heterogeneity levels and topologies. Proposition \ref{prop:curvature_v23} below derives the residual when local Hessians differ, and Proposition \ref{prop:momentum_moments_v23} adds oracle noise and momentum.

\subsection{Mechanism Experiments}
\label{appendix:correlation_experiments}
The configuration was fixed before training and archived with a timestamp and source hashes. There are $N=8$ nodes, $d=32$, $q=4$, eight seeds, three couplings, two topologies, three heterogeneity levels, and two step sizes: 288 runs. Each uses 1600 two-query rounds and 3200 optimization queries per node. The bit axis charges a nominal 32 bits per value, or 204800 bits per node. Computation uses NumPy double precision without packet quantization or physical transfer. All runs use the deterministic core, fixed parameters, and unclipped updates.

The eigenvalues of $H$ in \eqref{eq:corr_quadratic} range geometrically from 1 to 4. Gaussian $b_i$ are centered and scaled to $\zeta\in\{0,0.5,2\}$, and all nodes start from $d^{-1/2}\mathbf1$. Public directions and matchings are generated independently of these linear terms. Ring phases are independent fair draws at each round, rather than deterministic alternation. Within every comparison, supports/signs are the only protocol change. Objective and gradient diagnostics are evaluation-only and do not enter the update.

The step $\eta=10^{-4}$ satisfies \eqref{eq:corr_step_condition} for every reported coupling and topology. The larger $\eta=0.004$ tests behavior outside that sufficient condition. All runs at both steps are finite and included in the results.

For the frozen-state study, $\corrnorm{\bar a}=1$ and $H_M^2$ takes six specified values. Each seed averages 1500 direction draws for each coupling; gradients are held fixed, so these are random-matrix moment diagnostics rather than optimization-query comparisons. Figure \ref{fig:corr_moments} overlays the exact formulas and measured means. At $H_M^2=0$, the predicted G/S/I average variances are $31$, $13$, and $7.75$; at $H_M^2=15$ they are $31$, $103$, and $124$. All three intersect at $H_M^2=m-1=3$, with the complementary ordering in disagreement injection.
\begin{figure}[!htbp]
 \centering\includegraphics[width=\linewidth]{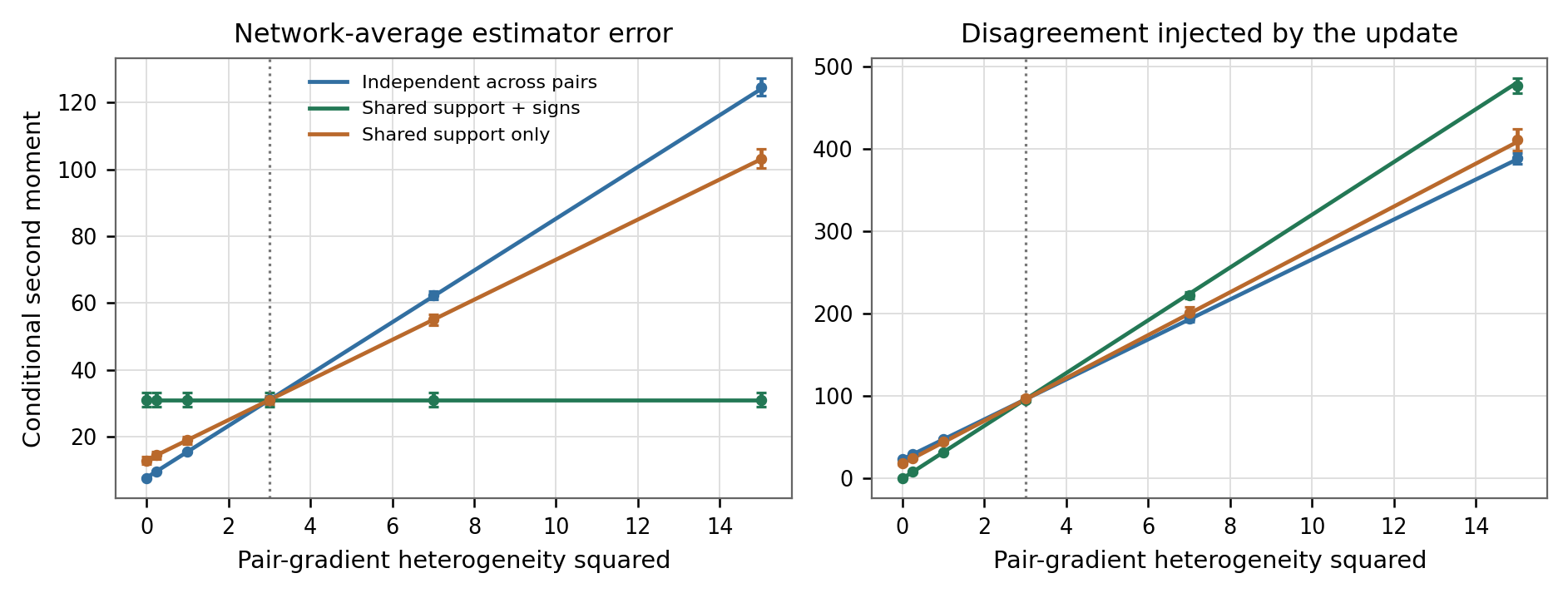}
 \caption{Exact moment curves and eight-seed Monte Carlo means $\pm$ standard deviation. The dotted line is the predicted crossover.}
 \label{fig:corr_moments}
\end{figure}

At $\eta=0.004$, nonzero linear heterogeneity produces an optimization floor for S and I, while G retains the cancellation in \eqref{eq:corr_cancellation}. For $\zeta=2$ on the complete graph, the final objective is $(1.160\pm0.421)\times10^{-7}$ for G and $0.01118\pm0.00309$ for I. At zero heterogeneity, the final means are close despite the different frozen-state variances. The separation in this common-curvature example is driven by heterogeneity cancellation.
\begin{figure}[!htbp]
 \centering\includegraphics[width=\linewidth]{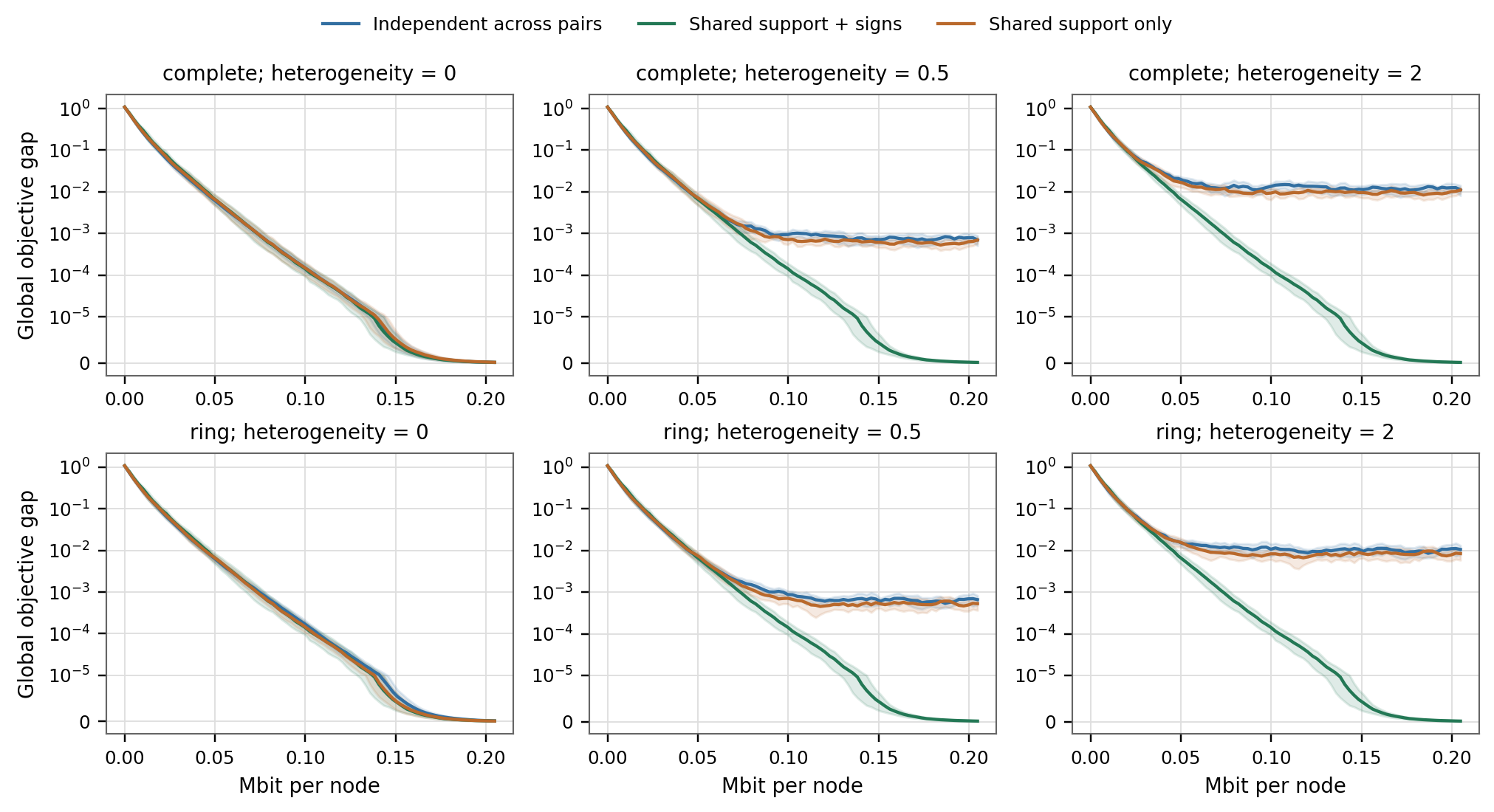}
 \caption{Objective gap at $\eta=0.004$ (outside the sufficient step certificate). Curves are eight-seed means and bands are one standard deviation.}
 \label{fig:corr_objective_practical}
\end{figure}
\begin{figure}[!htbp]
 \centering\includegraphics[width=\linewidth]{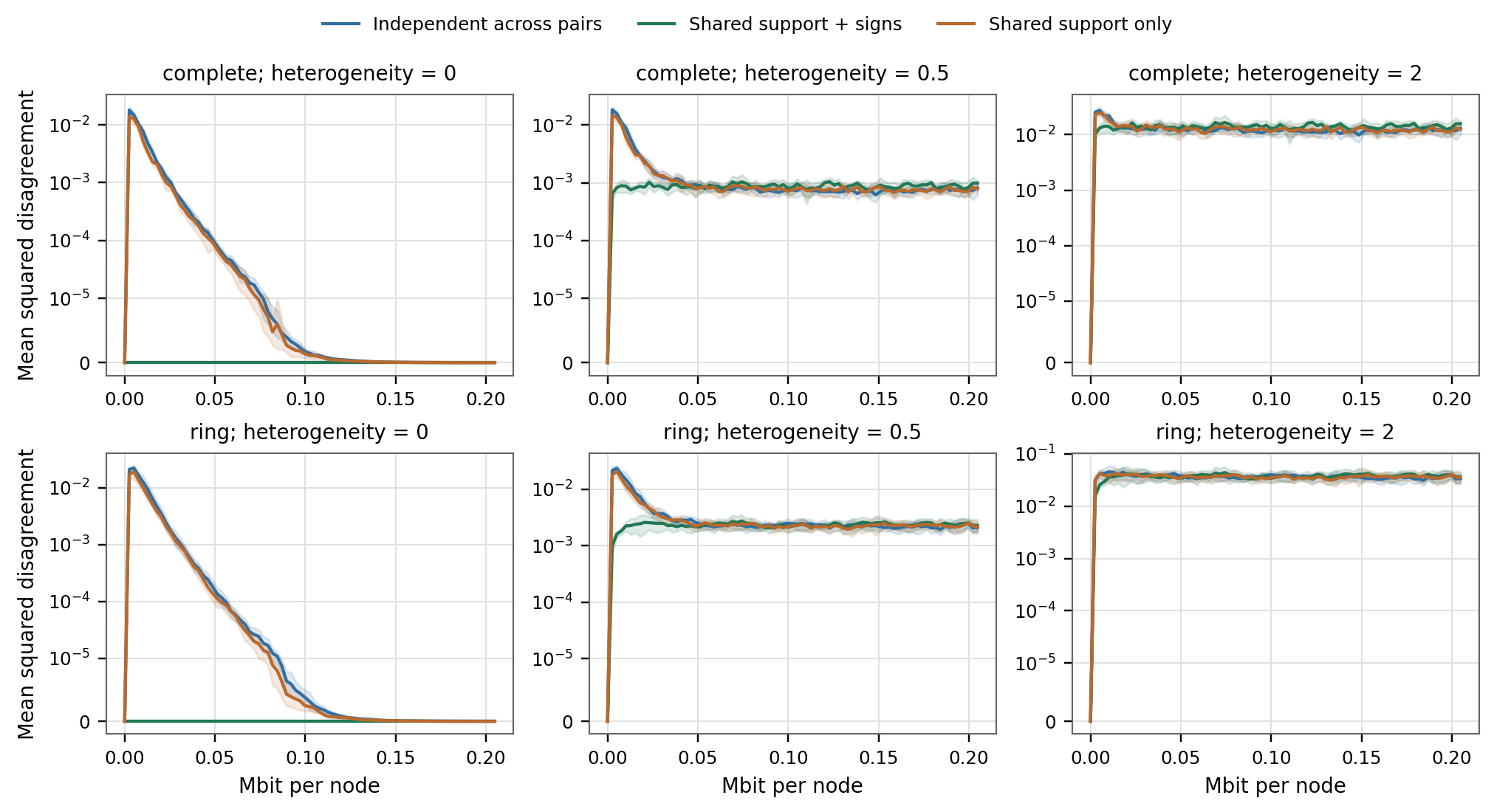}
 \caption{State disagreement for the same eight-seed runs.}
\end{figure}

\begin{figure}[!htbp]
 \centering\includegraphics[width=\linewidth]{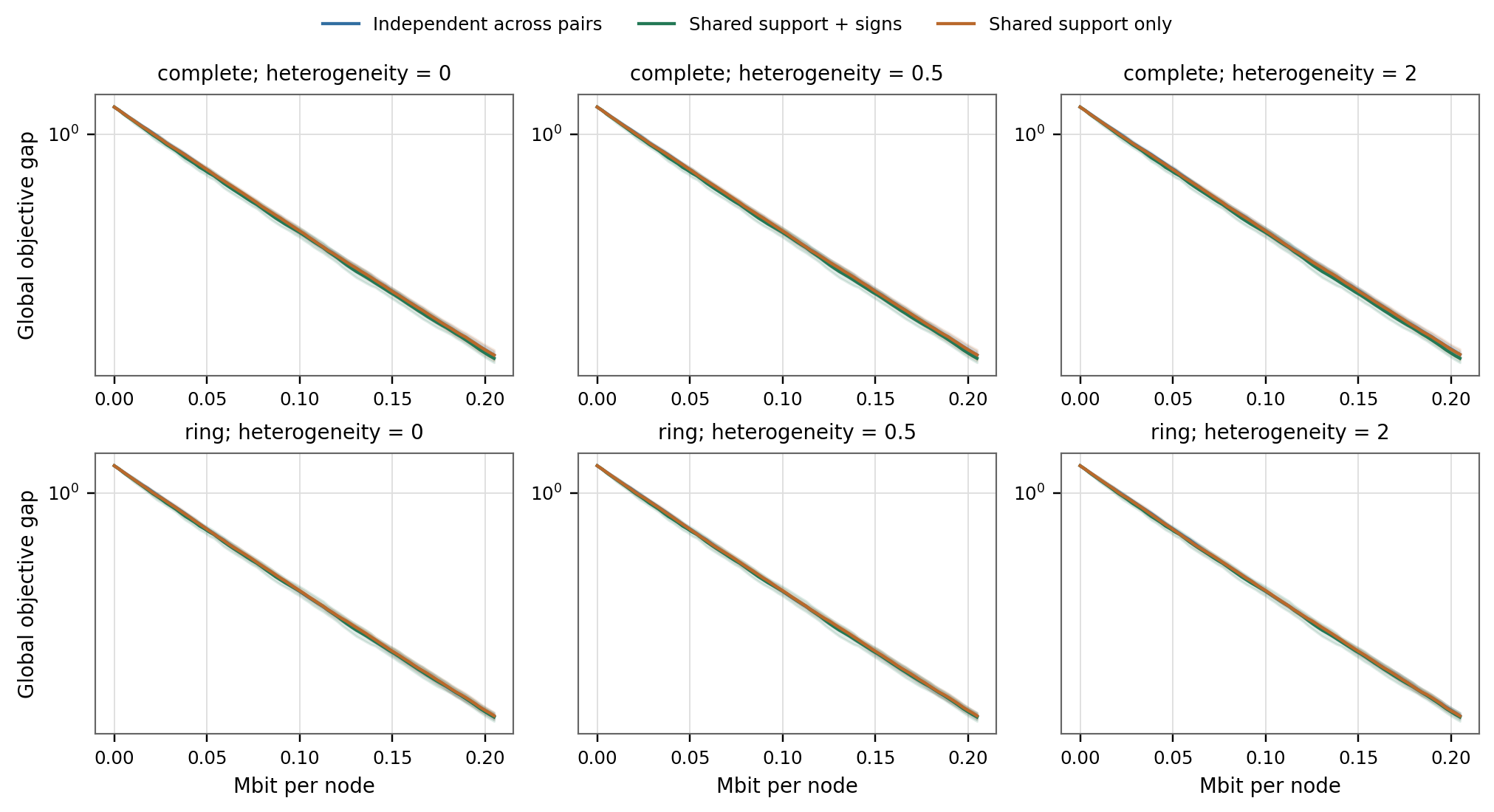}
 \caption{Objective gap at $\eta=10^{-4}$, which satisfies the sufficient step condition for all displayed settings.}
\end{figure}
\begin{figure}[!htbp]
 \centering\includegraphics[width=\linewidth]{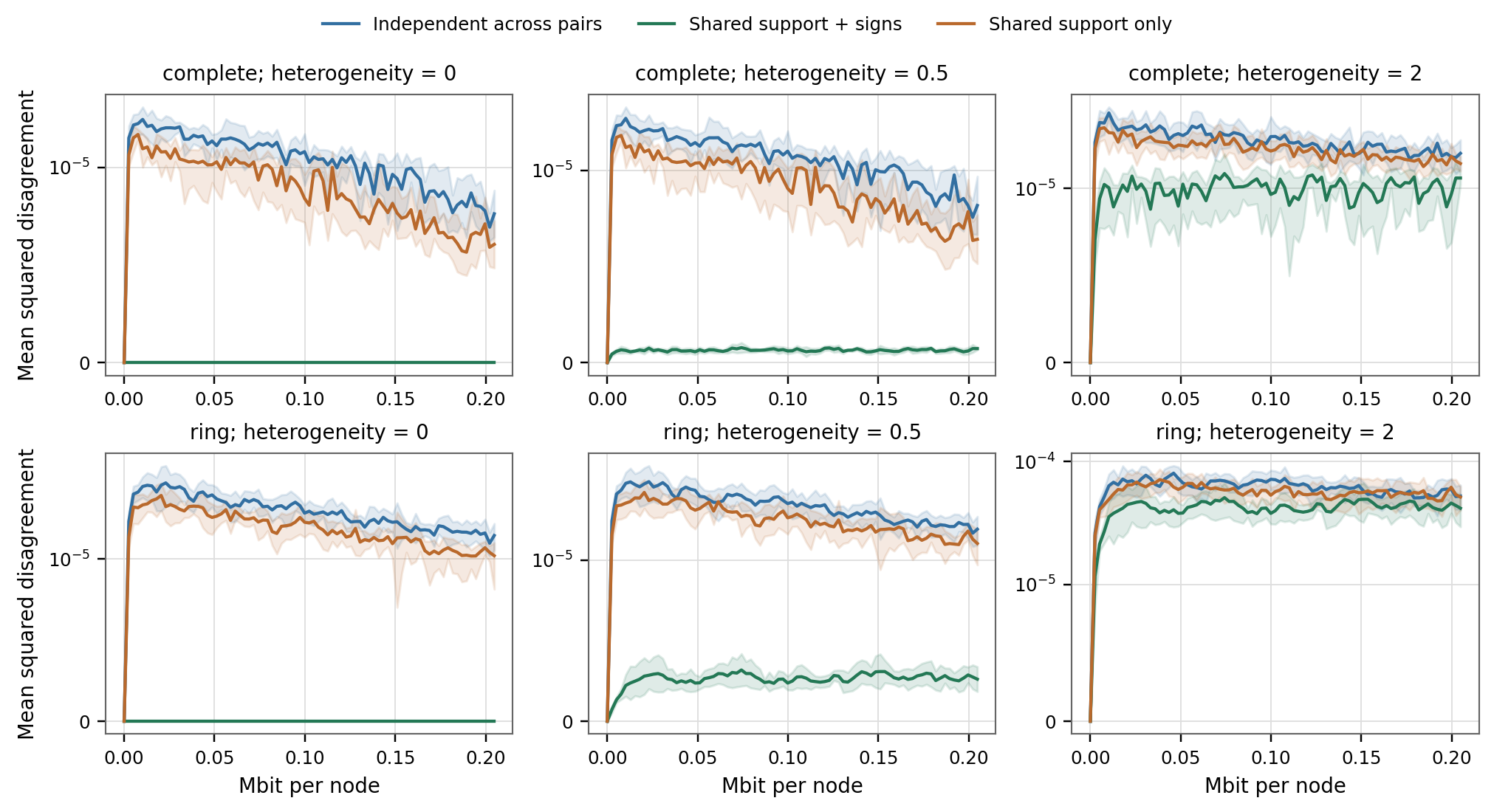}
 \caption{State disagreement at the certified step. Training plots use a symmetric-log scale, linear below $10^{-5}$; standard-deviation bands are truncated at zero for nonnegative metrics.}
\end{figure}

%% file: v28_correlation_extension.tex
\subsection{Curvature Heterogeneity}
\label{appendix:curvature_v23}

The common-Hessian example isolates exact cancellation. With different local curvatures, the uncancelled part can be identified rather than absorbed into an absolute gradient bound. Let
\[
 F_i(x)=\tfrac12x^\top H_ix+b_i^\top x,\qquad
 H=\frac1N\sum_iH_i,\quad b=\frac1N\sum_i b_i,\quad
 \delta_H^2=\frac1N\sum_i\|H_i-H\|_{\mathrm{op}}^2,
\]
where the Hessians are symmetric. Write $\mathcal D=N^{-1}\sum_i\|x_i-\bar x\|^2$.

\begin{proposition}\label{prop:curvature_v23}
For arbitrary node states,
\begin{equation}\label{eq:curvature_residual_v23}
 \bar a=\nabla F(\bar x)+r_X,\qquad
 r_X=\frac1N\sum_i(H_i-H)(x_i-\bar x),\qquad
 \|r_X\|^2\le\delta_H^2\mathcal D.
\end{equation}
Under G without momentum or oracle noise, the network-average step is
$\bar x^+=\bar x-\eta D(\nabla F(\bar x)+r_X)$ and
\begin{equation}\label{eq:curvature_direction_v23}
 \mathbb E\|Dr_X\|^2=d\|r_X\|^2\le d\delta_H^2\mathcal D.
\end{equation}
\end{proposition}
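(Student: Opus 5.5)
Here $\bar a=N^{-1}\sum_i\nabla F_i(x_i)$, as defined in \eqref{eq:corr_pair_gradients}. The plan is to compute $\bar a$ directly from the quadratic gradients. Since $\nabla F_i(x_i)=H_ix_i+b_i$, we have
\[
\bar a=\frac1N\sum_i H_ix_i+b .
\]
Write $x_i=\bar x+(x_i-\bar x)$. Then
\[
\frac1N\sum_iH_ix_i=H\bar x+\frac1N\sum_iH_i(x_i-\bar x),
\]
and because $\sum_i(x_i-\bar x)=0$ we may subtract $H\frac1N\sum_i(x_i-\bar x)=0$. This gives
\[
\frac1N\sum_iH_i(x_i-\bar x)=\frac1N\sum_i(H_i-H)(x_i-\bar x)=r_X .
\]
Since $\nabla F(\bar x)=H\bar x+b$, the identity $\bar a=\nabla F(\bar x)+r_X$ follows, pathwise and for arbitrary states.

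For the bound on $r_X$, use the triangle inequality, then the operator norm, then Cauchy--Schwarz over $i$:
\[
\|r_X\|\le\frac1N\sum_i\|H_i-H\|_{\mathrm{op}}\|x_i-\bar x\|
\le\Bigl(\frac1N\sum_i\|H_i-H\|_{\mathrm{op}}^2\Bigr)^{1/2}\Bigl(\frac1N\sum_i\|x_i-\bar x\|^2\Bigr)^{1/2}.
\]
Squaring gives $\|r_X\|^2\le\delta_H^2\mathcal D$.

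For the G step, central differences are exact on quadratics, so every node's estimate is $D\nabla F_i(x_i)$ with the common $D=(d/q)uu^\top$ (the matched pair shares $u_e$, and under G all pairs share it). The masked pair mixing preserves the average (Theorem \ref{thm:edge_local_contraction}). Hence
\[
\bar x^+=\bar x-\eta\,\frac1N\sum_iD\nabla F_i(x_i)=\bar x-\eta D\bar a ,
\]
and substituting the first identity yields the stated step.

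Finally, $r_X$ is determined by the states, which are fixed before the direction is drawn. Lemma \ref{lemma:sparse_direction_identities} (equivalently \eqref{eq:corr_D_moments}) then gives $\mathbb E\|Dr_X\|^2=d\|r_X\|^2$, and combining with the bound on $r_X$ gives $d\delta_H^2\mathcal D$. The only delicate point is this measurability, that $X$ is $\mathcal F_t$-measurable and independent of the current $D$. That holds by the filtration convention, so there is no real obstacle.
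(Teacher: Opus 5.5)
Your proof is correct and follows essentially the same route as the paper. You expand $H_ix_i$ about $\bar x$ and use $\sum_i(x_i-\bar x)=0$, bound $r_X$ by the triangle inequality, the operator norm, and Cauchy--Schwarz, and finish with the facts that the common direction commutes with averaging and that $\mathbb E D^2=dI$ for a direction drawn independently of the fixed states; your remarks on exact central differences for symmetric quadratics and on average preservation of the masked mixing only make explicit steps the paper leaves implicit.
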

\begin{proof}
Expand $H_ix_i$ around $\bar x$. Since $\sum_i(x_i-\bar x)=0$,
\[
 \frac1N\sum_i(H_ix_i+b_i)
 =H\bar x+b+\frac1N\sum_i(H_i-H)(x_i-\bar x).
\]
The triangle and Cauchy--Schwarz inequalities give
\[
 \|r_X\|
 \le\frac1N\sum_i\|H_i-H\|_{\mathrm{op}}\|x_i-\bar x\|
 \le\delta_H\sqrt{\mathcal D}.
\]
The common direction commutes with averaging, and $\mathbb E D^2=dI$ gives
\eqref{eq:curvature_direction_v23}.
\end{proof}

The averaging identity holds with different Hessians, with a residual controlled by state disagreement. Shared curvature or consensus gives $r_X=0$. This identity applies to arbitrary quadratic states; applying Theorem \ref{thm:corr_core} additionally requires its uniform gradient-heterogeneity assumption over $\mathbb R^d$.

\subsection{Oracle Noise and Stored Momentum}
\label{appendix:momentum_correlation_v23}

Condition on the node states, stored local momenta, and current matching before drawing directions and oracle samples. For $e=\{i,j\}$, retain $a_e$ from \eqref{eq:corr_pair_gradients} and put $c_e=(m_i+m_j)/2$, $\bar c=m^{-1}\sum_e c_e$, and $\theta=1-\beta$, with $0\le\beta<1$. Suppose local gradient errors $\epsilon_i$ are centered, mutually independent, and independent of the current directions under this conditioning. Define
\[
 \sigma_X^2=\frac1N\sum_i\mathbb E\|\epsilon_i\|^2,\qquad
 \epsilon_e=(\epsilon_i+\epsilon_j)/2,\qquad p=q/d.
\]
Neither equal node variances nor isotropic noise is needed. The ideal applied pair update and its network-average conditional mean are
\begin{equation}\label{eq:momentum_pair_v23}
 w_e=\beta P_ec_e+\theta D_e(a_e+\epsilon_e),\qquad
 \nu=\mathbb E\bar w=\beta p\bar c+\theta\bar a.
\end{equation}
This is the sparse-momentum rule \eqref{eq:optional_subspace_momentum} after mixing. Inactive memory entries are retained, not decayed.

For $a,c\in\mathbb R^d$, define the nonnegative quadratic form
\begin{equation}\label{eq:psi_v23}
 \Psi_\beta(a,c)=\beta^2p(1-p)\|c\|^2
 +\theta^2(d-1)\|a\|^2
 +2\beta\theta(1-p)\langle c,a\rangle.
\end{equation}
The cross term is essential: the stored direction and fresh query use the same support.

\begin{proposition}\label{prop:momentum_moments_v23}
With $V_s^{\beta,\sigma}=\mathbb E\|\bar w-\nu\|^2$, the three couplings satisfy
\begin{align}
 V_{\mathrm G}^{\beta,\sigma}
 &=\Psi_\beta(\bar a,\bar c)+\theta^2d\sigma_X^2/N,\label{eq:momentum_variance_G_v23}\\
 V_{\mathrm I}^{\beta,\sigma}
 &=\frac1{m^2}\sum_e\Psi_\beta(a_e,c_e)+\theta^2d\sigma_X^2/N,\label{eq:momentum_variance_I_v23}\\
 V_{\mathrm S}^{\beta,\sigma}
 &=p(1-p)\|\beta\bar c+\theta\bar a/p\|^2
 +\frac{\theta^2(d-1/p)}{m^2}\sum_e\|a_e\|^2
 +\theta^2d\sigma_X^2/N.\label{eq:momentum_variance_S_v23}
\end{align}
For $U_s^{\beta,\sigma}=m^{-1}\sum_e\mathbb E\|w_e-\bar w\|^2$,
\begin{equation}\label{eq:momentum_energy_v23}
 V_s^{\beta,\sigma}+U_s^{\beta,\sigma}
 =\frac1m\sum_e\!\left[\beta^2p\|c_e\|^2+\theta^2d\|a_e\|^2
       +2\beta\theta\langle c_e,a_e\rangle\right]
 +\frac{\theta^2d\sigma_X^2}{2}-\|\nu\|^2.
\end{equation}
\end{proposition}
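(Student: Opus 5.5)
The plan is to reduce everything to a handful of single-pair moment identities for the sparse Rademacher operators, and then assemble the three couplings using independence and the law of total variance. This mirrors the proof of Proposition \ref{prop:corr_moments}. Throughout, I condition on the states, the stored momenta, and the matching. I will use the pathwise facts $P_eD_e=D_eP_e=D_e$ and $D_e^2=dD_e$, and the expectations $\mathbb E P_e=pI$ and $\mathbb E D_e=I$. These give
\[
\mathbb E\|P_ec\|^2=p\|c\|^2,\qquad \mathbb E\|D_ea\|^2=d\|a\|^2,\qquad \mathbb E\langle P_ec,D_ea\rangle=\mathbb E\langle c,D_ea\rangle=\langle c,a\rangle .
\]
The last identity is where the shared support enters: the stored direction and the fresh query see the same mask, and this produces the cross term in $\Psi_\beta$. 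Subtracting $\|\beta pc+\theta a\|^2$ from the second moment of $\beta P_ec+\theta D_ea$ gives
\[
\operatorname{Var}(\beta P_ec+\theta D_ea)=\Psi_\beta(a,c),
\]
which is the basic single-pair variance. This also shows that $\Psi_\beta\ge 0$.

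Next I split off the oracle noise. Since $\epsilon_i$ is centered and independent of the directions, every cross term between $\theta D_e\epsilon_e$ and any direction-measurable quantity has zero mean. Cross terms between noises on different nodes also vanish by mutual independence. For G, the noise contribution is $\theta^2\mathbb E\|D\bar\epsilon\|^2=\theta^2 d\,\mathbb E\|\bar\epsilon\|^2=\theta^2d\sigma_X^2/N$. For S and I, it is $\theta^2m^{-2}\sum_e d\,\mathbb E\|\epsilon_e\|^2$. Each node appears in exactly one pair and $N=2m$, so this equals $\theta^2 d\cdot N\sigma_X^2/(4m^2)=\theta^2d\sigma_X^2/N$. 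Hence the noise term is the same for all couplings.

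For the signal parts, the three couplings are handled as follows.
\begin{itemize}
\item \textbf{G.} Averaging commutes with the common pair $(P,D)$, so $\bar w$ has signal $\beta P\bar c+\theta D\bar a$. The single-pair identity then gives $\Psi_\beta(\bar a,\bar c)$.
\item \textbf{I.} The centered pair updates are independent, so the variance of the average is $m^{-2}\sum_e\Psi_\beta(a_e,c_e)$.
\item \textbf{S.} This is the step requiring the most care. I condition on the common projector $P$.
  \begin{itemize}
  \item Given $P$, the conditional mean is $Pz$ with $z=\beta\bar c+\theta\bar a/p$. Its variance over $P$ is $p(1-p)\|z\|^2$.
  \item The conditional sign fluctuations $\theta(D_e-P/p)a_e$ are centered and independent across pairs given $P$. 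Each has conditional second moment $\theta^2\bigl(d^2/q-d^2/q^2\bigr)\|Pa_e\|^2$, which after averaging over $P$ becomes $\theta^2(d-1/p)\|a_e\|^2$.
  \item The momentum term $\beta Pc_e$ is $P$-measurable, so it contributes no sign fluctuation.
  \end{itemize}
  The law of total variance then yields \eqref{eq:momentum_variance_S_v23}.
\end{itemize}

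For the energy identity \eqref{eq:momentum_energy_v23}, I use the pathwise decomposition $m^{-1}\sum_e\|w_e-\bar w\|^2=m^{-1}\sum_e\|w_e\|^2-\|\bar w\|^2$. Taking expectations gives
\[
V_s^{\beta,\sigma}+U_s^{\beta,\sigma}=m^{-1}\sum_e\mathbb E\|w_e\|^2-\|\nu\|^2 .
\]
The right-hand side depends only on single-pair laws, so it is coupling-free. By the identities above, $\mathbb E\|w_e\|^2=\beta^2p\|c_e\|^2+\theta^2d\|a_e\|^2+2\beta\theta\langle c_e,a_e\rangle+\theta^2d\,\mathbb E\|\epsilon_e\|^2$. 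Finally, $m^{-1}\sum_e\mathbb E\|\epsilon_e\|^2=N\sigma_X^2/(4m)=\sigma_X^2/2$, which gives the stated noise constant.

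I expect the main obstacle to be bookkeeping rather than any conceptual difficulty. The two places that need care are the S conditional-variance split, getting the $(d-1/p)$ coefficient right, and the shared-mask cross term $\langle P_ec,D_ea\rangle$.
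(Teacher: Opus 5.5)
Your proposal is correct and follows essentially the same route as the paper. You use the single-pair moment identities from $P_eD_e=D_e$, $\mathbb E P_e=pI$, $\mathbb E D_e=I$ to obtain $\Psi_\beta$, assemble G and I directly, condition on the common projector and apply the law of total variance for S, drop the oracle cross terms, and finish the energy formula with the pathwise average/disagreement identity. The only differences are cosmetic: you get $\Psi_\beta$ as second moment minus squared mean rather than by expanding $\mathbb E\|\beta(P-pI)c+\theta(D-I)a\|^2$, and you write out the oracle bookkeeping ($\sum_e\mathbb E\|\epsilon_e\|^2=N\sigma_X^2/4$ with $N=2m$), which the paper leaves implicit.
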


\begin{proof}
Because $PD=D$, $\mathbb E P=pI$, and $\mathbb E D=I$, we have
\[
 \mathbb E(P-pI)^2=p(1-p)I,\qquad
 \mathbb E(D-I)^2=(d-1)I,\qquad
 \mathbb E[(P-pI)(D-I)]=(1-p)I.
\]
Expanding the squared norm of
$\beta(P-pI)c+\theta(D-I)a$ proves \eqref{eq:psi_v23} and its nonnegativity.
For G, both random maps are common, so the noiseless average is
$\beta P\bar c+\theta D\bar a$. For I, centered pair updates are independent;
the variance of their average is the sum of variances divided by $m^2$.
These observations give the noiseless parts of
\eqref{eq:momentum_variance_G_v23}--\eqref{eq:momentum_variance_I_v23}.

For S, condition first on the common $P$. The conditional mean of the noiseless
average is $P(\beta\bar c+\theta\bar a/p)$. Its variance is the first term of
\eqref{eq:momentum_variance_S_v23}. Conditional sign fluctuations are independent
across pairs. Their variance is
$\theta^2(d-1/p)m^{-2}\sum_e\|a_e\|^2$, by the same conditional calculation as in
Appendix \ref{appendix:correlation_moments}.

Oracle cross terms vanish when conditioned on all directions. In every coupling,
\[
 \mathbb E\left\|\frac{\theta}{N}\sum_i D_{e(i)}\epsilon_i\right\|^2
 =\frac{\theta^2}{N^2}\sum_i
     \mathbb E[\epsilon_i^\top D_{e(i)}^2\epsilon_i]
 =\frac{\theta^2d\sigma_X^2}{N}.
\]
The first equality uses independence across nodes even when their directions
coincide; the second uses independence from directions and $\mathbb E D_e^2=dI$.
The oracle component is also orthogonal in expectation to the noiseless update.

Finally,
\[
 \frac1m\sum_e\mathbb E\|w_e\|^2
 =\frac1m\sum_e[\beta^2p\|c_e\|^2+\theta^2d\|a_e\|^2
                   +2\beta\theta\langle c_e,a_e\rangle]
   +\frac{\theta^2d\sigma_X^2}{2}.
\]
Subtracting $\|\nu\|^2$ and using the pathwise average/disagreement identity
proves \eqref{eq:momentum_energy_v23}.
\end{proof}

In particular, at $\beta=0$, oracle noise adds $d\sigma_X^2/N$ to each average
variance and $d\sigma_X^2(1/2-1/N)$ to each injection term. Sharing directions
preserves cancellation of deterministic heterogeneity, not cancellation of
independent oracle errors.

The momentum comparison is equally explicit. If
$\mathcal H_\Psi=m^{-1}\sum_e\Psi_\beta(a_e-\bar a,c_e-\bar c)$, then
\begin{equation}\label{eq:momentum_crossover_v23}
 V_{\mathrm I}^{\beta,\sigma}-V_{\mathrm G}^{\beta,\sigma}
 =\frac{\mathcal H_\Psi-(m-1)\Psi_\beta(\bar a,\bar c)}{m}.
\end{equation}
The comparison conditions on the same states and stored memories; both evolve
differently along separate training trajectories.

Finite differences introduce a controlled perturbation. Let $\widetilde w_e$
be the actual applied pair update under Assumption \ref{assum:third}, with the
same oracle sample used in the two queries. Let
$\widetilde V_s=\mathbb E\|\overline{\widetilde w}
-\mathbb E\overline{\widetilde w}\|^2$. Then
\begin{equation}\label{eq:finite_difference_momentum_variance_v23}
 \|\mathbb E\overline{\widetilde w}-\nu\|\le\theta B_\mu,\qquad
 \left|\sqrt{\widetilde V_s}-\sqrt{V_s^{\beta,\sigma}}\right|
 \le\theta B_\mu.
\end{equation}
Indeed, Taylor's remainder bounds each local estimation error by $B_\mu$.
Pair and network averaging therefore bound the extra applied vector by
$\theta B_\mu$. Centering cannot increase its second moment, and the reverse
triangle inequality in $L^2$ gives the second inequality.
Stacking the actual applied pair vectors as rows of $\widetilde W$, pair constancy gives
$\mathcal A\widetilde W=\widetilde W$ also holds.

These formulas give the conditional moments for noisy sparse momentum with
finite differences. Theorem \ref{thm:momentum_convergence} supplies the
corresponding convergence guarantee.

\subsection{Extended Mechanism Experiments}
\label{appendix:extension_experiments_v23}

The follow-up study changes local curvature, oracle noise, and sparse momentum while preserving the matching and payload comparison. Its configuration and main-panel selection were fixed before training and archived with a timestamp and source hashes in \path{results/correlation_v23/protocol.json}. All 1536 runs are retained.

There are eight nodes and $d=32$ coordinates, with $q=4$, $\mu=10^{-3}$, and 1600 rounds. Write $h=\operatorname{geomspace}(1,4,d)$. For each seed, a Gaussian matrix is centered by coordinate and divided by its largest absolute entry to obtain $R$, with $N^{-1}\sum_iR_i=0$ and $|R_{ij}|\le1$. We set
\[
 H_i=\operatorname{diag}(h_j(1+\omega R_{ij}))_{j=1}^d,\qquad
 \omega\in\{0,0.6\}.
\]
Thus all Hessians are positive, their average is fixed, and $\omega=0.6$ introduces local curvature differences without changing the global objective. Gaussian linear terms are centered and scaled to $N^{-1}\sum_i\|b_i\|^2=\zeta^2$, with $\zeta\in\{0,2\}$. Here $\zeta$ describes the linear terms, not a uniform gradient-heterogeneity bound for unequal Hessians.

The oracle is $f_i(x;\epsilon_i)=F_i(x)+\epsilon_i^\top x$, with
$\epsilon_i\sim\mathcal N(0,\sigma^2 I_d/d)$. Samples are independent across nodes and rounds and reused within the two-point difference. They are paired across the compared couplings and parameters. The updates use the two function values, not analytic gradients; analytic quantities are recorded only for evaluation. Central differences are exact for these quadratic-plus-linear samples up to floating-point error.

For each coupling, we test complete-graph uniform perfect matchings and independently sampled ring phases, eight seeds, both curvature settings, both linear-heterogeneity settings, and base steps $\eta_0\in\{0.001,0.004\}$. Four update settings complete the grid: deterministic core $(\beta,\sigma,\eta)=(0,0,\eta_0)$; noisy core $(0,0.5,\eta_0)$; noisy sparse momentum $(0.9,0.5,\eta_0)$; and first-step-matched noisy momentum $(0.9,0.5,10\eta_0)$. The last choice makes $(1-\beta)\eta=\eta_0$, so its first update agrees with the noisy core for each realization when memory starts at zero. Subsequent updates need not agree.

Every run uses 3200 optimization queries and a nominal 204800 outgoing bits per node. Computation is float64; values are charged at 32 bits without quantizing or transmitting packets. All runs remained finite, with no clipping, early stopping, or seed replacement. The measured residual satisfied \eqref{eq:curvature_residual_v23} within numerical precision at every logged state.

For $\omega=0.6$, $\zeta=2$, and $\eta_0=0.004$, G remains below S and I at the same step under deterministic, noisy, and sparse-momentum updates (Figures \ref{fig:extension_complete_v23}--\ref{fig:extension_ring_v23}). On the complete graph, the deterministic final objective is $(2.96\pm1.13)\times10^{-5}$ for G and $(1.34\pm0.48)\times10^{-2}$ for I. On the ring the G residual is larger, $(1.43\pm0.49)\times10^{-4}$, consistent with the loss of topology-independent cancellation when curvatures differ.

The ordering also depends on step size. In the first-step-matched control at $\eta=0.04$, G has a much larger final objective than I on both graphs. At the smaller base step, its first-step-matched $\eta=0.01$ retains the advantage when $\zeta=2$. This dependence is compatible with \eqref{eq:momentum_crossover_v23}, which compares conditional moments at a common state and memory, rather than complete trajectories.

When $\zeta=0$, the large deterministic advantage disappears (Figures \ref{fig:extension_zero_complete_v23}--\ref{fig:extension_zero_ring_v23}), identifying heterogeneity as the source of the separation. Oracle noise raises the floor reached by G above its deterministic value.

The eight-seed means and sample standard deviations for every configuration are in \path{results/correlation_v23/final_summary.csv}, alongside raw trajectories, per-seed endpoints, and a validation audit. The archive contains objective and state-disagreement plots for all 16 combinations of graph, curvature, linear heterogeneity, and base step. The figures below include both momentum step-size choices.

\begin{figure}[!htbp]
 \centering\includegraphics[width=\linewidth]{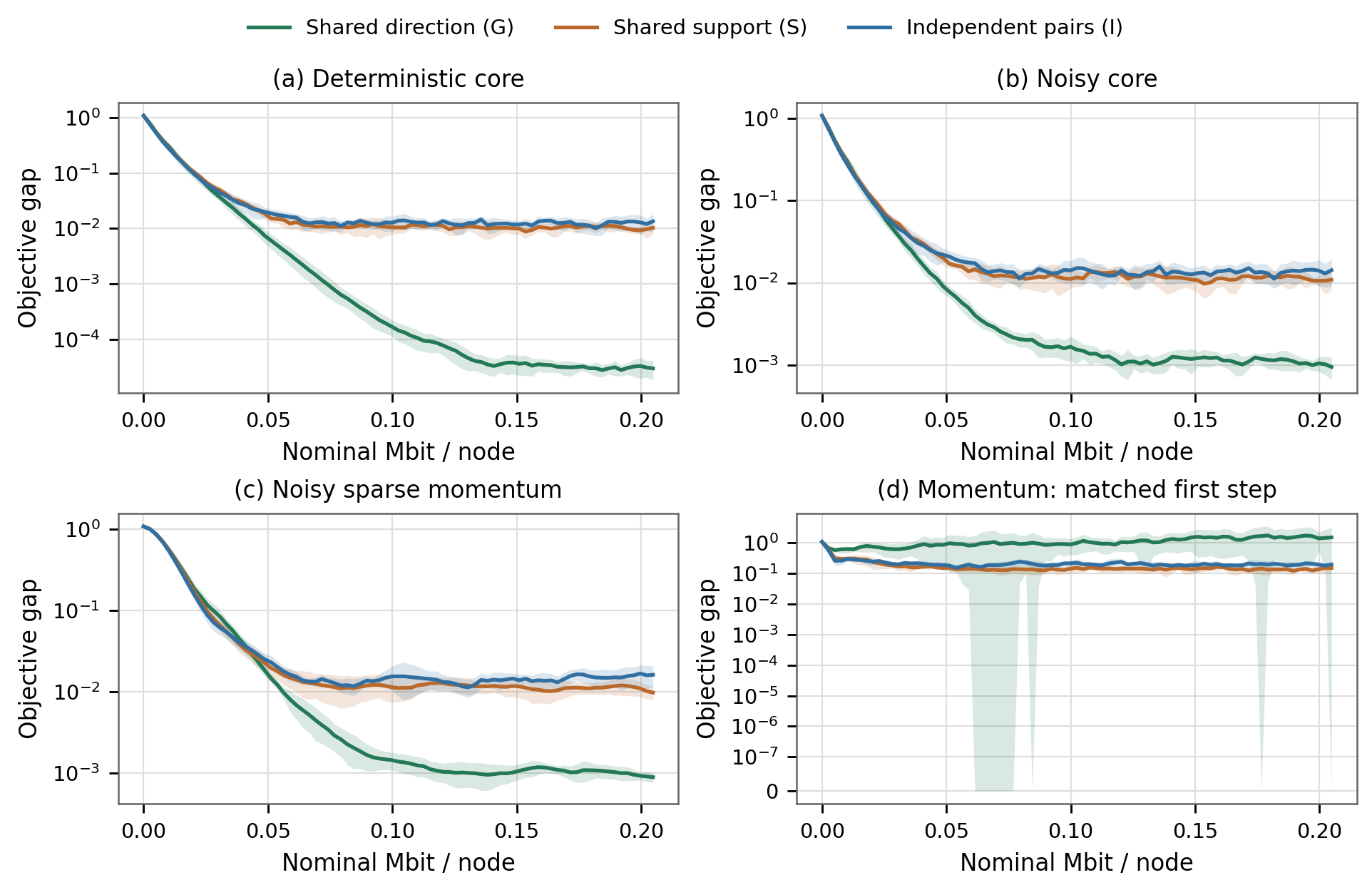}
 \caption{Unequal curvatures, $\zeta=2$, complete graph. All panels use eight seeds; (d) increases the momentum step from $0.004$ to $0.04$.}
 \label{fig:extension_complete_v23}
\end{figure}
\begin{figure}[!htbp]
 \centering\includegraphics[width=\linewidth]{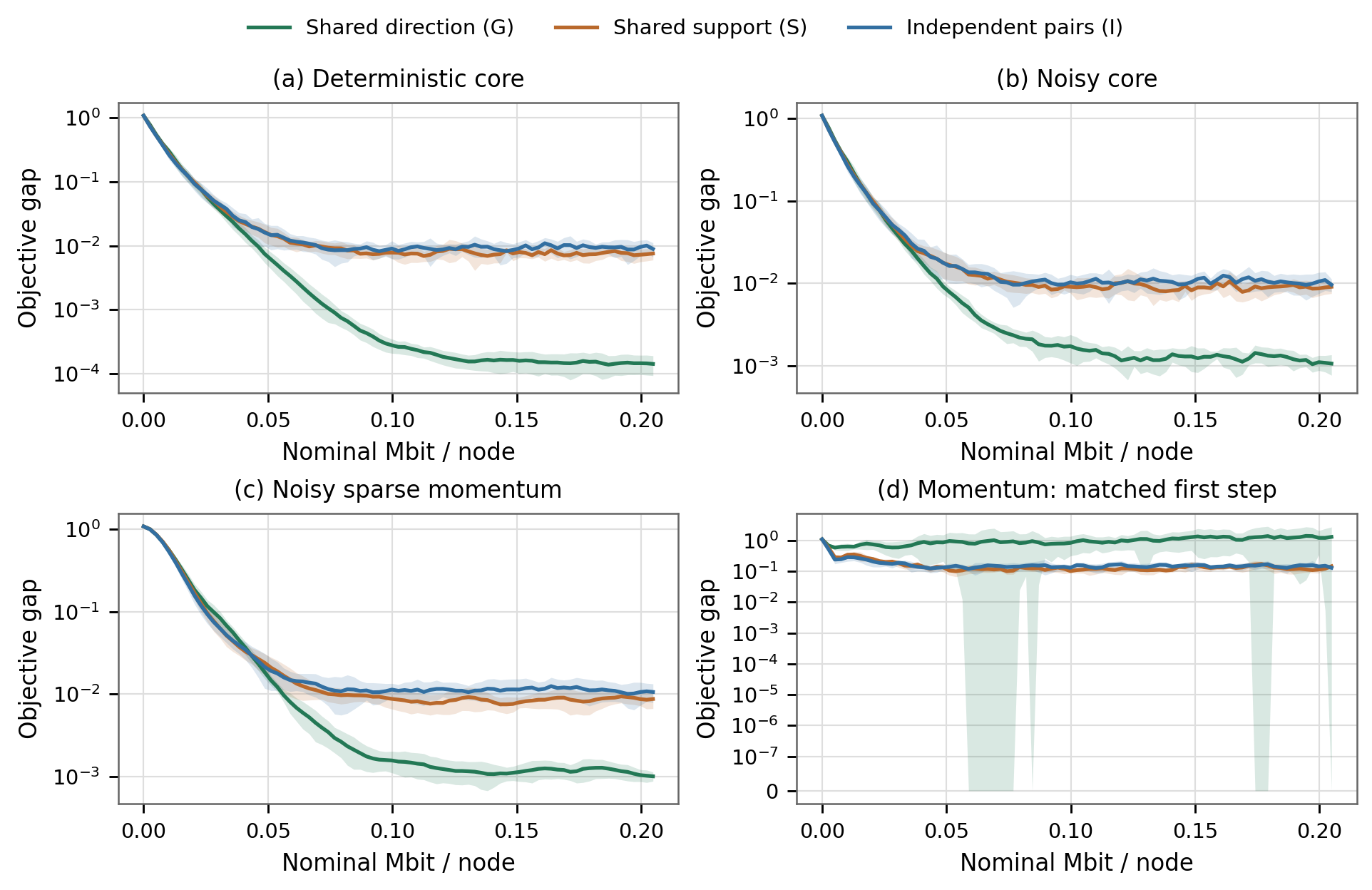}
 \caption{The same settings on iid ring matchings; (d) matches the first-update scale.}
 \label{fig:extension_ring_v23}
\end{figure}
\begin{figure}[!htbp]
 \centering\includegraphics[width=\linewidth]{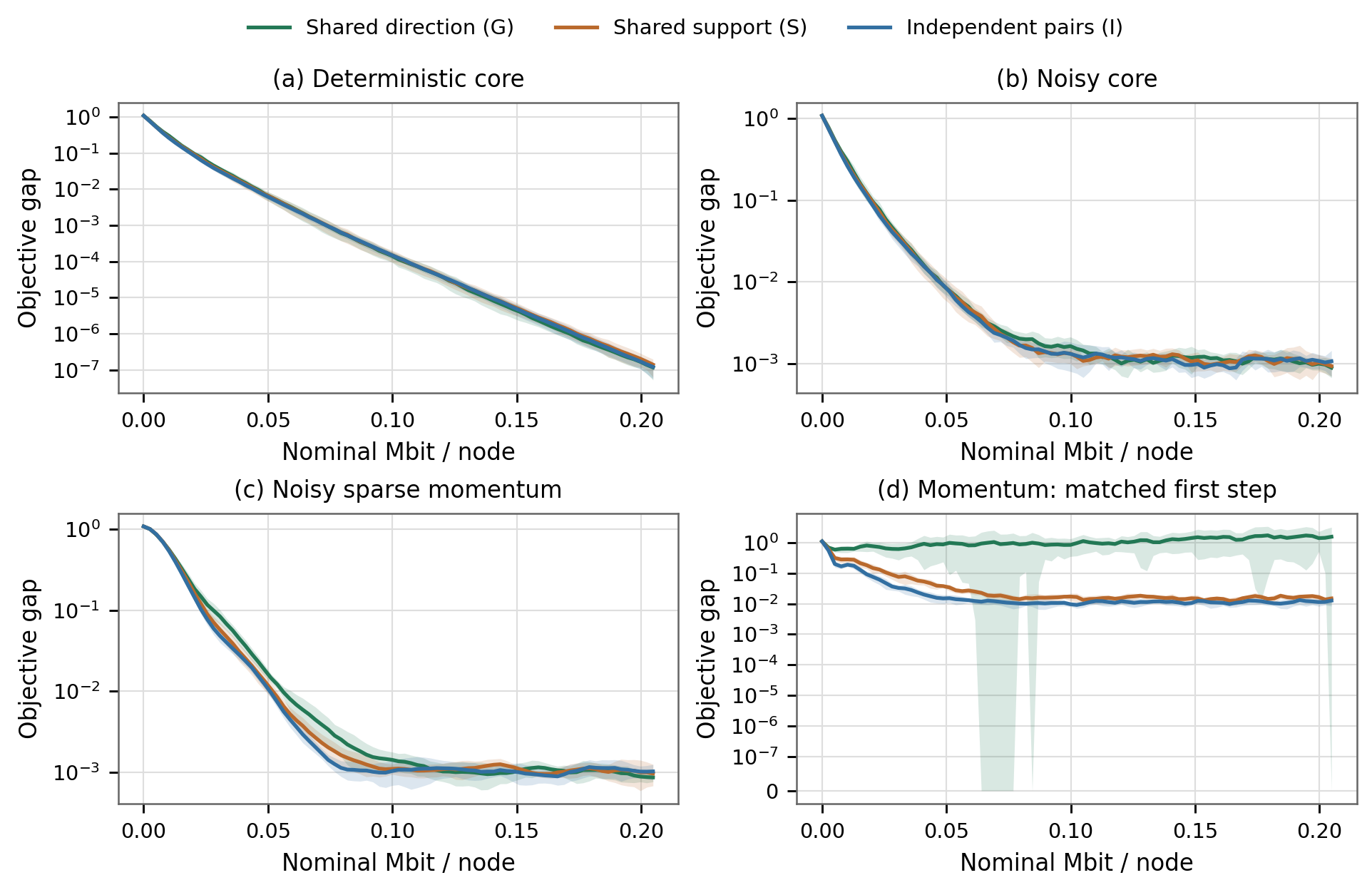}
 \caption{Unequal curvatures with zero linear heterogeneity on the complete graph. The pronounced heterogeneity-driven separation is absent.}
 \label{fig:extension_zero_complete_v23}
\end{figure}
\begin{figure}[!htbp]
 \centering\includegraphics[width=\linewidth]{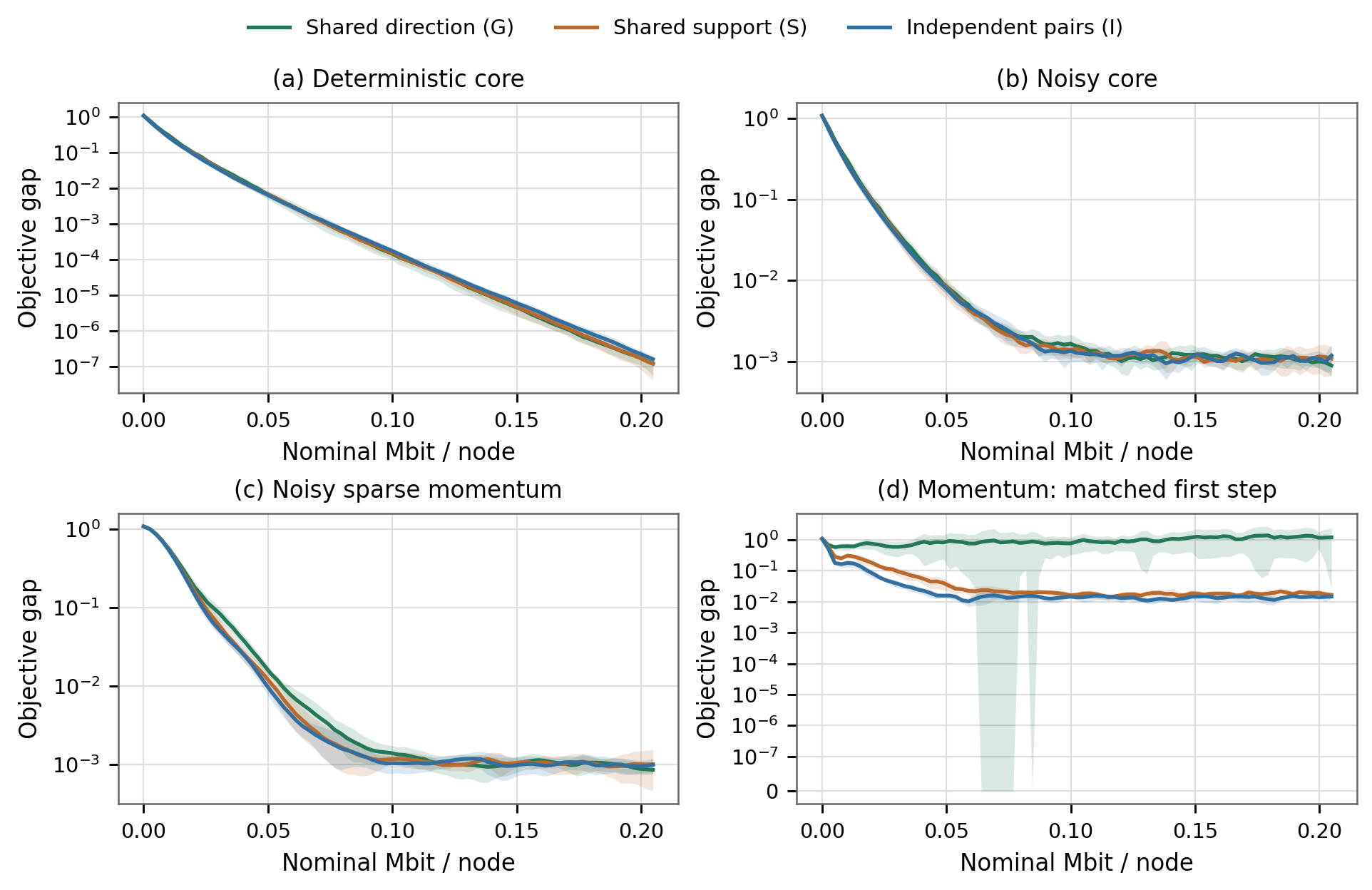}
 \caption{Zero linear heterogeneity on the ring, eight-seed mean $\pm$ std. Bands are truncated at zero for the nonnegative objective gap.}
 \label{fig:extension_zero_ring_v23}
\end{figure}

%% file: v28_qwen_correlation_experiment.tex
\paragraph{Fixed-matching controls.}
\label{appendix:qwen_long_correlation}
On the four-worker alternating ring, G shares support and signs across matched pairs, S shares support with independent edge signs, and I uses independent edge supports and signs. G/S retain identical support draws; S/I retain identical ordered edge-sign draws. All arms use the same five seeds, initialization, shards, $q=8192$, $\mu=10^{-3}$, and $\beta=0.9$. Each worker maintains float32 adapters on the frozen bfloat16 backbone. The terminal cost at $T=1540$ is $403.702$ Mbit and $3080$ optimization queries per worker, with exposure $qT/d=5$.

\paragraph{Scheduling at fixed query correlation.}
The secondary scheduling control compares matching G at round $100$ with all-neighbor global-support mixing at round $50$, both at $26.214$ Mbit per worker. They use the same query-direction generator and initialization. Label NLL falls from $0.6472\pm0.0231$ to $0.5902\pm0.0474$, a reduction of $0.0570$ (paired 95\% CI $[0.0172,0.1050]$). Accuracy is $64.58\pm5.40\%$ versus $60.18\pm1.86\%$ (paired $+4.40$ points, CI $[-0.01,9.42]$). At equal $100$ rounds, both intervals span zero. The same-budget NLL gain therefore persists with query correlation unchanged.

\paragraph{Long-horizon adaptation.}
All three couplings use $\eta=5\times10^{-5}$ for the first $100$ updates and $\eta=5\times10^{-6}$ thereafter, with momentum carried through the switch. The common schedule, five seeds, and terminal accuracy endpoint were fixed before training. We evaluate the full validation set at $100$, $308$, $912$, $924$, and $1540$ rounds and reuse the archived constant-step study as the paired reference. All $15$ new runs complete. Their round-$100$ accuracy, NLL, query statistics, disagreement, coverage, and payload reproduce the reference records exactly.

At $1540$ rounds, G, S, and I reach $74.07\pm2.58\%$, $77.82\pm2.80\%$, and $75.97\pm2.75\%$ accuracy, respectively. Their gains over round $100$ are $9.49$, $12.56$, and $12.38$ points, with paired intervals $[6.79,11.84]$, $[7.56,17.08]$, and $[7.87,16.89]$. Label NLL falls to $0.528\pm0.022$, $0.501\pm0.031$, and $0.528\pm0.028$. Mean disagreement RMS is $0.00826$, $0.00804$, and $0.00728$, while visited-coordinate coverage reaches $99.33$--$99.34\%$. Training thus remains effective after repeated coordinate activations, with lower disagreement than at round $100$.

The archived constant-step counterparts end at $50.12\pm0.69\%$, $49.72\pm0.23\%$, and $50.00\pm0.96\%$, with disagreement RMS $1.65$--$1.71$. The shared step reduction improves terminal accuracy by $23.95$, $28.10$, and $25.96$ points. Figure \ref{fig:qwen_matching_v27} shows all scheduled checkpoints of both studies.

\begin{figure}[htbp]
    \centering
    \includegraphics[width=0.98\linewidth]{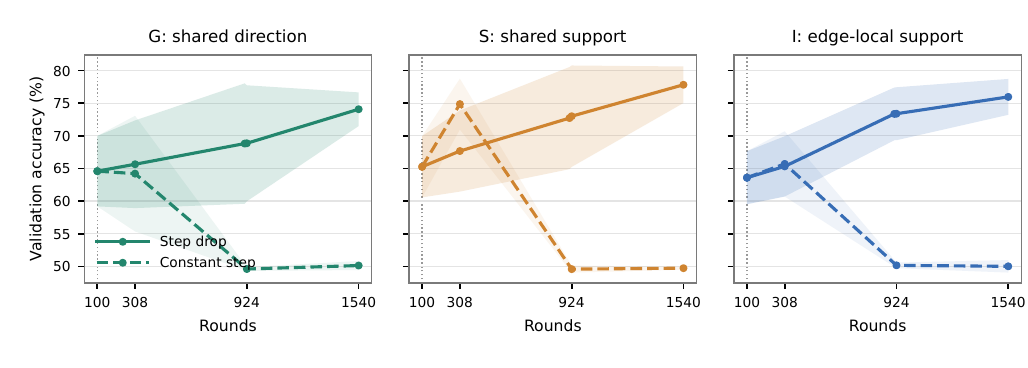}
    \caption{Long-horizon QNLI, five-seed mean $\pm$ std. Solid: tenfold step reduction after round $100$; dashed: constant step. Matching and random streams are paired.}
    \label{fig:qwen_matching_v27}
\end{figure}

\paragraph{Direction correlation at fixed support.}
With matching, support draws, query counts, and payload fixed, independent edge signs raise S above G by $3.75$ accuracy points (95\% paired CI $[0.24,6.79]$) and lower label NLL by $0.0272$ (CI $[0.0090,0.0485]$). The gain comes from cross-edge sign correlation under the same training schedule. Moving from S to I lowers disagreement RMS by $0.00077$ (CI $[0.00022,0.00131]$), while the accuracy difference is $-1.86$ points (CI $[-5.39,2.21]$). Thus the additional independent supports reduce disagreement without an established accuracy gain in this comparison. Intervals are descriptive paired estimates over five seeds, without multiple-comparison adjustment.

\paragraph{Long-horizon encoding comparison.}
At $403.702$ Mbit per worker, value-only messages afford $1540$ rounds, whereas $32$-bit values with $22$-bit coordinate indices afford $912$. Re-encoding the same edge-local trajectory gives $73.36\pm4.04\%$ accuracy at the indexed endpoint and $75.97\pm2.75\%$ at the value-only endpoint: $+2.61$ points (paired CI $[0.23,6.32]$). For G and S the corresponding differences are $+5.26$ (CI $[-0.05,10.57]$) and $+5.11$ points (CI $[-1.27,12.66]$). All three NLL intervals span zero. The optimizer trajectory is fixed; removing coordinate lists buys the additional rounds.